%% file: main.tex
\documentclass[table]{article}

\usepackage[preprint,nonatbib]{neurips_2024}
\usepackage[round,sort&compress]{natbib}
\PassOptionsToPackage{hyphens}{url}

\usepackage[utf8]{inputenc}
\usepackage[T1]{fontenc}

\usepackage{tikz}
\usepackage{pgfplots}
\pgfplotsset{compat=1.18}
\usepgfplotslibrary{groupplots}
\usetikzlibrary{patterns, arrows.meta, positioning, calc, decorations.pathreplacing, shapes.geometric, fit, backgrounds, shadows}
\usepackage{xcolor}

\newlength{\colstep}
\newlength{\rowgap}
\pgfplotsset{
  perfaxis/.style={
    width=4.3cm, height=4.2cm,
    xmin=-0.05, xmax=1.15, xticklabels={},
    ylabel style={font=\small, xshift=3pt},
    tick label style={font=\scriptsize},
    axis lines=left,
    axis line style={gray!60, line width=0.6pt},
    xtick align=outside, ytick align=outside,
    grid=major, grid style={solid, gray!15, very thin},
    ymin=-25, ymax=85,
    ytick={-20,0,20,40,60,80},
  },
  costaxis/.style={
    width=4.3cm, height=3.6cm,
    xmin=-0.05, xmax=1.15,
    xtick={0,0.5,1.0},
    ylabel style={font=\small, xshift=3pt},
    tick label style={font=\scriptsize},
    axis lines=left,
    axis line style={gray!60, line width=0.6pt},
    xtick align=outside, ytick align=outside,
    grid=major, grid style={solid, gray!15, very thin},
    ymin=-55, ymax=90,
    ytick={-50,0,50},
  },
}
\usepackage{graphicx}
\usepackage{float}
\usepackage{caption}
\usepackage{subcaption}

\usepackage{booktabs}
\usepackage{multirow}
\usepackage{makecell}
\usepackage{colortbl}
\usepackage{adjustbox}
\arrayrulecolor{black}
\usepackage{amsmath,amssymb,amsfonts}
\usepackage{mathtools}
\usepackage{amsthm}
\newtheorem{theorem}{Theorem}[section]

\newtheorem{corollary}[theorem]{Corollary}
\newtheorem{proposition}[theorem]{Proposition}

\theoremstyle{definition}

\theoremstyle{remark}

\usepackage[normalem]{ulem}
\usepackage{enumitem}
\usepackage{hyperref}
\hypersetup{hidelinks}
\usepackage{url}
\usepackage{nicefrac}
\usepackage{microtype}
\usepackage{pifont}
\usepackage{wrapfig}
\usepackage{listings}
\usepackage[most]{tcolorbox}
\usepackage{placeins}
\usepackage{ragged2e}
\usepackage{etoolbox}

\AtEndEnvironment{figure}{\vspace{-0.3cm}}
\AtEndEnvironment{figure*}{\vspace{-0.3cm}}
\AtEndEnvironment{table}{\vspace{-0.3cm}}
\AtEndEnvironment{table*}{\vspace{-0.3cm}}

\definecolor{sSimple}{HTML}{4C72B0}
\definecolor{sFull}  {HTML}{55A868}
\definecolor{sWo}    {HTML}{C44E52}
\definecolor{tCD}    {HTML}{5E9E89}
\definecolor{tCE}    {HTML}{67A9CF}
\definecolor{tT2S}   {HTML}{E69580}

\lstdefinestyle{promptstyle}{
  basicstyle=\ttfamily\scriptsize,
  breaklines=true,
  breakatwhitespace=false,
  frame=single,
  rulecolor=\color{gray!50},
  backgroundcolor=\color{gray!6},
  xleftmargin=6pt,
  xrightmargin=6pt,
  framesep=4pt,
  columns=fullflexible,
  keepspaces=true,
  showstringspaces=false,
}
\definecolor{promptblue}{HTML}{2E5AA8}
\tcbset{
  agentpromptbox/.style={
    colback=promptblue!4,
    colframe=promptblue!55,
    boxrule=0.5pt,
    arc=1.5mm,
    outer arc=1.5mm,
    left=2mm, right=2mm, top=1mm, bottom=1mm,
    fonttitle=\bfseries\small,
    before skip=5pt, after skip=5pt,
  }
}
\lstdefinestyle{agentprompt}{
  basicstyle=\ttfamily\fontsize{7.3pt}{8.1pt}\selectfont,
  frame=none,
  breaklines=true,
  breakatwhitespace=true,
  breakindent=0pt,
  columns=fullflexible,
  keepspaces=true,
  numbers=none,
  xleftmargin=2pt,
  showstringspaces=false,
}

\definecolor{cblue}{HTML}{4E79A7}
\definecolor{corange}{HTML}{F28E2B}
\definecolor{cgreen}{HTML}{59A14F}
\definecolor{cred}{HTML}{E15759}
\definecolor{cpurple}{HTML}{7B68EE}
\definecolor{cgold}{HTML}{FFC107}
\definecolor{cgray}{HTML}{A5A5A5}
\definecolor{lightblue}{HTML}{D6E4F0}
\definecolor{lightorange}{HTML}{FBE5D6}
\definecolor{lightgreen}{HTML}{E2EFDA}
\definecolor{lightred}{HTML}{FCE4EC}
\definecolor{sectionblue}{HTML}{1F4E79}
\definecolor{cteal}{HTML}{009688}
\definecolor{cbrown}{HTML}{D4813A}
\definecolor{cEX}{HTML}{E69580}
\definecolor{cNSF}{HTML}{CE6D5E}
\definecolor{cSRR}{HTML}{8F5C4E}
\definecolor{cMRE}{HTML}{67A9CF}
\definecolor{cMSA}{HTML}{1C6488}
\definecolor{cF1}{HTML}{5E9E89}
\definecolor{cAUC}{HTML}{9281B7}
\definecolor{cdarkgray}{HTML}{404040}

\title{From Multi-Agent to Single-Agent: When Is \\ Skill Distillation Beneficial?}

\author{
  Binyan Xu$^{1,2}$\thanks{This work was done during an internship at Tencent.},
  Dong Fang$^{2}$\thanks{Corresponding authors.},
  Haitao Li$^{2}$\footnotemark[2],
  and Kehuan Zhang$^{1}$ \\[4pt]
  $^{1}$The Chinese University of Hong Kong, Hong Kong, China \\
  $^{2}$LIGHTSPEED, Shenzhen, China \\[2pt]
  \texttt{\{binyxu,khzhang\}@ie.cuhk.edu.hk, df572@outlook.com, 729156675@qq.com}
}

\graphicspath{{./}{./figures/}}

\begin{document}
\maketitle

\input{sections/00_abstract}
\input{sections/01_introduction}
\input{sections/02_related_work}
\input{sections/03_method_revised}
\input{sections/04_experiments}
\input{sections/06_limitations}
\input{sections/07_conclusion}

\bibliographystyle{plainnat}
\bibliography{references}

\clearpage
\raggedbottom
\input{sections/appendix}

\end{document}

%% file: sections/00_abstract.tex
\begin{abstract}
Multi-agent systems (MAS) for structured data-science tasks externalize analytical control through workflows spanning stages, tools, shared state, verification, and repair. Distilling such workflows into a single-agent skill can reduce orchestration overhead, but it remains unclear which workflow components should cross the control boundary. We distinguish capability resources, which expand what an agent can do, from pipeline guidance, which constrains which solutions it explores. On the same causal-estimation instances, adding task-qualified source-pipeline guidance to a capability-matched skill changes normalized utility by $+19.6$ points under method-selection accuracy but $-10.3$ points under numerical error. To explain this reversal, we introduce \emph{Behavior-Outcome Freedom} ($F$), a pre-synthesis diagnostic of signed behavior--outcome rank mismatch, and formalize its candidate-conditional role through Signed Anchor-Rank Transfer. Motivated by this mechanism, we propose \textbf{AdaSkill}, which preserves validated capability resources, removes runtime orchestration, and conditionally inherits pipeline guidance using a calibrated rule over $F$. Across 16 capability-matched interventions, the native-scale Full-minus-Discard effect decreases across the continuous $F$ scale ($r=-0.80$, $p<0.001$), while a 15-treatment atomic sweep localizes the reversal to pipeline guidance. Across 11 datasets spanning four structured data-science task families, AdaSkill combines strong task performance with substantially lower deployment overhead.
\end{abstract}

%% file: sections/01_introduction.tex
\section{Introduction}

Multi-agent systems (MAS) for structured data-science tasks increasingly rely on explicit workflows to organize analytical control across roles, stages, shared state, tool calls, verification, and repair~\citep{cao2026apex,verma2025cais,shen2024matmcd,ouyang2025fela}. Such workflows can achieve strong task performance, but they externalize control into a runtime orchestration layer that must be repeatedly executed at deployment time. Skill distillation offers a natural alternative: transfer reusable workflow competence into a single tool-using agent, thereby reducing multi-agent coordination overhead while preserving the analytical competence encoded by the source system.

Existing approaches pursue two complementary forms of agent distillation. MAS-distillation methods compress interaction graphs, debate dynamics, or execution trajectories into a single student model~\citep{chen2024magdi,aluru2025smagdi,luo2026agentark}, while inference-time methods synthesize, verify, attribute, or refine reusable skills from source workflows and execution experience~\citep{qiu2025agentdistill,alzubi2026evoskill,ma2026skillgen,yuan2026clawtrace}. These methods establish how transferred agent logic can be internalized, constructed, evaluated, or optimized. What remains unresolved is whether functionally different source components should cross the control boundary under the same inheritance rule.

\input{figures/fig_fig_solution-landscape-realistic.tex}
However, a MAS workflow does not transfer a functionally homogeneous object. Callable tools and declarative knowledge are \emph{capability resources} that expand what a target agent can access, whereas ordering rules, decision gates, and other \emph{pipeline guidance} impose source-derived control over which reachable solutions the agent explores. As illustrated in Figure~\ref{fig:solution-landscape}, when a task-qualified pipeline contracts behavior toward successful anchors, positively ordered behavior--outcome geometry allows that contraction to carry outcome-rank information; as the ordering weakens or reverses, the same source control loses reliable support and can suppress valuable alternatives. Thus, even a published, high-performing workflow need not provide beneficial structural control after distillation. This leads to our central question: \emph{when does transferring workflow structure from a multi-agent system benefit a single-agent skill?}

Our central insight is that the value of structural transfer is governed jointly by behavior--outcome geometry and by the strength and direction of the transferred intervention. We therefore distinguish \emph{capability transfer}, which preserves tools and declarative knowledge, from \emph{structural transfer}, which preserves source-derived pipeline guidance. To characterize the upstream geometry before skill synthesis, we introduce \textbf{Behavior-Outcome Freedom} ($F$), the signed rank mismatch between pairwise behavioral distance and score distance. Low $F$ indicates positively ordered geometry in which behavioral contraction can faithfully carry outcome-rank information; as $F$ approaches one, monotone organization weakens, and above one the geometry becomes anti-concordant. Importantly, $F$ is an effect modifier rather than a stand-alone estimator of candidate lift: a task-qualified source workflow supplies a grounded structural proposal, while the realized strength and direction of that proposal remain candidate-specific. Our Signed Anchor-Rank Transfer analysis formalizes this candidate-conditional relationship.

Motivated by this observation, we propose \textbf{AdaSkill}, an adaptive workflow-to-skill distillation policy that preserves validated capability resources, removes runtime orchestration, and makes pipeline guidance the sole adaptive object. AdaSkill decomposes a task-qualified source workflow into callable tools, declarative knowledge, pipeline guidance, and runtime orchestration; retains validated tools and knowledge; removes orchestration mechanisms that are unnecessary for single-agent execution; and applies a development-calibrated rule, frozen before target evaluation, that uses $F$ to determine whether source-pipeline guidance should be inherited. We study this mechanism through scorer-controlled interventions, atomic Pipeline-only treatments, capability-matched Full-versus-Discard comparisons, and workflow-excluded construction across 11 datasets spanning text-to-SQL, causal estimation, causal discovery, and feature engineering. The resulting skills combine strong controlled performance with substantially lower deployment overhead than their orchestrated source workflows. Our main contributions are summarized as follows:
\begin{itemize}[leftmargin=1.5em, itemsep=2pt]
  \item \textbf{Functional Decomposition of Workflow Transfer:} We distinguish capability transfer from structural transfer and show why tools and declarative knowledge should not be governed by the same inheritance rule as source-pipeline control. This decomposition turns workflow distillation from monolithic compression into a selective transfer problem.
  \item \textbf{Behavior--Outcome Mechanism and Diagnostic:} We introduce Behavior-Outcome Freedom ($F$) as a pre-synthesis diagnostic of signed behavior--outcome geometry and provide a candidate-conditional theoretical account through Signed Anchor-Rank Transfer and the Structural-Lift Decomposition. Scorer reversal, 15 atomic Pipeline-only treatments, and 16 capability-matched Full-versus-Discard interventions show structural value declining with $F$.
  \item \textbf{Adaptive Single-Agent Skill Distillation:} We develop AdaSkill as a constructive policy that preserves validated capability resources, removes runtime orchestration, and conditionally inherits source-pipeline guidance using a calibrated rule frozen before target evaluation. Across 11 datasets spanning program construction, numerical estimation, graph recovery, and predictive modeling, AdaSkill achieves strong performance with lower deployment cost and latency.
\end{itemize}


%% file: figures/fig_fig_solution-landscape-realistic.tex
\begin{wrapfigure}{r}{0.51\textwidth}
  \vspace{0.1cm}
  \centering
  \includegraphics[width=\linewidth]{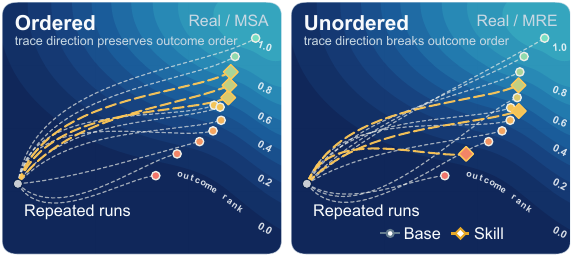}
  \caption{\textbf{Behavior--outcome geometry governs structural lift.}
  Repeated runs share an input. Trajectory direction schematically encodes behavior; endpoints encode outcome rank, inducing the pairwise distance ranks compared by $F$.
  \textbf{Left:} behavior- and outcome-distance ranks align, so skill guidance concentrates search among high-rank behaviors.
  \textbf{Right:} the ordering breaks, so rigid guidance suppresses valid paths, favoring flexibility.}
  \label{fig:solution-landscape}
  \vspace{0.3\baselineskip}
\end{wrapfigure}

%% file: sections/02_related_work.tex
\section{Background and Related Work}

\textbf{Orchestrated Agentic Workflows.} Tool-augmented agents~\citep{schick2024toolformer,yao2023react} increasingly solve complex tasks through explicit orchestration. AutoGen~\citep{wu2023autogen} externalizes control through multi-agent conversation, while MetaGPT~\citep{hong2024metagpt} encodes role-specialized software workflows. In data science, DS-Agent uses case-based control~\citep{guo2024dsagent}; APEX-SQL~\citep{cao2026apex} implements staged planning, schema pruning, profiling, synthesis, and verification for text-to-SQL; and CAIS~\citep{verma2025cais} organizes estimation decisions grounded in causal inference~\citep{imbens2015causal}. MATMCD~\citep{shen2024matmcd} combines agent modules with causal graph search~\citep{spirtes2000causation,zheng2018dags}, while FELA~\citep{ouyang2025fela} uses a role-specialized evolutionary workflow. These systems differ in agent count and autonomy, but share a control topology built from ordered stages, shared artifacts, tool interfaces, and feedback loops. Our object of study is this explicit workflow control. Agent count is an implementation detail of the source system.

\textbf{From External Orchestration to Internalized Control.} Single-agent multi-turn execution and skill libraries can reproduce behaviors commonly assigned to multi-agent systems at lower inference cost~\citep{li2026single,xu2026rethinking}. Moving control into one agent also concentrates context management, an increasingly explicit object of agent design~\citep{xu2026contextual,xu2026llm}. AgentArk~\citep{luo2026agentark}, SMAGDI~\citep{aluru2025smagdi}, Latent Agents~\citep{yi2026latent}, and MAPD~\citep{liu2026mapd} compress interactions or protocols into model parameters or a student policy, extending classical knowledge distillation~\citep{hinton2015distilling}. These approaches primarily ask how faithfully a source policy can be compressed. AdaSkill asks which parts should cross the control boundary. The distinction between capability resources and structural constraints makes transfer selectivity, not compression fidelity, the central variable.

\textbf{Skill Discovery and Evolution.} Recent systems optimize skills from execution feedback. EvoSkill uses iterative failure analysis~\citep{alzubi2026evoskill}, while Voyager develops reusable capabilities through open-ended embodied exploration~\citep{wang2023voyager}. Trace2Skill consolidates agent experiences into declarative skills~\citep{ni2026trace2skill}, and ASDA adapts distilled skills for financial reasoning~\citep{yim2026asda}. AutoSkill, SkillRL, and Memento-Skills evolve skill libraries through experience, recursive learning, or agent-designed control~\citep{yang2026autoskill,xia2026skillrl,zhou2026memento}. SkillGen contrasts successful and failed trajectories to synthesize an auditable skill and verifies its net effect before deployment~\citep{ma2026skillgen}. ClawTrace attributes step-level cost and applies preserve, prune, and repair operations~\citep{yuan2026clawtrace}. SkillOpt treats the skill document as optimizable external state~\citep{yang2026skillopt}, and SKILL-DISCO compiles shared successful-trace structure into executable procedural subgraphs~\citep{guo2026skilldisco}. These methods optimize after candidate skills or trajectories exist. We identify the transfer distinction that precedes optimization: capability components make resources available, while source-pipeline clauses explicitly direct behavior among reachable solutions. $F$ measures whether the baseline behavior--outcome geometry supports that structural control. In multi-agent RL, \citet{zhou2025double} compress agent interactions into a shared policy network, and \citet{wang2026skill} use skill-conditioned self-distillation for multi-turn agents.

\textbf{Adjacent Transfer, Auditing, and Evaluation.} Transfer is also studied under adversarial objectives, including universal cross-task attacks~\citep{xu2025one} and optimized generative triggers~\citep{xu2026breaking}. Security systems use internal or external evidence to audit model behavior and poisoned data~\citep{xu2026internal,xu2025clip}. Agent deployment adds a trace-economic layer to the value of automation~\citep{xu2026agent}. These literatures sharpen the boundary here: AdaSkill studies constructive workflow control and evaluates it through matched task outcomes. Raw review scores are not commensurate across research areas~\citep{xu2026reviewer}; the evidence in this paper is therefore tied to explicit interventions and objective metrics.


%% file: sections/03_method_revised.tex
\section{Methodology}
\label{sec:methodology}

\subsection{Workflow Distillation Contains Two Transfer Functions}
\label{sec:methodology-overview}
Workflow-to-skill distillation relocates control from an explicit runtime system to an inference-time skill. The source workflow encodes stages, role-conditioned calls, shared state, validation, and feedback loops. The target agent internalizes selected parts of this policy. We distinguish components by function. \emph{Capability components}, including callable tools and declarative domain knowledge, make actions and information available to the agent. \emph{Structural components}, including pipeline ordering, decision gates, and task-decomposition instructions, explicitly prioritize or prohibit solution paths. Capability text can also change behavior probabilities in a contextual model; our distinction concerns the control function being transferred. Runtime orchestration coordinates modules or agents in the source system and disappears after conversion. The target of analysis is the placement and effect of workflow control, not the number of agents.

We call a source workflow \emph{task-qualified} when it is a published, high-performing system designed and evaluated for the same analytical objective, its pipeline clauses map to decisions scored by the target metric, and its tools pass target-interface checks. Objective identity comes from the benchmark specification, clause--metric mappings come from the source paper and code, and interface validity is executable. Qualification defines a task-grounded candidate class; transfer sign remains the intervention estimand.

Structural value has three factors: outcome relief across reachable behaviors, the strength of the induced shift, and alignment with high-value regions. A task-qualified source workflow supplies a structural proposal; behavior-outcome geometry determines whether contraction can carry ordered outcome information. If behaviorally distant runs also differ reliably in score, pipeline choice is consequential. Weak coupling leaves little ordered outcome structure to exploit, and anti-concordance reverses the implication of contraction. We use \emph{search space} for the observable support of outputs and response-free traces induced by the agent and harness. \textbf{Behavior-Outcome Freedom} ($F$) probes this signed rank geometry before skill synthesis. AdaSkill turns the mechanism into an executable design: qualification proposes the candidate, $F$ orders geometric support for its transfer, and development interventions calibrate one operating boundary. Figure~\ref{fig:overview} summarizes the probe, converter, and deployment path.
\input{figures/fig_fig_overview.tex}

\subsection{Behavior-Outcome Freedom}
\label{sec:metric-freedom}
Freedom $F$ measures the signed rank fidelity between behavioral distance and outcome distance. In causal estimation, Method Selection Accuracy (MSA) separates runs by the selected causal method, while Mean Relative Error (MRE) rewards numerical proximity to the ground truth. The operational output representation uses estimator class. A richer audit represents CE behavior hierarchically: different estimator classes have distance one, while executions within a class are separated by configuration distance over covariates, bandwidths, kernels, estimands, estimator calls, and tool paths. It preserves the decision scored by MSA and resolves implementation choices that matter to MRE. $F$ captures this distinction on the support explored by a fixed evaluation protocol.

\textbf{Population estimand.} Fix an evaluation unit $q$ (a question, or a dataset-level unit when runs are compared globally), and let $Z,Z'\overset{\mathrm{iid}}{\sim}\mathbb{P}_{0,q}=\mathbb{P}_{A,H\mid q}$ be two runs induced by a fixed base agent $A$ and harness/prompting protocol $H$. Let $\phi(Z)\in\mathcal{X}$ be a behavioral representation, let $d$ be a symmetric behavioral dissimilarity, and let $s(Z)\in[0,1]$ be the metric score. For
$U=d(\phi(Z),\phi(Z'))$ and $V=|s(Z)-s(Z')|$, define
\begin{equation}
  F_q(s;d,\phi,\mathbb{P}_{0,q}) \;\triangleq\; 1-\rho_S(U,V),
  \label{eq:population-freedom}
\end{equation}
provided both rank variables have nonzero variance. With ties, we define population Spearman correlation as
$\rho_S(U,V)=\operatorname{Corr}(R_U(U),R_V(V))$, where
$R_U(u)=\Pr(U<u)+\tfrac12\Pr(U=u)$ and analogously for $R_V$; this is the population counterpart of average ranks. For per-question tasks, the canonical dataset--metric statistic is the arithmetic mean of $F_q$ over identifiable questions $q\sim Q$; for dataset-level tasks, $q$ denotes the pooled dataset-level unit. RQ2 retains these dataset--metric coordinates to test structural effect modification. The converter operates one level above them: each metric module receives a pre-specified aggregate over its diagnostic datasets, or the designated conversion benchmark when the source workflow defines one. Thus the 16 intervention coordinates and the seven routing inputs share the same pre-synthesis estimates but are not the same inferential unit.

\textbf{Sample estimator.} Given $n$ baseline runs for a fixed $q$, each run $i$ yields $(x^{\text{out}}_i,x^{\text{trace}}_i,s_i)$. We define $D^{\text{score}}_{ij}=|s_i-s_j|$, $D^{\text{out}}_{ij}=d(x^{\text{out}}_i,x^{\text{out}}_j)$, and $D^{\text{trace}}_{ij}=d(x^{\text{trace}}_i,x^{\text{trace}}_j)$. Let $R^{\mathcal{X}}_{ij}$ and $R^{\text{score}}_{ij}$ be average ranks of the upper-triangular elements, with $\mathcal{X}$ denoting either output or trace space. The sample Mantel-style Spearman coefficient~\citep{mantel1967detection} is
\begin{equation}
  r_M(\mathcal{X}) =
  \frac{\displaystyle\sum_{i<j}(R^{\mathcal{X}}_{ij}-\bar R^{\mathcal{X}})(R^{\text{score}}_{ij}-\bar R^{\text{score}})}
  {\displaystyle\sqrt{\sum_{i<j}(R^{\mathcal{X}}_{ij}-\bar R^{\mathcal{X}})^2\sum_{i<j}(R^{\text{score}}_{ij}-\bar R^{\text{score}})^2}},
  \label{eq:mantel-spearman}
\end{equation}
and the sample estimate is
\begin{equation}
  F_{\mathcal{X}}=1-r_M(\mathcal{X}).
  \label{eq:metric-freedom}
\end{equation}

The rank form supplies a common, scale-invariant statistic for categorical, set-valued, and continuous behaviors with ties. $F$ measures the orientation and fidelity of the behavioral geometry on which source-pipeline transfer acts. It orders geometric support for transfer within the task-qualified candidate class; AdaSkill converts that ordering into an operational policy.

When $r_M\geq0$, $F_{\mathcal{X}}\approx0$ indicates tight behavior-outcome rank coupling: behaviorally distant runs also rank far apart in score. Values near one indicate weak signed monotone ordering between behavior and score distances on the sampled support. Since $F\in[0,2]$, values above one correspond to \emph{anti-concordance} ($r_M<0$). Such values can remain predictive, but they reverse the outcome implication of behavioral contraction and therefore do not support contraction toward successful behaviors. We retain and report the sign of $r_M$ when interpreting this regime. If either distance vector is constant, $F$ is undefined. We omit such questions from the protocol-level aggregate. For per-question routing, the operational rule requires at least three identifiable questions after the fixed sampling budget. Estimates below this minimum remain in the complete descriptive audit but cannot independently justify retention; every system-level association is also reported under the strict $K\geq3$ subset. The routing order is fixed: $F_{\mathrm{out}}$ is the primary statistic because its task-structural representation is specified with the evaluator, and $F_{\mathrm{trace}}$ is used only when $F_{\mathrm{out}}$ is unidentifiable. When both exist, $F_{\mathrm{trace}}$ provides a complementary process-space analysis and never overrides the output-space route. If neither representation reaches the identifiability minimum after additional sampling, AdaSkill takes the default Discard route, emitting the validated capability layer without mandatory source-pipeline clauses. The pairwise entries are dependent because they share runs; uncertainty is therefore resampled at the run or question level, never by treating the $\binom n2$ pairs as independent observations.

\textbf{Instantiations and empirical reproducibility.} We compute $F$ using $M$ diagnostic questions sampled from the diagnostic pool and $N$ Base Agent runs per question. To mitigate output collapse, each run is seeded with a methodologically distinct prior generated by a lightweight \emph{diversity planner}. The planner deliberately probes the outcome geometry across plausible strategies; it does not estimate their natural deployment frequency. The primary estimand is therefore the exploratory support over which a pipeline can suppress valid alternatives. \textbf{Independent-run control.} Removing the planner and sampling ordinary Base Agent executions recovers the same CE scorer ordering as the budget grows: at $M{=}N{=}20$, planner and no-planner trace estimates are $0.55$ versus $0.53$ for MSA and $0.85$ versus $0.85$ for MRE. The planner reaches this geometry at the operating budget; it does not create the reversal (Appendix~\ref{app:mf-sensitivity}). In finite samples, we report \emph{mixed questions} with observed score variation so that the rank statistic is defined. The population counterpart conditions on questions whose score-difference variable $V$ is non-degenerate; with only $N$ runs per question, rare variation may go unobserved, so the screening rule is itself part of the reporting protocol:
\begin{itemize}[leftmargin=1.5em, itemsep=1pt]
  \item \textbf{Output freedom} ($F_{\text{out}}$): $\mathcal{X}$ is the space of task-authored structured behavior, and $d$ is a task-appropriate dissimilarity (Table~\ref{tab:output-dist}). The CE representation audit enriches estimator identity with a deterministic execution signature parsed from response-free tool calls; no outcome value enters the signature.
  \item \textbf{Trace freedom} ($F_{\text{trace}}$): $\mathcal{X}$ is the embedding space of process traces containing intermediate assistant messages, tool calls, and tool results. The final response is removed before embedding with \texttt{text-embedding-3-large}~\citep{openai2024embeddings}; distance is scaled cosine dissimilarity $(1-\cos_{\mathrm{sim}})/2$.
\end{itemize}
Appendix~\ref{app:metric-freedom} reports budget, representation, and sampling sensitivity. For a bounded higher-is-better score $u$, we define headroom-normalized lift as $(u_{\mathrm{skill}}-u_{\mathrm{base}})/(1-u_{\mathrm{base}})$. For lower-is-better MRE $e$, we use proportional error reduction $(e_{\mathrm{base}}-e_{\mathrm{skill}})/e_{\mathrm{base}}$. RQ2 applies these definitions to atomic and matched structural interventions.

\subsection{Formal Account of Structural Transfer}
\label{sec:theory}
For any behavior-mediated structural intervention, Theorem~\ref{thm:structural-lift} gives the exact identity
\begin{equation}
  \mathrm{Lift}(\pi)=a_\pi S_\pi G,
  \qquad |\mathrm{Lift}(\pi)|\leq S_\pi G,
  \label{eq:structural-lift-main}
\end{equation}
where $G$ is outcome relief across baseline-reachable behaviors, $S_\pi$ is intervention strength, and $a_\pi$ is directional alignment. This factorization separates the landscape measured before synthesis from the strength and direction realized by a candidate intervention.

$F$ is a signed projection of distance-rank geometry; it is not a raw-scale estimator of $G$. If $\Gamma_{\mathrm{rank}}$ is the fraction of outcome-distance rank variation explained by behavioral distance, Proposition~\ref{prop:geometric-determinacy} gives
\begin{equation}
  \rho_S(U,V)^2\leq\Gamma_{\mathrm{rank}},
  \qquad 1-\Gamma_{\mathrm{rank}}\leq2F-F^2.
  \label{eq:geometric-bound-main}
\end{equation}
Near-zero $F$ therefore identifies positively oriented rank structure. The intervention-level statement uses successful anchors. Let $\Delta_\pi$ denote standardized behavior-rank contraction toward a high-score Base event of mass $h$, and let $\Lambda_\pi$ denote the corresponding score-distance-rank contraction. Theorem~\ref{thm:anchor-transfer} proves, for orientation $\kappa\in\{-1,+1\}$,
\begin{equation}
  |\Lambda_\pi-\kappa\Delta_\pi|
  \leq
  \sqrt{\frac{2\chi_\pi^2\{1-\kappa(1-F)\}}{h}}.
  \label{eq:anchor-transfer-main}
\end{equation}
For positive orientation ($\kappa=+1$), the error is $\sqrt{2\chi_\pi^2F/h}$; near anti-concordance ($\kappa=-1$), it is $\sqrt{2\chi_\pi^2(2-F)/h}$ and the leading direction reverses. A calibration corollary converts $\Lambda_\pi$ to native-scale lift and exposes its residual. The account separates three objects: $F$ measures the landscape ordering before synthesis, $\Delta_\pi$ measures whether a concrete source candidate contracts toward successful behavior, and native lift measures whether that contraction transfers to the target score. Baseline geometry alone cannot identify the direction of an unexecuted intervention: candidates can share the same $F$ yet induce opposite $a_\pi$. Thus $F$ is a structural effect modifier, not a stand-alone estimator of candidate lift. The theorem states candidate-conditional transfer; the matched intervention estimates the realized structural effect, and the cross-fitted converter evaluates a calibrated policy. Appendix~\ref{app:theory} proves the full statements and impossibility boundaries.

\subsection{Adaptive Workflow-to-Skill Converter}
\label{sec:mas2skill}
AdaSkill compiles a task-qualified orchestrated workflow into a skill for a tool-using agent. Its design is direct: preserve validated capability resources, remove runtime orchestration, and make pipeline guidance the sole adaptive object.

\textbf{Phases 1 and 2: Analysis and Inventory.} The distiller applies a clause-level functional test to decompose the source architecture. A callable implementation is a \emph{tool}. A declarative proposition about the domain, data, estimator assumptions, or tool semantics is \emph{knowledge} only when it contains no action directive. Any state-to-action rule is \emph{pipeline guidance}, operationally assigned to the Pipeline class: this includes recommendations and prohibitions, ordered steps, decision gates, tool-selection rules, retries, and stopping conditions. A mixed clause is classified as Pipeline, never knowledge. Runtime handoffs, shared state, voting, debate, and inter-agent control are \emph{orchestration mechanisms}. This is an operational partition, not a latent human label: the action-directive test defines the treatment, no manual relabeling occurs, and Appendix~\ref{app:operational-protocol} publishes the rule and validator for audit. The inventory retains only tools that map to the target task and pass import and interface checks. It deduplicates declarative knowledge and rejects clauses that are irrelevant to the target specification, inconsistent with the tool interface, or unsupported by the source artifact. These validated resources form the capability layer; pipeline guidance and orchestration constrain execution in different forms.

\textbf{Phase 3: Calibrate Once, Then Freeze.} Runtime orchestration is removed, and the same validated capability resources remain available to both structural candidates. Domain knowledge is rendered as descriptive reference material; any clause that changes an action, priority, or order remains pipeline guidance. AdaSkill converts the continuous $F$ ordering into a binary deployment decision: retain below a calibrated threshold and discard above it. Weak coupling near one and anti-concordance above one both reject the source contraction. The threshold is estimated from development workflow regimes and is then frozen before the target regime is observed. RQ3 implements this contract by cross-fitting: in each family-excluded fold, the other regimes alone select $t$, and that single value is applied unchanged to every metric of the excluded family. Different folds yield $0.426$--$0.692$ because their calibration sets differ; this is cross-validation of one calibration algorithm, not family-specific tuning. The pooled boundary $0.65$ is used only as a rounded descriptive reference in diagrams and sensitivity audits. Unidentifiable metrics take the default Discard route. For a single-metric family, the route directly selects one structural configuration. For a multi-metric family, the converter integrates the metric-specific route vector into one task-level skill, retaining clauses supported by one objective while removing constraints that impair another. This hybrid is jointly synthesized, not a metric-wise concatenation of Full and Discard outputs.

\textbf{Phase 4: Validation.} A clause-level audit reapplies the functional test to the emitted skill. For a discard module, it rejects every source-derived state-to-action clause, including imperatives hidden under a ``knowledge'' heading; for a retain module, it records such clauses as pipeline. The validator also checks tool imports, descriptive reference tone, and the fixed threshold boundary (Figure~\ref{fig:overview}, Panel C). Appendix~\ref{app:operational-protocol} gives the converter prompt, input and output contract, multi-metric composition, and validation checks.

%% file: figures/fig_fig_overview.tex
\begin{figure}[t]
\centering
\includegraphics[width=0.94\textwidth,height=0.34\textheight,keepaspectratio]{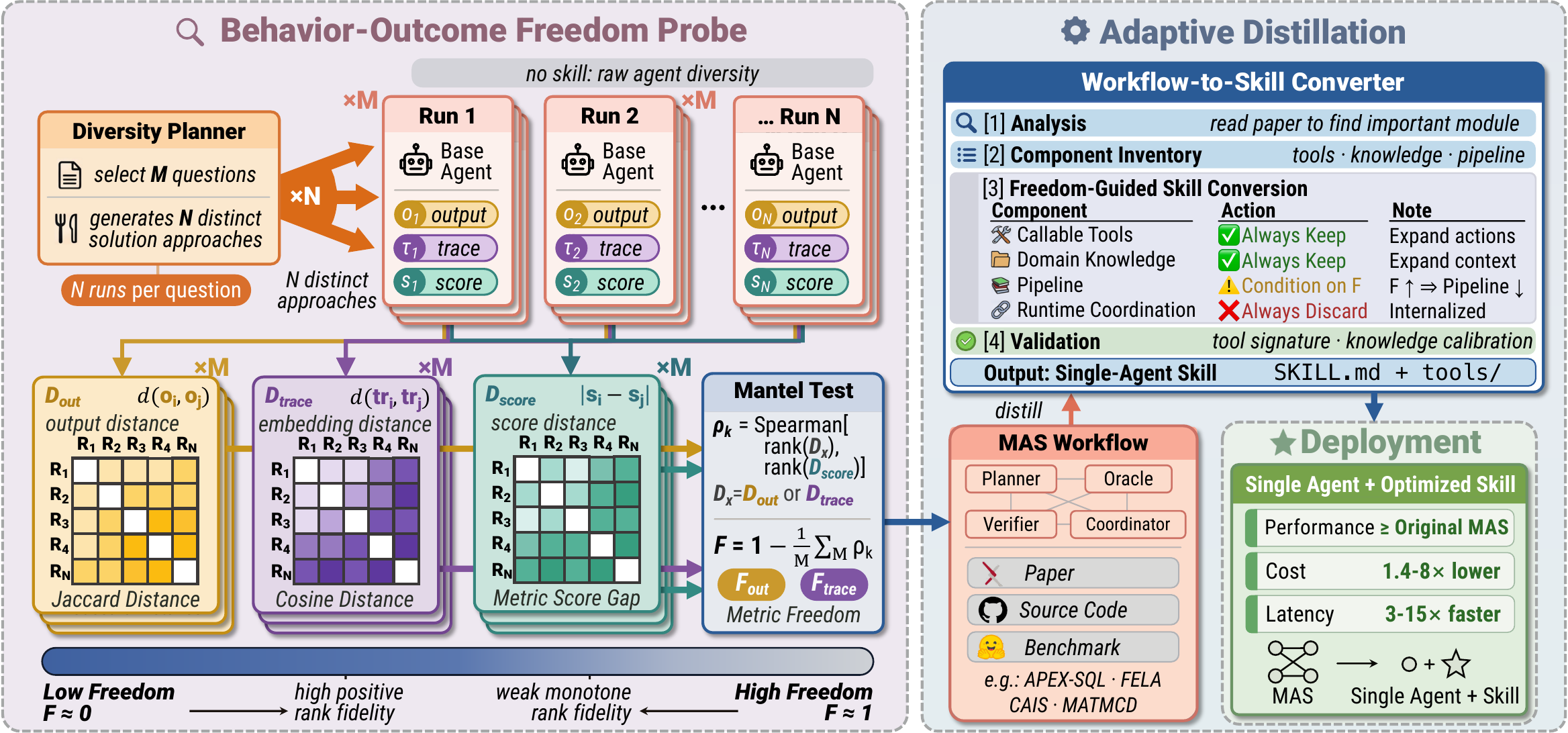}

\caption{\textbf{AdaSkill operationalizes signed structural transfer.} A task-qualified source proposes a structural candidate and $F$ measures pre-synthesis behavior--outcome rank geometry. The converter preserves eligible capability resources, conditionally inherits source-pipeline control, removes runtime orchestration, and emits a skill for direct deployment by a tool-using agent.}
\label{fig:overview}

\end{figure}

%% file: sections/04_experiments.tex
\section{Experiments}
\label{sec:results}

\subsection{Setup}

\input{figures/fig_tab_source-workflows.tex}

\textbf{Tasks and systems.} We evaluate 11 datasets spanning four structurally different task families: Text-to-SQL (BIRD-147~\citep{li2024bird}, Spider-120~\citep{lei2025spider2}), Causal Estimation (three CauSciBench datasets~\citep{acharya2026causcibench}), Causal Discovery (AutoMPG, DWDClimate, Sachs, and Child~\citep{quinlan1993autompg,mooij2016distinguishing,sachs2005causal,spiegelhalter1992learning}), and Feature Engineering (Taobao and Dia~\citep{taobao2018ad,teboul2022diabetes}). Table~\ref{tab:source-workflows} summarizes their task-qualified source workflows. We compare the native \textbf{Source Workflow}, a tool-using \textbf{Base Agent}, \textbf{AdaSkill}, \textbf{EvoSkill}~\citep{alzubi2026evoskill}, and a pipeline-faithful \textbf{Workflow Compiler}~\citep{li2026single}. All skill-agent conditions share the Sonnet~4.6 backend, splits, tools, and evaluator; source workflows retain their native orchestration and serve as the deployment reference. Appendix~\ref{app:experimental-design} gives source qualification, baseline coverage, and full implementation details.

\textbf{Metrics.} CE uses method-selection accuracy (MSA) and mean relative error (MRE); T2SQL uses execution accuracy (EX) and schema metrics; CD uses edge F1; FE uses test AUC.

\textbf{Evaluation protocol.} Diagnosis, conversion, and final evaluation use disjoint roles fixed before synthesis. $F$ uses Base Agent runs and gold scores only for outcome-distance ranks; the converter receives metric-level $F$, not diagnostic or test instances. Threshold calibration uses development regimes and is frozen before target evaluation (Appendix~\ref{app:experimental-design}).

\textbf{Evidence chain.} RQ1 changes only the scorer to test whether outcome geometry can reverse pipeline value. RQ2 is the primary mechanism test: across 16 capability-matched interventions, Full and Discard retain identical validated capability resources and remove identical runtime orchestration, differing only in pipeline guidance. RQ3 cross-fits and freezes one threshold rule before executing the composed skills in an excluded workflow regime. Appendix~\ref{app:pipeline-treatment-audit} reports the complete inferential specification, strict-identifiability analysis, and robustness checks.

\subsection{Results Overview}
\label{sec:end-to-end-results}

\textbf{Task-level performance and efficiency.}
Table~\ref{tab:main-performance-efficiency} presents the complete deployed outcomes before the mechanism analysis. Base Agent, Workflow Compiler, EvoSkill, and AdaSkill use the same Sonnet~4.6 harness. AdaSkill combines the strongest mean controlled performance with lower deployment cost and latency across the tested domains. Against the native source-workflow deployment reference, AdaSkill wins five of six performance aggregates and is 2.4--17.7$\times$ faster. Its mean T2SQL EX is 67.1 versus 68.5 for APEX-SQL, at 38\% lower measured cost and 3.6$\times$ lower latency. Pareto plots, accounting assumptions, and the complete 11-dataset breakdown are in Appendices~\ref{app:pareto} and~\ref{app:full-bar}.

\input{figures/fig_fig_overview-perf.tex}
\FloatBarrier

\input{figures/fig_fig_metric-freedom-scatter.tex}

\textbf{Overall skill utility follows the pre-synthesis geometry.}
Across all 16 dataset--metric tuples, deployed AdaSkill lift falls with canonical output-space $F$ ($r=-0.83$, $p=8.3\times10^{-5}$) and response-free trace-space $F$ ($r=-0.79$, $p<0.001$). Figure~\ref{fig:metric-freedom-scatter} reports every dataset and metric as the system-level pattern to be explained. It is not the primary mechanism test because AdaSkill itself consumes $F$. RQ1 controls the scorer, RQ2 independently isolates pipeline structure as the component carrying the reversal, and RQ3 tests an executable rule across excluded workflow regimes.

\subsection{RQ1: Why Can the Same Pipeline Help and Harm?}
\label{sec:metric-freedom-results}

\textbf{The same behavior has different value under different scorers.}
Figure~\ref{fig:mantel-opposite-lifts} computes $F$ from the same 60 pairs of Base Agent runs in causal estimation and changes only the scorer. Each point relates behavior distance to score difference. The canonical estimator-class representation gives MSA $F{=}0.10$ and MRE $F{=}1.03$.

\FloatBarrier
\input{figures/fig_fig_mantel-opposite-lifts.tex}

Enriching that representation with response-free execution configuration gives $F{=}0.18$ and $1.09$: the scorer reversal survives covariates, estimator calls, hyperparameters, and tool paths (Appendix Table~\ref{tab:ce-structured-behavior}). A second representation removes both final responses and selected-method fields before embedding the trace; it gives $F_{\mathrm{MSA}}^{\mathrm{trace}}{=}0.55$ and $F_{\mathrm{MRE}}^{\mathrm{trace}}{=}1.12$. Resolving within-method execution variation therefore preserves the reversal, and response-free traces recover the same ordering without the selected-method field. Metric choice changes the geometry of the same run distribution.

The capability-matched structural treatment reverses on the same instances (Table~\ref{tab:pipeline-intervention-16}). On Textbook, Full minus Discard is $+19.6$ normalized points for MSA and $-10.3$ for MRE. On Real, the corresponding effects are $+25.0$ and $-2.1$. The task, source workflow, capability layer, and evaluation instances remain fixed; the scorer changes. Structural value is therefore metric-conditioned even within one task distribution.

\subsection{RQ2: When Does Pipeline Transfer Help?}
\label{sec:component-mechanism}

\textbf{Pipeline guidance carries the instability.}
Figure~\ref{fig:component-ablation} decomposes skill content into tools, knowledge, and pipeline structure, then reconstructs the capability layer and the complete skill. Every performance point uses the same final-gain-recovery scale: the fraction of Final AdaSkill's Base-to-Final improvement recovered by a candidate, computed per dataset before metric aggregation. Tools and knowledge rise with $F$ ($r=0.67$ and $0.52$), and their capability-only composition strengthens this profile ($r=0.95$). Pipeline alone reverses the ordering ($r=-0.74$); adding the complete component set preserves the negative profile ($r=-0.61$). The systematic reversal is therefore localized to structural transfer. Capability resources and behavioral constraints have empirically different effect profiles.

\input{figures/fig_fig_component-ablation.tex}

At dataset--metric resolution, the same conclusion holds without aggregation. Across 15 atomic Pipeline-only interventions, pre-synthesis $F$ is negatively associated with normalized pipeline lift (Pearson $r=-0.623$, $p=0.013$; Spearman $\rho=-0.568$, $p=0.027$). This sweep compares one isolated structural component with the Base Agent. The next experiment strengthens identification by holding the transferred capability layer fixed.

\textbf{Matched structural interventions recover the geometric ordering.}
We hold tools, knowledge, evaluator, and execution budget fixed and compare full-pipeline and discard-pipeline skills. Define the direct effect $\delta_{\mathrm{pipe}}=u_{\mathrm{full}}-u_{\mathrm{discard}}$, so positive values favor structural retention; $\tau_{\mathrm{pipe}}$ denotes its headroom-normalized counterpart in figures and audit tables. Across all 16 matched interventions, workflow-regime fixed effects estimate a $-9.04$ utility-point slope in $\delta_{\mathrm{pipe}}$ per unit $F$; dataset-clustered inference gives a 95\% interval of $[-13.23,-4.85]$ and $p<0.001$. The threshold-free effect sizes are ROC AUC $1.000$ for the sign of $\delta_{\mathrm{pipe}}$ and native-scale Pearson $r=-0.797$ ($p=0.0002$; Fig.~\ref{fig:pipeline-treatment}). Requiring at least three identifiable questions leaves 13 interventions and recovers the same ordering: slope $-9.63$, 95\% interval $[-13.77,-5.48]$, $p<0.001$, and ROC AUC $1.000$. The four family-excluded refits preserve negative slopes from $-5.45$ to $-9.77$; their operational boundaries are shown separately and range from $0.426$ to $0.692$. This capability-matched contrast isolates structural transfer. Appendix~\ref{app:pipeline-treatment-audit} reports every treatment and inference specification.

The ordering is not a proxy for baseline task difficulty: baseline utility is unrelated to the matched treatment effect ($r=0.033$, $p=0.903$). An unsigned distance-dependence statistic recovers the geometric association but cannot distinguish concordance from anti-concordance. $F$ supplies the orientation of the geometry: it rewards behavior distances that track score distances and rejects source contraction when the ordering reverses. The direction induced by a concrete pipeline remains candidate-specific and is measured by the matched intervention.

\textbf{The estimand is metric-conditioned structural value.}
A dataset--metric pair is the scientific intervention context because both $F$ and pipeline utility are defined conditional on the scorer. CE MSA and MRE intentionally reuse a task and baseline runs while inducing opposite structural effects. Workflow-regime fixed effects estimate this metric-conditioned mechanism, and dataset-clustered uncertainty keeps metrics sharing a dataset in the same dependence block. Complete and strict-identifiability analyses recover the same structural ordering.

\input{figures/fig_fig_pipeline-treatment.tex}

\subsection{RQ3: A Constructive Instantiation}
\label{sec:heldout-routing}

\input{figures/fig_fig_lofo-cost-pair.tex}
\textbf{Workflow-excluded constructive test.}
RQ2 supplies the threshold-free scientific result. RQ3 freezes one executable calibration algorithm and tests whether the mechanism can guide construction outside the workflow regime used to select its threshold. For each fold, we remove the target regime, select $t$ from the remaining regimes, and freeze it before target evaluation. The converter acts on seven metric modules, not 16 dataset--metric intervention units. Its inputs are CE MSA $0.167$, CE MRE $0.782$, CD F1 $0.602$, T2SQL EX $0.686$, T2SQL NSF $0.816$, unidentifiable T2SQL SRR, and FE AUC $1.037$. CE and CD use the pre-specified mean across their diagnostic datasets; T2SQL and FE use their designated conversion benchmarks, BIRD and Taobao. Six modules meet the identifiability minimum, while SRR takes the default Discard route. Dataset-level coordinates such as Child F1 $0.291$ remain RQ2 intervention observations and are not separate converter inputs. For single-metric CD and FE, AdaSkill executes the selected candidate. For multi-metric CE and T2SQL, the route vector produces one jointly composed and directly executed skill. Figure~\ref{fig:lofo-policy} shows the workflow-excluded result: mean normalized lift is 31.19, versus 21.84 for Always Full, 21.93 for Always Discard, and 33.71 for the final AdaSkill synthesized with all workflow regimes available. The routed policy retains 92.5\% of the full-data result and exceeds the stronger fixed policy in 97.8\% of selection-aware paired bootstrap samples. This constructive test is separate from the 16-intervention mechanism test in Figure~\ref{fig:pipeline-treatment}.

\textbf{Construction amortizes under deployment.}
Figure~\ref{fig:cost-amortization} combines measured task-specific conversion workload with per-query deployment cost. Workflow Compiler and the matched Full and Discard candidates each pay one workflow-to-skill conversion; AdaSkill pays the same conversion plus the \$6.12 Freedom probe. Source Workflow and Base Agent have no new construction charge, while EvoSkill includes iterative trace collection and rebuilding. CE uses the performance-aligned Base Agent mean of \$0.296 per query and matched Full, Discard, and AdaSkill costs of \$0.223, \$0.211, and \$0.207. Across the displayed horizon, AdaSkill crosses the relevant converted alternatives after 3--382 deployed queries. Appendix~\ref{app:experimental-design} gives the accounting boundary and token ledger.

\FloatBarrier

\subsection{Sensitivity and Uncertainty Analysis}
\label{sec:sensitivity-analysis}

\textbf{Budget sensitivity.}
The diagnostic budget controls how much of the behavior--outcome geometry is resolved before synthesis. Figure~\ref{fig:heatmap-MN} sweeps $F_{\mathrm{out}}$ over $M\in[1,10]$ diagnostic questions and $N\in[3,12]$ runs per question. The MSA estimate stabilizes once $N\geq5$, while MRE remains on the discard side throughout the grid. Increasing $M$ beyond six produces only marginal movement. For CE, we therefore use $M{=}N{=}6$, the visible knee of the accuracy--cost surface; T2SQL uses $M{=}10$ diagnostic questions and $N{=}6$ runs per question. At the measured average of \$0.17 per run, the CE operating point costs \$6.12 and is $2.8\times$ cheaper than $M{=}N{=}10$.

\input{figures/fig_fig_heatmap-MN-wrap.tex}

\textbf{Sampling robustness.}
The sampling mechanism changes coverage speed, not the recovered ordering. With the diversity planner, the operating-budget trace estimates are already close to their large-budget values. Under independent Base Agent executions, the same scorer ordering emerges as the budget grows: at $M{=}N{=}20$, planner and no-planner trace estimates are $0.55$ versus $0.53$ for MSA and $0.85$ versus $0.85$ for MRE. Re-estimating $F$ and skill lift with GPT-5.1 also preserves the system-level relationship in output and response-free trace spaces ($r=-0.71$ and $r=-0.79$). The planner therefore exposes valid alternatives with fewer calls, while the ordering persists under a changed behavior law. Appendix~\ref{app:mf-sensitivity} reports the complete grids, independent-run control, and cost surface.

\textbf{Statistical uncertainty.}
Question-level intervals and paired gain intervals are reported in Appendix~\ref{app:question-uncertainty}. AdaSkill reduces mean CE MRE over the Base Agent by 5.6 points with a paired 95\% interval of $[1.0,10.5]$; resolved dataset-level gains also include Textbook MSA, Synthetic MRE, BIRD EX, and Spider EX comparisons. These final comparisons evaluate the skill emitted by the adaptive converter directly.

The experiments connect mechanism to construction. Scorer control establishes that structural utility is metric-conditioned. Atomic and matched interventions localize the reversal to pipeline guidance and provide the primary geometric ordering. Competing diagnostics locate the signal in joint behavior--outcome geometry. Overall utility and the family-excluded policy show the corresponding system behavior under direct skill construction.

%% file: figures/fig_tab_source-workflows.tex
\begin{table}[t]
\centering
\caption{\textbf{Published, high-performing source workflows and externalized control.} Each source exposes an analytical decision loop through staged execution, specialized modules, shared artifacts, or feedback.}
\label{tab:source-workflows}
{\small
\setlength{\tabcolsep}{4pt}
\renewcommand{\arraystretch}{1.08}
\arrayrulecolor{black}
\begin{tabular}{>{\centering\arraybackslash}m{1.15cm}
                  >{\RaggedRight\arraybackslash}m{1.60cm}
                  >{\RaggedRight\arraybackslash}m{3.35cm}
                  >{\RaggedRight\arraybackslash}m{6.60cm}}
\toprule[0.1em]
\rowcolor{black!8}
\textbf{Task} & \textbf{Source} & \textbf{Control architecture} & \textbf{Externalized analytical decisions} \\
\midrule[0.05em]
\rowcolor{black!3}
\textbf{T2SQL} & APEX-SQL
& Staged and parallel workflow modules
& Hypothesis planning; dual-path schema pruning; parallel profiling; global synthesis; SQL exploration, execution, and verification. \\
\textbf{CE} & CAIS
& Four-stage, eight-micro-tool pipeline
& Query parsing; data analysis; variable interpretation; method selection; estimation; validation; refinement; reporting. \\
\rowcolor{black!3}
\textbf{CD} & MATMCD
& Two modules in a centralized workflow
& Retrieved-context augmentation; causal-constraint construction; statistical graph estimation and refinement. \\
\textbf{FE} & FELA
& Role-specialized evolutionary workflow
& Idea and code generation; criticism; evaluation; evolutionary selection over shared memory. \\
\bottomrule[0.1em]
\end{tabular}
}
\end{table}

%% file: figures/fig_fig_overview-perf.tex
\begin{table}[t]
\centering
\caption{\textbf{Dataset-level performance and domain-level efficiency across four task families.} Performance is reported in percent; T2SQL cells list EX, NSF, and SRR in that order. Latency and cost are averaged across datasets within each task family. Boldface denotes the best displayed result, including ties. Taobao component AUCs are recovered from Figure~\ref{fig:component-ablation}.}
\label{tab:main-performance-efficiency}
\small
\setlength{\tabcolsep}{2.4pt}
\renewcommand{\arraystretch}{1.08}
\begin{adjustbox}{max width=1.0\linewidth}
\begin{tabular}{*{10}{c}}
\toprule[0.1em]
\textbf{Dataset} & \textbf{Metric(s)} & \makecell{\textbf{Source}\\\textbf{Workflow}} & \makecell{\textbf{Base}\\\textbf{Agent}} & \textbf{EvoSkill} & \makecell{\textbf{Workflow}\\\textbf{Compiler}} & \textit{Tools} & \textit{Knowledge} & \textit{Pipeline} & \textbf{AdaSkill} \\
\midrule[0.05em]
\rowcolor{gray!12}\multicolumn{10}{c}{\textbf{Causal Estimation}} \\
Textbook & MSA $\uparrow$ / MRE $\downarrow$ & 83.3/54.0 & 61.5/31.2 & 71.8/25.4 & 82.1/26.3 & 69.2/28.3 & 79.5/25.6 & 84.6/26.5 & \textbf{89.7/23.1} \\
Synthetic & MSA $\uparrow$ / MRE $\downarrow$ & 75.9/16.2 & \textbf{100.0}/10.6 & \textbf{100.0}/7.1 & \textbf{100.0}/12.9 & 95.6/5.5 & 91.1/6.0 & \textbf{100.0}/5.5 & \textbf{100.0/4.3} \\
Real & MSA $\uparrow$ / MRE $\downarrow$ & 78.3/32.0 & 87.1/18.3 & 93.5/21.6 & 90.3/17.3 & 87.1/17.0 & 90.3/18.0 & \textbf{96.8}/17.8 & \textbf{96.8/15.9} \\
\cmidrule[0.05em](lr){1-10}
\multicolumn{2}{c}{Latency (min) / Cost (\$) $\downarrow$} & 2.1/\textbf{.172} & \textbf{0.6}/.296 & 1.1/.270 & 1.9/.608 & 0.8/.208 & 0.9/.305 & 0.9/.295 & 0.9/.207 \\
\midrule[0.05em]
\rowcolor{gray!12}\multicolumn{10}{c}{\textbf{Causal Discovery}} \\
AutoMPG & F1 $\uparrow$ & \textbf{73.7} & 67.0 & 70.0 & 70.0 & 65.0 & 67.0 & 67.0 & 65.2 \\
DWDClimate & F1 $\uparrow$ & 66.3 & 55.8 & \textbf{74.0} & 64.0 & 54.0 & \textbf{74.0} & 61.0 & 70.1 \\
Sachs & F1 $\uparrow$ & 55.7 & 86.9 & 87.0 & 67.0 & 89.0 & 89.0 & \textbf{95.0} & 92.4 \\
Child & F1 $\uparrow$ & 65.3 & 80.3 & 90.0 & 86.0 & 85.0 & 79.0 & 81.0 & \textbf{95.2} \\
\cmidrule[0.05em](lr){1-10}
\multicolumn{2}{c}{Latency (min) / Cost (\$) $\downarrow$} & 32.5/3.01 & 3.0/2.22 & 4.2/2.68 & 2.8/1.58 & 1.8/\textbf{.43} & 1.1/.48 & \textbf{0.8}/.52 & 3.1/2.50 \\
\midrule[0.05em]
\rowcolor{gray!12}\multicolumn{10}{c}{\textbf{Feature Engineering}} \\
Taobao & AUC $\uparrow$ & 65.3 & 66.0 & 66.4 & 65.8 & 65.9 & 66.7 & 65.4 & \textbf{67.0} \\
Dia & AUC $\uparrow$ & \textbf{81.2} & 80.4 & 80.6 & 79.8 & 81.1 & 81.0 & 80.3 & \textbf{81.2} \\
\cmidrule[0.05em](lr){1-10}
\multicolumn{2}{c}{Latency (min) / Cost (\$) $\downarrow$} & 630.0/13.63 & 44.0/5.11 & 72.3/6.97 & 42.9/\textbf{2.78} & 73.8/7.33 & 70.4/7.04 & 89.4/9.64 & \textbf{35.6}/4.26 \\
\midrule[0.05em]
\rowcolor{gray!12}\multicolumn{10}{c}{\textbf{Text-to-SQL}} \\
BIRD-147 & EX / NSF / SRR $\uparrow$ & \textbf{69.4}/60.8/\textbf{93.9} & 59.2/90.5/71.8 & 66.0/93.0/76.2 & 67.4/93.7/79.6 & 66.0/92.9/75.5 & 66.7/93.1/74.2 & 64.6/91.7/68.7 & 67.4/\textbf{94.1}/80.8 \\
Spider-120 & EX / NSF / SRR $\uparrow$ & \textbf{67.5}/47.0/\textbf{88.3} & 60.0/11.4/0.8 & 62.5/20.8/2.5 & 57.5/13.4/1.7 & 63.3/28.5/3.3 & 64.2/45.3/5.0 & 60.8/16.2/1.7 & 66.7/\textbf{60.5}/7.5 \\
\cmidrule[0.05em](lr){1-10}
\multicolumn{2}{c}{Latency (min) / Cost (\$) $\downarrow$} & 6.0/.72 & \textbf{1.3}/.39 & 1.6/.48 & 2.4/.59 & 1.7/\textbf{.37} & 1.6/.51 & 2.1/.88 & 1.7/.45 \\
\bottomrule[0.1em]
\end{tabular}
\end{adjustbox}
\end{table}

%% file: figures/fig_fig_metric-freedom-scatter.tex
\begin{figure}[tbp]
\centering
\includegraphics[width=.88\linewidth]{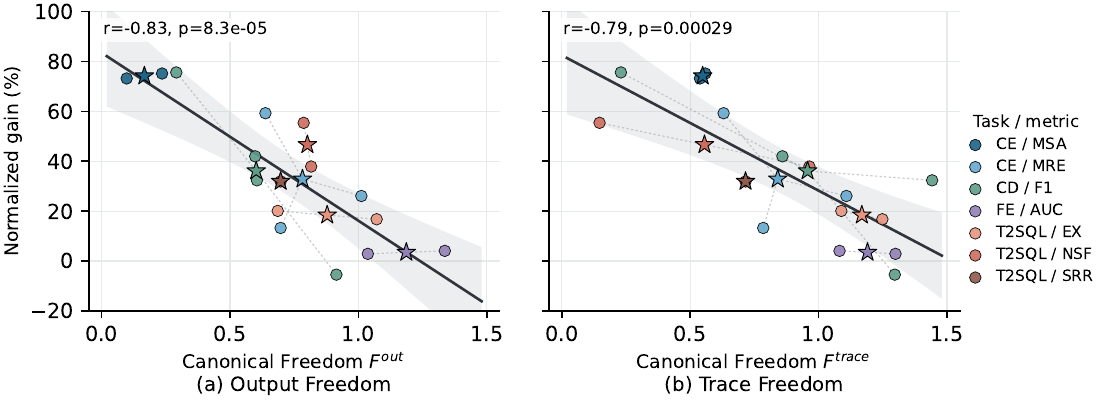}
\caption{\textbf{Overall skill utility follows Behavior-Outcome Freedom.} Canonical output-space \textbf{(a)} and response-free trace-space \textbf{(b)} estimates recover the same negative ordering with AdaSkill lift ($r=-0.83$ and $r=-0.79$) across all 16 dataset--metric tuples. Every vertical coordinate is the directly deployed output of the adaptive converter and includes capability transfer; Figure~\ref{fig:pipeline-treatment} isolates the matched structural treatment. Circles are individual datasets; stars are metric means.}
\label{fig:metric-freedom-scatter}
\end{figure}

%% file: figures/fig_fig_mantel-opposite-lifts.tex
\providecolor{ccMSA}   {HTML}{2F6F9F}
\providecolor{ccMRE}   {HTML}{B96552}
\providecolor{cFIT} {HTML}{4F857C}
\providecolor{cBG}  {HTML}{E7ECEF}
\providecolor{mLow} {HTML}{2F6F9F}
\providecolor{mHigh}{HTML}{78A9C2}
\providecolor{mMid} {HTML}{B96552}
\providecolor{cANNO}{HTML}{6B7280}

\pgfplotsset{
  fig6sc/.style={
    axis lines=left,
    axis line style={gray!60, line width=0.5pt},
    xtick align=outside, ytick align=outside,
    tick label style={font=\scriptsize},
    xlabel style={font=\small, yshift=0pt},
    ylabel style={font=\small, xshift=0pt, yshift=-3pt},
    title style={font=\scriptsize\bfseries},
    grid=none, clip=false,
  }
}

\begin{wrapfigure}{r}{0.54\textwidth}
\vspace{-8pt}
\centering
\resizebox{\linewidth}{!}{%
\begin{tikzpicture}

\begin{axis}[
  name=pA, fig6sc, width=4.5cm, height=3.4cm,
  axis background/.style={fill=ccMSA!3},
  xmin=-0.70, xmax=1.72,
  ymin=-0.52, ymax=1.56,
  xtick={0,1},
  xticklabels={,},
  ytick={0,1},
  yticklabels={0, 1},
  ylabel={$\Delta$MSA},
  xlabel={},
]
\draw[gray!35, dashed, thin] (axis cs:-0.70,0) -- (axis cs:1.72,0);
\draw[gray!35, dashed, thin] (axis cs:-0.70,1) -- (axis cs:1.72,1);
\addplot[color=cFIT, line width=1.3pt, opacity=0.72]
  coordinates {(-0.70,0)(0.5,0)(0.5,1)(1.72,1)};
\draw[fill=cBG!40, draw=none, fill opacity=0.85]
  (axis cs:0,0) ellipse [x radius=0.3126, y radius=0.3023];
\draw[fill=cBG!40, draw=none, fill opacity=0.85]
  (axis cs:1,0) ellipse [x radius=0.1008, y radius=0.0975];
\draw[fill=cBG!40, draw=none, fill opacity=0.85]
  (axis cs:1,1) ellipse [x radius=0.3075, y radius=0.2973];
\addplot[only marks, mark=*, mark size=1.2pt,
  mark options={fill=mLow, draw=gray!60, line width=0.3pt, opacity=0.95}]
  coordinates {
    (0.0204,-0.1747)(0.1081,-0.1729)(0.1682,0.1470)(0.0127,-0.2359)
    (0.0955,0.1290)(-0.1359,0.1863)(-0.1151,-0.1901)(-0.0917,0.0443)
    (-0.0488,-0.0227)(0.0764,-0.1873)(0.0062,-0.1586)(0.1906,-0.0462)
    (0.0161,-0.1158)(-0.0420,0.0116)(0.0959,0.1818)(-0.0066,-0.2622)
    (-0.0573,0.1443)(-0.0877,0.0221)(0.0969,0.1341)(0.0243,0.2160)
    (-0.1728,0.0936)(-0.0353,-0.1419)(0.0910,-0.2080)(-0.0838,0.0930)
    (-0.0316,-0.0909)(0.0055,0.0218)(0.0385,-0.2117)(-0.0504,-0.2264)
    (-0.1496,0.0514)
  };
\addplot[only marks, mark=*, mark size=1.2pt,
  mark options={fill=mHigh, draw=gray!60, line width=0.3pt, opacity=0.95}]
  coordinates {(1.0143,0.0223)(0.9778,-0.0513)(0.9471,-0.0299)};
\addplot[only marks, mark=*, mark size=1.2pt,
  mark options={fill=mLow, draw=gray!60, line width=0.3pt, opacity=0.95}]
  coordinates {
    (0.9006,0.8539)(0.8964,0.9475)(1.1308,1.0333)(1.0284,1.1634)
    (1.0590,0.8990)(1.0999,1.0498)(0.9556,1.2053)(0.8328,0.9186)
    (0.9341,0.8565)(1.0323,0.9014)(1.0599,1.0112)(0.9763,0.8245)
    (1.0755,1.1574)(1.0969,1.1788)(0.8826,1.0537)(0.9491,1.1973)
    (0.9614,1.0592)(1.1458,1.1736)(1.1416,0.8809)(0.9802,1.2568)
    (1.0307,0.7817)(0.9001,1.0428)(1.1575,1.1418)(0.9128,1.0323)
    (0.9472,1.1873)(1.0829,1.2174)(1.0092,0.7780)(0.8546,1.0859)
  };
\node[font=\scriptsize\bfseries, color=ccMSA, fill=ccMSA!10,
  draw=ccMSA!45, line width=0.4pt, rounded corners=1.2pt,
  inner xsep=2.5pt, inner ysep=1.2pt, anchor=south, align=center]
  at (axis cs:0.5,1.48)
  {$F_{\mathrm{MSA}}^{\mathrm{out}}{=}0.10$};
\node[font=\small\bfseries, anchor=south]
  at (axis cs:-0.63,1.52) {(a)};
\end{axis}

\begin{axis}[
  name=pB, fig6sc, width=4.5cm, height=3.4cm,
  axis background/.style={fill=ccMSA!3},
  at={($(pA.north east)+(0.4cm,0)$)}, anchor=north west,
  xmin=-0.02, xmax=1.02,
  ymin=-0.52, ymax=1.56,
  xtick={0,0.5,1.0},
  xticklabels={,,},
  ytick={0,1},
  yticklabels={},
  ylabel={},
  xlabel={},
]
\draw[gray!35, dashed, thin] (axis cs:-0.02,0) -- (axis cs:1.02,0);
\draw[gray!35, dashed, thin] (axis cs:-0.02,1) -- (axis cs:1.02,1);
\addplot[color=cFIT, line width=1.3pt, opacity=0.72]
  coordinates {(-0.02,0)(0.6667,0)(0.6667,1)(1.02,1)};
\addplot[only marks, mark=*, mark size=1.2pt,
  mark options={fill=mLow, draw=gray!50, line width=0.3pt, opacity=0.95}]
  coordinates {
    (0.1111,0.0137)(0.8973,0.9969)(0.0000,0.0179)(0.3084,0.0099)
    (0.9873,0.9797)(0.1945,0.0238)(0.5572,0.0131)(0.8153,1.0143)
    (0.6529,0.9814)(0.4271,-0.0025)(0.0000,-0.0065)(0.1103,0.0213)
    (0.6290,1.0072)(0.8528,1.0161)(0.0044,-0.0028)(0.8837,0.9864)
    (0.1518,0.0027)(0.9849,0.9782)(0.3320,0.0164)(0.6698,1.0066)
    (0.0000,0.0129)(0.4311,-0.0073)(0.8615,1.0235)(0.3107,0.0197)
    (0.1408,0.0139)(0.9774,0.9847)(0.5546,-0.0017)(0.8025,0.9772)
    (0.6685,0.9827)(0.1132,0.0092)(1.0000,1.0122)(0.4011,0.0234)
    (0.5796,-0.0087)(0.2782,-0.0065)(0.9102,0.9985)(0.0775,-0.0155)
    (0.4465,-0.0185)(0.0161,-0.0012)(0.9947,0.9863)(0.7969,1.0085)
    (0.6648,0.9969)(0.5742,0.0166)(0.2625,0.0100)(0.3144,-0.0094)
  };
\addplot[only marks, mark=*, mark size=1.2pt,
  mark options={fill=mHigh, draw=gray!50, line width=0.3pt, opacity=0.95}]
  coordinates {
    (0.5693,1.0166)(0.2322,1.0152)(0.4474,0.9944)(1.0000,-0.0106)
    (0.7650,0.0091)(0.2801,0.9820)(0.5040,0.9850)(0.4379,0.9754)
    (0.7423,0.0143)(0.1187,1.0082)(0.6503,0.0103)(0.2191,1.0140)
    (0.8920,-0.0021)(0.7435,0.0034)(0.3367,0.9820)(0.0000,0.9807)
  };
\node[font=\scriptsize\bfseries, color=ccMSA, fill=ccMSA!10,
  draw=ccMSA!45, line width=0.4pt, rounded corners=1.2pt,
  inner xsep=2.5pt, inner ysep=1.2pt, anchor=south, align=center]
  at (axis cs:0.5,1.48)
  {$F_{\mathrm{MSA}}^{\mathrm{trace}}{=}0.55$};
\node[font=\small\bfseries, anchor=south]
  at (axis cs:-0.01,1.52) {(b)};
\end{axis}

\begin{axis}[
  name=pC, fig6sc, width=4.5cm, height=3.4cm,
  axis background/.style={fill=ccMRE!3},
  at={($(pA.south west)+(0,-0.45cm)$)}, anchor=north west,
  xmin=-0.65, xmax=1.70,
  ymin=-0.05, ymax=1.15,
  xtick={0,1},
  xticklabels={0, 1},
  ytick={0,0.50,1.0},
  yticklabels={0, .5, 1},
  ylabel={$\Delta$MRE},
  xlabel={},
]
\addplot[color=cFIT, line width=1.3pt, opacity=0.72]
  coordinates {(-0.65,0.5077)(0.5,0.5077)(0.5,0.4928)(1.70,0.4928)};
\addplot[fill=cBG!40, draw=none, opacity=0.85] coordinates {
  (-0.0678,0.0647)(-0.1398,0.0971)(-0.1533,0.1295)(-0.1716,0.1618)
  (-0.2345,0.1942)(-0.2200,0.2265)(-0.2166,0.2589)(-0.1845,0.2912)
  (-0.1253,0.3236)(-0.0818,0.3559)(-0.1105,0.3883)(-0.2294,0.4207)
  (-0.2165,0.4530)(-0.2405,0.4854)(-0.2009,0.5177)(-0.1740,0.5501)
  (-0.2469,0.5824)(-0.3000,0.6148)(-0.2924,0.6471)(-0.1992,0.6795)
  (-0.1180,0.7119)(-0.1447,0.7442)(-0.1527,0.7766)(-0.1473,0.8089)
  (-0.0939,0.8413)(-0.0170,0.8736)(-0.0010,0.9060)(-0.0146,0.9383)
  (-0.0763,0.9707)(-0.0612,1.0031)
  (0.0612,1.0031)(0.0763,0.9707)(0.0146,0.9383)(0.0010,0.9060)
  (0.0170,0.8736)(0.0939,0.8413)(0.1473,0.8089)(0.1527,0.7766)
  (0.1447,0.7442)(0.1180,0.7119)(0.1992,0.6795)(0.2924,0.6471)
  (0.3000,0.6148)(0.2469,0.5824)(0.1740,0.5501)(0.2009,0.5177)
  (0.2405,0.4854)(0.2165,0.4530)(0.2294,0.4207)(0.1105,0.3883)
  (0.0818,0.3559)(0.1253,0.3236)(0.1845,0.2912)(0.2166,0.2589)
  (0.2200,0.2265)(0.2345,0.1942)(0.1716,0.1618)(0.1533,0.1295)
  (0.1398,0.0971)(0.0678,0.0647)
} -- cycle;
\addplot[color=cBG!70, line width=1.4pt, opacity=0.90]
  coordinates {(-0.2177,0.5077)(0.2177,0.5077)};
\addplot[fill=cBG!40, draw=none, opacity=0.85] coordinates {
  (0.7778,0.0054)(0.7154,0.0398)(0.7893,0.0742)(0.8366,0.1086)
  (0.8555,0.1430)(0.8904,0.1774)(0.9146,0.2118)(0.9074,0.2462)
  (0.8781,0.2806)(0.8294,0.3150)(0.7942,0.3494)(0.8147,0.3838)
  (0.8837,0.4182)(0.9143,0.4526)(0.9154,0.4870)(0.8936,0.5214)
  (0.8956,0.5558)(0.9473,0.5902)(0.9782,0.6246)(0.9412,0.6590)
  (0.8638,0.6934)(0.8342,0.7278)(0.8353,0.7622)(0.8181,0.7966)
  (0.7946,0.8310)(0.7396,0.8654)(0.7000,0.8998)(0.7135,0.9342)
  (0.7698,0.9687)(0.8688,1.0031)
  (1.1312,1.0031)(1.2302,0.9687)(1.2865,0.9342)(1.3000,0.8998)
  (1.2604,0.8654)(1.2054,0.8310)(1.1819,0.7966)(1.1647,0.7622)
  (1.1658,0.7278)(1.1362,0.6934)(1.0588,0.6590)(1.0218,0.6246)
  (1.0527,0.5902)(1.1044,0.5558)(1.1064,0.5214)(1.0846,0.4870)
  (1.0857,0.4526)(1.1163,0.4182)(1.1853,0.3838)(1.2058,0.3494)
  (1.1706,0.3150)(1.1219,0.2806)(1.0926,0.2462)(1.0854,0.2118)
  (1.1096,0.1774)(1.1445,0.1430)(1.1634,0.1086)(1.2107,0.0742)
  (1.2846,0.0398)(1.2222,0.0054)
} -- cycle;
\addplot[color=cBG!70, line width=1.4pt, opacity=0.90]
  coordinates {(0.9020,0.4928)(1.0980,0.4928)};
\definecolor{ptS0} {HTML}{F4D232}\definecolor{ptS1} {HTML}{CA413D}
\definecolor{ptS2} {HTML}{341056}\definecolor{ptS3} {HTML}{D24837}
\definecolor{ptS4} {HTML}{92195E}\definecolor{ptS5} {HTML}{F38D0F}
\definecolor{ptS6} {HTML}{F38D0F}\definecolor{ptS7} {HTML}{B42E4C}
\definecolor{ptS8} {HTML}{92195E}\definecolor{ptS9} {HTML}{341056}
\definecolor{ptS10}{HTML}{F1821A}\definecolor{ptS11}{HTML}{F4D232}
\definecolor{ptS12}{HTML}{F38D0F}\definecolor{ptS13}{HTML}{881561}
\definecolor{ptS14}{HTML}{F1821A}\definecolor{ptS15}{HTML}{92195E}
\definecolor{ptS16}{HTML}{341056}\definecolor{ptS17}{HTML}{F3DC41}
\definecolor{ptS18}{HTML}{D24837}\definecolor{ptS19}{HTML}{341056}
\definecolor{ptS20}{HTML}{92195E}\definecolor{ptS21}{HTML}{3F1060}
\definecolor{ptS22}{HTML}{F1821A}\definecolor{ptS23}{HTML}{D24837}
\definecolor{ptS24}{HTML}{F4D232}\definecolor{ptS25}{HTML}{341056}
\definecolor{ptS26}{HTML}{CA413D}\definecolor{ptS27}{HTML}{92195E}
\definecolor{ptS28}{HTML}{881561}
\foreach \x/\y/\col in {
  0.0236/1.0000/ptS0,-0.0040/0.7813/ptS1,0.0091/0.5223/ptS2,
  0.0371/0.2167/ptS3,0.0189/0.3585/ptS4,0.0075/0.0889/ptS5,
  0.0083/0.1084/ptS6,-0.0274/0.3130/ptS7,-0.0657/0.3278/ptS8,
  -0.0089/0.5552/ptS9,-0.0400/0.9215/ptS10,-0.0128/0.9993/ptS11,
  0.0495/0.1298/ptS12,-0.0372/0.6961/ptS13,-0.0618/0.8887/ptS14,
  -0.0306/0.2972/ptS15,-0.0289/0.5083/ptS16,0.0227/0.0000/ptS17,
  0.0080/0.2268/ptS18,0.0397/0.6184/ptS19,0.0230/0.3159/ptS20,
  -0.0131/0.4502/ptS21,0.0440/0.8831/ptS22,-0.0466/0.2368/ptS23,
  -0.0668/0.9711/ptS24,-0.0574/0.5533/ptS25,0.0311/0.7538/ptS26,
  -0.0053/0.3606/ptS27,-0.0474/0.6721/ptS28
}{ \edef\tmp{\noexpand\fill[ccMRE, opacity=0.82] (axis cs:\x,\y) circle (1.2pt);}\tmp }
\definecolor{ptD0} {HTML}{F4D232}\definecolor{ptD1} {HTML}{F38D10}
\definecolor{ptD2} {HTML}{91195E}\definecolor{ptD3} {HTML}{341057}
\definecolor{ptD4} {HTML}{B42E4C}\definecolor{ptD5} {HTML}{341057}
\definecolor{ptD6} {HTML}{B42E4C}\definecolor{ptD7} {HTML}{F4D232}
\definecolor{ptD8} {HTML}{F7B415}\definecolor{ptD9} {HTML}{AB2852}
\definecolor{ptD10}{HTML}{3F1060}\definecolor{ptD11}{HTML}{F7B415}
\definecolor{ptD12}{HTML}{3F1060}\definecolor{ptD13}{HTML}{F38D10}
\definecolor{ptD14}{HTML}{F18314}\definecolor{ptD15}{HTML}{F18314}
\definecolor{ptD16}{HTML}{3F1060}\definecolor{ptD17}{HTML}{F38D10}
\definecolor{ptD18}{HTML}{F18314}\definecolor{ptD19}{HTML}{CA413C}
\definecolor{ptD20}{HTML}{D24837}\definecolor{ptD21}{HTML}{F4D232}
\definecolor{ptD22}{HTML}{341057}\definecolor{ptD23}{HTML}{D24837}
\definecolor{ptD24}{HTML}{91195E}\definecolor{ptD25}{HTML}{F3DC41}
\definecolor{ptD26}{HTML}{F18314}\definecolor{ptD27}{HTML}{F4D232}
\definecolor{ptD28}{HTML}{F18314}\definecolor{ptD29}{HTML}{F38D10}
\definecolor{ptD30}{HTML}{341057}
\foreach \x/\y/\col in {
  1.0001/0.0000/ptD0,0.9513/0.8727/ptD1,1.0275/0.6987/ptD2,
  0.9925/0.4074/ptD3,0.9833/0.7403/ptD4,0.9722/0.3887/ptD5,
  1.0182/0.7093/ptD6,0.9807/0.0193/ptD7,0.9423/0.9353/ptD8,
  0.9465/0.2675/ptD9,1.0647/0.5486/ptD10,1.0572/0.9871/ptD11,
  1.0280/0.5436/ptD12,0.9672/0.8804/ptD13,1.0657/0.1210/ptD14,
  1.0390/0.1077/ptD15,1.0304/0.5244/ptD16,0.9929/0.8765/ptD17,
  0.9681/0.1294/ptD18,0.9435/0.2127/ptD19,1.0564/0.7941/ptD20,
  0.9938/0.0097/ptD21,0.9583/0.3875/ptD22,0.9728/0.7526/ptD23,
  1.0111/0.6434/ptD24,0.9547/1.0000/ptD25,1.0499/0.1062/ptD26,
  1.0362/0.0197/ptD27,1.0307/0.1121/ptD28,0.9905/0.8594/ptD29,
  1.0178/0.4517/ptD30
}{ \edef\tmp{\noexpand\fill[ccMRE, opacity=0.82] (axis cs:\x,\y) circle (1.2pt);}\tmp }
\node[font=\scriptsize\bfseries, color=ccMRE, fill=ccMRE!10,
  draw=ccMRE!45, line width=0.4pt, rounded corners=1.2pt,
  inner xsep=2.5pt, inner ysep=1.2pt, anchor=south, align=center]
  at (axis cs:0.5,1.07)
  {$F_{\mathrm{MRE}}^{\mathrm{out}}{=}1.03$};
\node[font=\small\bfseries, anchor=south]
  at (axis cs:-0.57,1.12) {(c)};
\end{axis}

\begin{axis}[
  name=pD, fig6sc, width=4.5cm, height=3.4cm,
  axis background/.style={fill=ccMRE!3},
  at={($(pC.north east)+(0.4cm,0)$)}, anchor=north west,
  xmin=-0.02, xmax=1.02,
  ymin=-0.05, ymax=1.15,
  xtick={0,0.5,1.0},
  xticklabels={0,.5,1},
  ytick={0,0.50,1.0},
  yticklabels={},
  ylabel={},
  xlabel={},
]
\addplot[color=cFIT, line width=1.3pt, opacity=0.72, domain=-0.02:1.02, samples=2]
  {0.5551 - 0.1101*x};
\fill[ccMRE, opacity=0.82]  (axis cs:0.1111,1.0000) circle (1.2pt);
\fill[ccMRE, opacity=0.82]  (axis cs:0.8973,0.0000) circle (1.2pt);
\fill[ccMRE, opacity=0.82]   (axis cs:0.0000,0.7813) circle (1.2pt);
\fill[ccMRE, opacity=0.82]    (axis cs:0.3084,0.5223) circle (1.2pt);
\fill[ccMRE, opacity=0.82]  (axis cs:0.9873,0.8727) circle (1.2pt);
\fill[ccMRE, opacity=0.82]   (axis cs:0.1945,0.2167) circle (1.2pt);
\fill[ccMRE, opacity=0.82]   (axis cs:0.5572,0.3585) circle (1.2pt);
\fill[ccMRE, opacity=0.82]   (axis cs:0.8153,0.6987) circle (1.2pt);
\fill[ccMRE, opacity=0.82]    (axis cs:0.6529,0.4074) circle (1.2pt);
\fill[ccMRE, opacity=0.82]  (axis cs:0.4271,0.0889) circle (1.2pt);
\fill[ccMRE, opacity=0.82]   (axis cs:0.5693,0.7403) circle (1.2pt);
\fill[ccMRE, opacity=0.82]   (axis cs:0.2322,0.3887) circle (1.2pt);
\fill[ccMRE, opacity=0.82]   (axis cs:0.4474,0.7093) circle (1.2pt);
\fill[ccMRE, opacity=0.82]  (axis cs:0.0000,0.1084) circle (1.2pt);
\fill[ccMRE, opacity=0.82]   (axis cs:0.1103,0.3130) circle (1.2pt);
\fill[ccMRE, opacity=0.82]  (axis cs:1.0000,0.0193) circle (1.2pt);
\fill[ccMRE, opacity=0.82]  (axis cs:0.6290,0.9353) circle (1.2pt);
\fill[ccMRE, opacity=0.82]   (axis cs:0.7650,0.2675) circle (1.2pt);
\fill[ccMRE, opacity=0.82]    (axis cs:0.2801,0.5486) circle (1.2pt);
\fill[ccMRE, opacity=0.82]  (axis cs:0.8528,0.9871) circle (1.2pt);
\fill[ccMRE, opacity=0.82]    (axis cs:0.5040,0.5436) circle (1.2pt);
\fill[ccMRE, opacity=0.82]  (axis cs:0.4379,0.8804) circle (1.2pt);
\fill[ccMRE, opacity=0.82]  (axis cs:0.7423,0.1210) circle (1.2pt);
\fill[ccMRE, opacity=0.82]  (axis cs:0.1187,0.1077) circle (1.2pt);
\fill[ccMRE, opacity=0.82]   (axis cs:0.0044,0.3278) circle (1.2pt);
\fill[ccMRE, opacity=0.82]   (axis cs:0.8837,0.5244) circle (1.2pt);
\fill[ccMRE, opacity=0.82]    (axis cs:0.1518,0.5552) circle (1.2pt);
\fill[ccMRE, opacity=0.82]  (axis cs:0.9849,0.8765) circle (1.2pt);
\fill[ccMRE, opacity=0.82]  (axis cs:0.3320,0.9215) circle (1.2pt);
\fill[ccMRE, opacity=0.82]  (axis cs:0.6698,0.1294) circle (1.2pt);
\fill[ccMRE, opacity=0.82]  (axis cs:0.0000,0.9993) circle (1.2pt);
\fill[ccMRE, opacity=0.82]  (axis cs:0.4311,0.1298) circle (1.2pt);
\fill[ccMRE, opacity=0.82]   (axis cs:0.8615,0.2127) circle (1.2pt);
\fill[ccMRE, opacity=0.82]   (axis cs:0.3107,0.6961) circle (1.2pt);
\fill[ccMRE, opacity=0.82]   (axis cs:0.1408,0.8887) circle (1.2pt);
\fill[ccMRE, opacity=0.82]   (axis cs:0.9774,0.7941) circle (1.2pt);
\fill[ccMRE, opacity=0.82]   (axis cs:0.5546,0.2972) circle (1.2pt);
\fill[ccMRE, opacity=0.82]  (axis cs:0.8025,0.0097) circle (1.2pt);
\fill[ccMRE, opacity=0.82]    (axis cs:0.6503,0.5083) circle (1.2pt);
\fill[ccMRE, opacity=0.82]   (axis cs:0.2191,0.3875) circle (1.2pt);
\fill[ccMRE, opacity=0.82]  (axis cs:0.8920,0.0000) circle (1.2pt);
\fill[ccMRE, opacity=0.82]   (axis cs:0.6685,0.7526) circle (1.2pt);
\fill[ccMRE, opacity=0.82]   (axis cs:0.7435,0.2268) circle (1.2pt);
\fill[ccMRE, opacity=0.82]    (axis cs:0.1132,0.6184) circle (1.2pt);
\fill[ccMRE, opacity=0.82]   (axis cs:1.0000,0.6434) circle (1.2pt);
\fill[ccMRE, opacity=0.82]   (axis cs:0.4011,0.3159) circle (1.2pt);
\fill[ccMRE, opacity=0.82]    (axis cs:0.5796,0.4502) circle (1.2pt);
\fill[ccMRE, opacity=0.82]  (axis cs:0.3367,1.0000) circle (1.2pt);
\fill[ccMRE, opacity=0.82]  (axis cs:0.0000,0.1062) circle (1.2pt);
\fill[ccMRE, opacity=0.82]  (axis cs:0.2782,0.8831) circle (1.2pt);
\fill[ccMRE, opacity=0.82]  (axis cs:0.9102,0.0197) circle (1.2pt);
\fill[ccMRE, opacity=0.82]   (axis cs:0.0775,0.2368) circle (1.2pt);
\fill[ccMRE, opacity=0.82]  (axis cs:0.4465,0.9711) circle (1.2pt);
\fill[ccMRE, opacity=0.82]    (axis cs:0.0161,0.5533) circle (1.2pt);
\fill[ccMRE, opacity=0.82]   (axis cs:0.9947,0.1121) circle (1.2pt);
\fill[ccMRE, opacity=0.82]  (axis cs:0.7969,0.8594) circle (1.2pt);
\fill[ccMRE, opacity=0.82]    (axis cs:0.6648,0.4517) circle (1.2pt);
\fill[ccMRE, opacity=0.82]   (axis cs:0.5742,0.7538) circle (1.2pt);
\fill[ccMRE, opacity=0.82]   (axis cs:0.2625,0.3606) circle (1.2pt);
\fill[ccMRE, opacity=0.82]   (axis cs:0.3144,0.6721) circle (1.2pt);
\node[font=\scriptsize\bfseries, color=ccMRE, fill=ccMRE!10,
  draw=ccMRE!45, line width=0.4pt, rounded corners=1.2pt,
  inner xsep=2.5pt, inner ysep=1.2pt, anchor=south, align=center]
  at (axis cs:0.5,1.07)
  {$F_{\mathrm{MRE}}^{\mathrm{trace}}{=}1.12$};
\node[font=\small\bfseries, anchor=south]
  at (axis cs:-0.01,1.12) {(d)};
\end{axis}

\node[font=\small, anchor=north, align=center]
  at ($(pC.south west)!0.5!(pC.south east)+(0,-0.30cm)$)
  {$d_{\mathrm{out}}$};
\node[font=\small, anchor=north, align=center]
  at ($(pD.south west)!0.5!(pD.south east)+(0,-0.30cm)$)
  {$d_{\mathrm{trace}}$};

\node[anchor=north, font=\scriptsize, inner sep=0pt]
  at ($(pC.south west)!0.5!(pD.south east)+(0,-0.72cm)$) {%
  \begin{tikzpicture}[baseline=0pt]
    \fill[ccMSA] (0,0) circle (1.8pt);
    \node[anchor=west, font=\scriptsize] at (0.15,0) {MSA pairs};
    \fill[ccMRE] (1.95,0) circle (1.8pt);
    \node[anchor=west, font=\scriptsize] at (2.10,0) {MRE pairs};
    \draw[cFIT, line width=1.2pt, opacity=0.72] (4.05,-0.03) -- (4.55,-0.03);
    \node[anchor=west, font=\scriptsize] at (4.62,0) {Best fit};
  \end{tikzpicture}};

\end{tikzpicture}%
}

\caption{\textbf{$F$ quantifies behavior-outcome coupling.} Each point pairs two Base Agent runs. Changing only the scorer separates MSA from MRE in output and response-free trace spaces.}
\label{fig:mantel-opposite-lifts}
\vspace{-8pt}
\end{wrapfigure}

%% file: figures/fig_fig_component-ablation.tex
\begin{figure}[t]
\centering
\includegraphics[width=.98\linewidth]{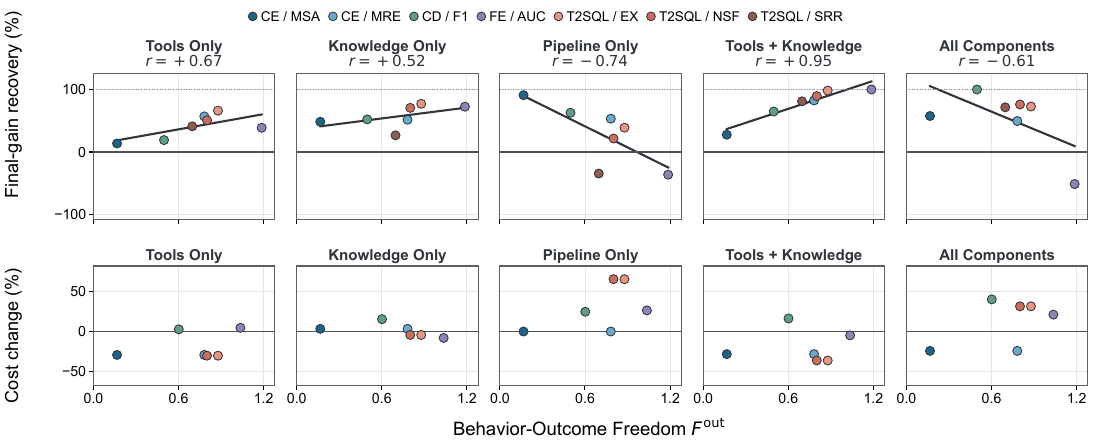}
\vspace{-3pt}
\caption{\textbf{Component ablation separates capability augmentation from structural constraint.} The performance row uses one scale throughout: each dataset--metric value is the fraction of Final AdaSkill's improvement over the Base Agent recovered by the displayed configuration, and points are metric means. Tools+Knowledge is the capability-matched Discard candidate; All Components is the matched Full candidate. Capability-only configurations rise with $F$, while Pipeline and All Components reverse this ordering. The cost row reports component-matched deployment-cost changes for all five configurations; metrics produced by the same execution share one task-level cost.}
\label{fig:component-ablation}
\end{figure}

%% file: figures/fig_fig_pipeline-treatment.tex
\begin{figure}[t]
\centering
\includegraphics[width=.96\linewidth]{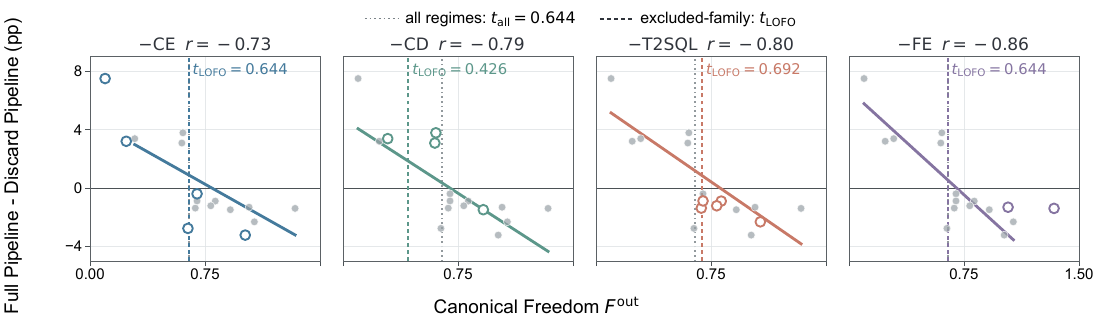}
\vspace{2pt}
\caption{\textbf{Behavior-outcome geometry orders the value of pipeline transfer.}
Each panel removes one complete workflow regime and refits the structural relationship on the remaining contrasts. Gray points form the training fold; open colored points are the excluded regime and do not affect the fitted line. The gray dotted reference is the all-regime boundary $t_{\rm all}=0.644$; colored dashed lines mark the independently calibrated LOFO boundaries. Excluding CE, CD, T2SQL, and FE gives $t_{\rm LOFO}=0.644$, $0.426$, $0.692$, and $0.644$. Every fold preserves the negative ordering ($r=-0.86$ to $-0.73$). Across all 16 matched contrasts, $r=-0.797$ ($p=0.00022$) and ROC AUC $=1.000$ (exact $p=0.0002289$).}
\label{fig:pipeline-treatment}
\end{figure}

%% file: figures/fig_fig_lofo-cost-pair.tex
\begin{figure}[t]
  \centering
  \input{figures/fig_fig_lofo-policy.tex}
  \hfill
  \input{figures/fig_fig_cost-amortization.tex}
\end{figure}

%% file: figures/fig_fig_lofo-policy.tex
\begin{minipage}[t]{0.485\textwidth}
\centering
\vspace{0pt}
\definecolor{lofoTeal}{HTML}{2A8C82}
\definecolor{lofoBlue}{HTML}{557F9E}
\definecolor{lofoSlate}{HTML}{7A838B}
\definecolor{lofoGold}{HTML}{CE8733}
\definecolor{lofoInk}{HTML}{30343B}
\definecolor{lofoGrid}{HTML}{D9E0E3}
\resizebox{0.87\linewidth}{!}{%
\begin{tikzpicture}[font=\sffamily]
  \path[use as bounding box] (-3.25,-3.02) rectangle (3.25,3.12);

  \begin{scope}[shift={(0,.18)},scale=.82]
    \foreach \r in {.153,.766,1.378,1.991,2.450}{
      \draw[lofoGrid,line width=.42pt] (90:\r)
      \foreach \a in {38.571,-12.857,-64.286,-115.714,-167.143,-218.571}
        {-- (\a:\r)} -- cycle;
    }
    \foreach \a in {90,38.571,-12.857,-64.286,-115.714,-167.143,-218.571}
      \draw[lofoGrid,line width=.38pt] (0,0)--(\a:2.45);

    \node[font=\tiny,text=lofoInk!58,anchor=e] at (180:.153) {0};
    \node[font=\tiny,text=lofoInk!58,anchor=e] at (180:.766) {20};
    \node[font=\tiny,text=lofoInk!58,anchor=e] at (180:1.378) {40};
    \node[font=\tiny,text=lofoInk!58,anchor=e] at (180:1.991) {60};
    \node[font=\tiny,text=lofoInk!58,anchor=e] at (180:2.450) {75};

    \path[draw=lofoBlue,line width=.9pt,fill=lofoBlue,fill opacity=.055]
      (90:1.455)--(38.571:.654)--(-12.857:1.259)--(-64.286:.663)--
      (-115.714:.804)--(-167.143:.851)--(-218.571:.065)--cycle;
    \path[draw=lofoSlate,line width=.9pt,densely dashed]
      (90:.773)--(38.571:1.046)--(-12.857:.915)--(-64.286:.765)--
      (-115.714:1.087)--(-167.143:.946)--(-218.571:.241)--cycle;
    \path[draw=lofoTeal,line width=1.55pt,fill=lofoTeal,fill opacity=.11]
      (90:2.426)--(38.571:1.160)--(-12.857:.915)--(-64.286:.733)--
      (-115.714:1.217)--(-167.143:1.065)--(-218.571:.241)--cycle;
    \path[draw=lofoGold,line width=1.2pt,dash pattern=on 4pt off 2pt]
      (90:2.426)--(38.571:1.160)--(-12.857:1.259)--(-64.286:.769)--
      (-115.714:1.313)--(-167.143:1.131)--(-218.571:.241)--cycle;

    \foreach \a/\r in {90/1.455,38.571/.654,-12.857/1.259,-64.286/.663,-115.714/.804,-167.143/.851,-218.571/.065}
      \fill[lofoBlue] (\a:\r) circle (1.05pt);
    \foreach \a/\r in {90/.773,38.571/1.046,-12.857/.915,-64.286/.765,-115.714/1.087,-167.143/.946,-218.571/.241}
      \fill[lofoSlate] (\a:\r) circle (1.05pt);
    \foreach \a/\r in {90/2.426,38.571/1.160,-12.857/.915,-64.286/.733,-115.714/1.217,-167.143/1.065,-218.571/.241}
      \fill[lofoTeal] (\a:\r) circle (1.45pt);
    \foreach \a/\r in {90/2.426,38.571/1.160,-12.857/1.259,-64.286/.769,-115.714/1.313,-167.143/1.131,-218.571/.241}
      \fill[lofoGold] (\a:\r) circle (1.15pt);

    \node[font=\scriptsize\bfseries,text=lofoInk,anchor=south] at (90:2.66) {CE--MSA};
    \node[font=\scriptsize\bfseries,text=lofoInk,anchor=south west] at (38.571:2.62) {CE--MRE};
    \node[font=\scriptsize\bfseries,text=lofoInk,anchor=west] at (-12.857:2.62) {CD--F1};
    \node[font=\scriptsize\bfseries,text=lofoInk,anchor=north west] at (-64.286:2.60) {T2SQL--EX};
    \node[font=\scriptsize\bfseries,text=lofoInk,anchor=north east] at (-115.714:2.60) {T2SQL--NSF};
    \node[font=\scriptsize\bfseries,text=lofoInk,anchor=east] at (-167.143:2.62) {T2SQL--SRR};
    \node[font=\scriptsize\bfseries,text=lofoInk,anchor=south east] at (-218.571:2.62) {FE--AUC};
  \end{scope}

  \begin{scope}[shift={(-2.82,-2.70)}]
    \draw[lofoBlue,line width=.9pt] (0,0)--(.30,0);
    \node[font=\tiny,text=lofoInk,anchor=west] at (.36,0) {Full};
    \draw[lofoSlate,line width=.9pt,densely dashed] (1.15,0)--(1.45,0);
    \node[font=\tiny,text=lofoInk,anchor=west] at (1.51,0) {Discard};
    \draw[lofoTeal,line width=1.55pt] (2.69,0)--(2.99,0);
    \node[font=\tiny\bfseries,text=lofoTeal!82!black,anchor=west] at (3.05,0) {LOFO};
    \draw[lofoGold,line width=1.2pt,dash pattern=on 4pt off 2pt] (4.08,0)--(4.38,0);
    \node[font=\tiny\bfseries,text=lofoGold!80!black,anchor=west] at (4.44,0) {Full-data};
  \end{scope}
\end{tikzpicture}
}
\vspace{-2pt}
\captionof{figure}{\textbf{Frozen routing across excluded workflow regimes.}
Seven metric-module decisions show Base-to-ceiling headroom recovered. SRR defaults to Discard; native outcomes are in Appendix Table~\ref{tab:f-economics}.}
\label{fig:lofo-policy}
\end{minipage}

%% file: figures/fig_fig_cost-amortization.tex
\begin{minipage}[t]{0.485\textwidth}
  \centering
  \vspace{0pt}
  \includegraphics[width=\linewidth]{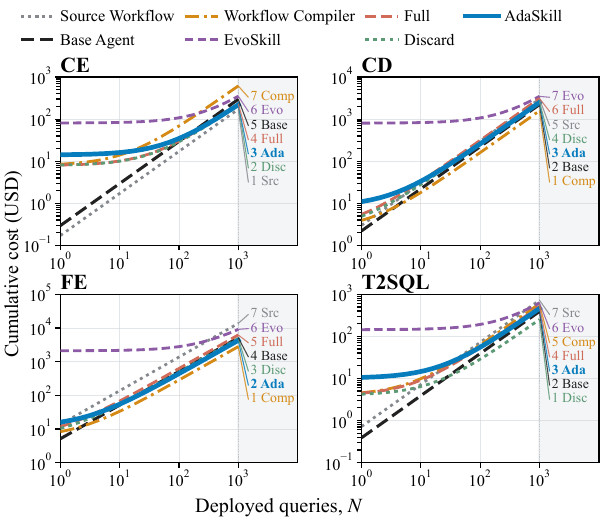}
  \vspace{-11pt}
  \captionof{figure}{\textbf{Cost amortizes with deployment.} Total cost follows $C_0+Nc$ across the four task families. AdaSkill ranks second or third at $N=10^3$; right-edge labels show the complete low-to-high cost ordering.}
  \label{fig:cost-amortization}
\end{minipage}

%% file: figures/fig_fig_heatmap-MN-wrap.tex
\begin{wrapfigure}[15]{R}{0.53\linewidth}
  \vspace{-8pt}
  \input{figures/fig_fig_heatmap-MN.tex}
  \vspace{-8pt}
\end{wrapfigure}

%% file: sections/06_limitations.tex
\section{Limitations}
\label{sec:limitations}

Our conclusions are limited to structured data-science workflows with explicit pipeline control and objective evaluation. Behavior-Outcome Freedom is metric-, agent-, representation-, and protocol-conditioned and requires meaningful outcome variation across Base Agent runs. Extending the framework to interactive or human-scored tasks will require corresponding behavior representations and outcome models.


%% file: sections/07_conclusion.tex
\section{Conclusion}

Multi-agent systems (MAS) for structured data-science tasks rely on workflows that externalize analytical control, but distilling these workflows into single-agent skills requires deciding which parts should survive transfer. We show that capability resources and pipeline guidance obey different mechanisms: the former expand what an agent can do, while the latter act as structural interventions whose value depends on behavior--outcome geometry. Behavior-Outcome Freedom ($F$) captures this geometry, and AdaSkill uses it to preserve validated capability, remove runtime coordination, and selectively inherit pipeline guidance. Across 11 datasets, this design achieves strong performance with substantially lower deployment overhead.


%% file: sections/appendix.tex
\appendix

\section{Theoretical Scope and Proofs}
\label{app:theory-section}
\label{app:theory}

This appendix develops the theory in three layers. It proves the Structural-Lift Decomposition, establishes the geometric determinacy bound for $F$, and proves Signed Anchor-Rank Transfer for arbitrary scores. A calibration corollary converts the ordinal result to continuous metric-scale lift, with the binary successful-anchor result as an exact special case. We then give the positive-orientation question-conditioned corollary and the sign, scale, and support impossibility results that delimit any $F$-only guarantee.

\subsection{Formal Setup and Statements}

Let $X$ denote the task instance, $B$ a pipeline-controlled behavioral representation, and $Y\in\mathbb{R}$ a higher-is-better score. The baseline factorizes as $P_0(dx,db,dy)=\nu(dx)\mu_0(db\mid x)K(dy\mid x,b)$, while a structural intervention replaces $\mu_0$ by $\mu_\pi$ and leaves the conditional outcome kernel fixed. Assume $\mu_\pi(\cdot\mid x)\ll\mu_0(\cdot\mid x)$, let $w_\pi=d\mu_\pi/d\mu_0$ admit a jointly measurable $L^2(P_0)$ version, and assume $Y\in L^2(P_0)$. Define
\begin{align}
  m(x,b)&=\mathbb{E}_0[Y\mid X=x,B=b],
  &\mu(x)&=\mathbb{E}_0[Y\mid X=x],\\
  G^2&=\mathbb{E}_0\{m(X,B)-\mu(X)\}^2,
  &S_\pi^2&=\mathbb{E}_0(w_\pi-1)^2=\chi_\pi^2,\\
  a_\pi&=\frac{\mathbb{E}_0[(w_\pi-1)(m-\mu)]}{S_\pi G},
  &&
  \label{eq:structural-components}
\end{align}
with $a_\pi=0$ when $S_\pi G=0$.

\begin{theorem}[Structural-Lift Decomposition]
\label{thm:structural-lift}
Under the conditions above,
\begin{equation}
  \mathrm{Lift}(\pi)
  :=\mathbb{E}_{P_\pi}Y-\mathbb{E}_{P_0}Y
  =a_\pi S_\pi G,
  \qquad |\mathrm{Lift}(\pi)|\leq S_\pi G.
  \label{eq:structural-lift}
\end{equation}
For conditionally independent $B,B'\sim\mu_0(\cdot\mid X)$,
\begin{equation}
  G^2=\frac12\mathbb{E}\{m(X,B)-m(X,B')\}^2.
  \label{eq:landscape-relief}
\end{equation}
\end{theorem}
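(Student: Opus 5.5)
The plan is to write the lift as a change-of-measure integral, reduce it to a covariance under $P_0$, and then apply Cauchy--Schwarz.

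\textbf{Step 1 (lift as a weighted baseline expectation).} The intervention changes only the behavior law. Hence $P_\pi(dx,db,dy)=\nu(dx)\,w_\pi(x,b)\,\mu_0(db\mid x)K(dy\mid x,b)$, and so $\mathbb{E}_{P_\pi}Y=\mathbb{E}_0[w_\pi Y]$. The assumptions $w_\pi,Y\in L^2(P_0)$ make this integrable by Cauchy--Schwarz. Conditioning on $(X,B)$ gives $\mathbb{E}_0[w_\pi Y]=\mathbb{E}_0[w_\pi m(X,B)]$, because $w_\pi$ is $(X,B)$-measurable and the kernel $K$ is shared.

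\textbf{Step 2 (centering).} We have $\mathbb{E}_0[w_\pi\mid X]=\int w_\pi\,d\mu_0=\mu_\pi(\text{all}\mid X)=1$ almost surely. Therefore $\mathbb{E}_0[(w_\pi-1)\mu(X)]=\mathbb{E}_0[\mu(X)\,\mathbb{E}_0(w_\pi-1\mid X)]=0$. Also $\mathbb{E}_0[m]=\mathbb{E}_0[Y]$. Combining these,
\[
\mathrm{Lift}(\pi)=\mathbb{E}_0[(w_\pi-1)m]=\mathbb{E}_0[(w_\pi-1)(m-\mu)].
\]
This step is where the conditional normalization of $w_\pi$ is essential. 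Subtracting $\mu(X)$, rather than the global mean, is what makes the relief $G$ conditional on $X$.

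\textbf{Step 3 (factorization and bound).} If $S_\pi G>0$, the definition of $a_\pi$ gives $\mathrm{Lift}=a_\pi S_\pi G$ directly. If $S_\pi G=0$, then either $w_\pi=1$ almost surely or $m=\mu$ almost surely, so the integrand vanishes and $\mathrm{Lift}=0=a_\pi S_\pi G$. Cauchy--Schwarz in $L^2(P_0)$ gives $|\mathbb{E}_0[(w_\pi-1)(m-\mu)]|\le S_\pi G$, equivalently $|a_\pi|\le1$.

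\textbf{Step 4 (pairwise form of $G^2$).} Take $B,B'$ conditionally i.i.d.\ given $X$. Expand
\[
\{m(X,B)-m(X,B')\}^2=\{(m(X,B)-\mu)-(m(X,B')-\mu)\}^2 .
\]
Conditional on $X$, the cross term factorizes into $\mathbb{E}[m(X,B)-\mu\mid X]^2=0$, since $\mathbb{E}_0[m(X,B)\mid X]=\mu(X)$ by the tower property. The two squared terms each contribute $G^2$ after taking outer expectations, which yields $2G^2$.

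\textbf{Main obstacle.} The only delicate point is measure-theoretic: justifying the disintegration $P_\pi=w_\pi P_0$ with a jointly measurable version of the Radon--Nikodym derivative, and ensuring $\mathbb{E}_0[w_\pi\mid X]=1$ holds $\nu$-a.s. Both are supplied by the standing assumptions, so I would state them explicitly and not re-derive them.
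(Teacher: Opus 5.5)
Your proof is correct and follows the paper's argument essentially step for step: change of measure to $\mathbb{E}_0[(w_\pi-1)m]$, recentering by $\mu(X)$ via $\mathbb{E}_0[w_\pi-1\mid X]=0$, Cauchy--Schwarz in $L^2(P_0)$, and the conditional-variance identity $\mathbb{E}[\{m(X,B)-m(X,B')\}^2\mid X]=2\operatorname{Var}(m(X,B)\mid X)$ for the pairwise form. Your explicit handling of the degenerate case $S_\pi G=0$ is a small addition that the paper leaves implicit in its convention $a_\pi=0$.
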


For the pair variables $U$ and $V$ in Eq.~\eqref{eq:population-freedom}, let $\widetilde R_U$ and $\widetilde R_V$ be their centered, variance-one population mid-ranks and define
\begin{equation}
  \Gamma_{\mathrm{rank}}
  :=\operatorname{Var}\!\left(\mathbb{E}[\widetilde R_V\mid U]\right).
  \label{eq:geometric-determinacy}
\end{equation}

\begin{proposition}[Freedom Probes Geometric Determinacy]
\label{prop:geometric-determinacy}
Whenever both rank variables are non-degenerate,
\begin{equation}
  0\leq\rho_S(U,V)^2\leq\Gamma_{\mathrm{rank}}\leq1,
  \qquad 1-\Gamma_{\mathrm{rank}}\leq2F-F^2.
  \label{eq:geometric-bound}
\end{equation}
The first bound is tight if and only if $\mathbb{E}[\widetilde R_V\mid U]=\rho_S(U,V)\widetilde R_U$ almost surely.
\end{proposition}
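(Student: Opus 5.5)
The bound rests on the observation that $\rho_S(U,V)$ is a Pearson correlation between standardized ranks, so it can be read as an $L^2$ inner product. With $\widetilde R_U,\widetilde R_V$ centered and of unit variance, we have $\rho_S(U,V)=\mathbb{E}[\widetilde R_U\widetilde R_V]$. The mid-rank transform $R_U(u)=\Pr(U<u)+\tfrac12\Pr(U=u)$ is a deterministic function of $U$, so $\widetilde R_U$ is $\sigma(U)$-measurable. The tower property then gives
\[
\rho_S=\mathbb{E}\bigl[\widetilde R_U\,\mathbb{E}[\widetilde R_V\mid U]\bigr].
\]
Write $g(U)=\mathbb{E}[\widetilde R_V\mid U]$. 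This $g$ is the $L^2$ projection of $\widetilde R_V$ onto $L^2(\sigma(U))$, and it has mean zero because $\widetilde R_V$ is centered. Hence $\operatorname{Var}(g)=\mathbb{E}[g^2]=\Gamma_{\mathrm{rank}}$.

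Next, apply Cauchy--Schwarz together with $\mathbb{E}[\widetilde R_U^2]=1$:
\[
\rho_S^2\le \mathbb{E}[\widetilde R_U^2]\,\mathbb{E}[g^2]=\Gamma_{\mathrm{rank}}.
\]
The lower bound $0\le\rho_S^2$ is trivial. For the upper bound, the law of total variance gives $\Gamma_{\mathrm{rank}}=\operatorname{Var}(g)\le\operatorname{Var}(\widetilde R_V)=1$; equivalently, conditional expectation is a contraction in $L^2$.

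Tightness in Cauchy--Schwarz holds exactly when $g$ and $\widetilde R_U$ are linearly dependent almost surely. Since both are centered, this means $g=c\,\widetilde R_U$ for some constant $c$. Taking the inner product with $\widetilde R_U$ identifies $c=\mathbb{E}[g\widetilde R_U]=\rho_S$. Conversely, if $g=\rho_S\widetilde R_U$, then $\mathbb{E}[g^2]=\rho_S^2$ and equality holds. One edge case needs a remark. If $\Gamma_{\mathrm{rank}}=0$, then $g=0$, which forces $\rho_S=0$, and the stated characterization still holds with $c=0$. Non-degeneracy of $U$ is needed so that $\widetilde R_U$ has unit variance and the normalization is well defined.

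The second inequality is pure algebra from the definition $F=1-\rho_S$:
\[
2F-F^2=1-(1-F)^2=1-\rho_S^2.
\]
So $1-\Gamma_{\mathrm{rank}}\le 1-\rho_S^2$ is equivalent to the first bound. The only care points are confirming that the population mid-rank is a measurable function of the variable, so that the tower step is valid, and handling ties. Ties are absorbed by the mid-rank definition, so I expect no real obstacle; the main subtlety is the equality case when $g\equiv0$.
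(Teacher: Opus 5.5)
Your proposal is correct and follows the paper's proof essentially step for step: the tower property gives $\rho_S=\mathbb{E}[\widetilde R_U\,\mathbb{E}(\widetilde R_V\mid U)]$, Cauchy--Schwarz gives $\rho_S^2\le\Gamma_{\mathrm{rank}}$, the law of total variance gives $\Gamma_{\mathrm{rank}}\le1$, and the identity $2F-F^2=1-(1-F)^2=1-\rho_S^2$ finishes the argument. Your one addition is pairing $g=c\,\widetilde R_U$ with $\widetilde R_U$ to obtain $c=\rho_S(U,V)$; the paper leaves this step implicit by stating the equality case with a generic constant $c$, so it usefully connects the proof to the proposition exactly as stated.
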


Let $Z\sim P_0$, let $A$ and $C$ be the standardized population mid-ranks of $U$ and $V$, and fix a high-score anchor event $\mathcal H$ with $h=P_0(\mathcal H)>0$ and $Q=P_0(\cdot\mid\mathcal H)$. For $P_\pi\ll P_0$, define
\begin{equation}
  \Delta_\pi
  :=\mathbb{E}_{P_0\otimes Q}A(Z,Z^\star)
  -\mathbb{E}_{P_\pi\otimes Q}A(Z,Z^\star),
  \qquad
  \Lambda_\pi
  :=\mathbb{E}_{P_0\otimes Q}C(Z,Z^\star)
  -\mathbb{E}_{P_\pi\otimes Q}C(Z,Z^\star).
  \label{eq:anchor-contraction}
\end{equation}

\begin{theorem}[Signed Anchor-Rank Transfer]
\label{thm:anchor-transfer}
If $\chi_\pi^2=\mathbb{E}_0(w_\pi-1)^2<\infty$, then for either $\kappa\in\{-1,+1\}$,
\begin{equation}
  \left|\Lambda_\pi-\kappa\Delta_\pi\right|
  \leq
  \sqrt{\frac{2\chi_\pi^2\{1-\kappa\rho_S(U,V)\}}{h}}.
  \label{eq:anchor-rank-transfer}
\end{equation}
\end{theorem}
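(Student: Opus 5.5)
The plan is to write $\Lambda_\pi-\kappa\Delta_\pi$ as a single covariance against the likelihood ratio $w_\pi$, bound it by Cauchy--Schwarz, and then move the anchor distribution $Q$ back to $P_0$ at a cost of $1/h$.

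\textbf{Setup.} Throughout, $A(z,z^\star)=\widetilde R_U(d(\phi(z),\phi(z^\star)))$ and $C(z,z^\star)=\widetilde R_V(|s(z)-s(z^\star)|)$ are \emph{fixed} measurable functions of a pair. They are the centered, variance-one population mid-ranks of Eq.~\eqref{eq:geometric-determinacy}, computed once under the baseline pair law $P_0\otimes P_0$ and not recomputed under $P_\pi$. By construction, under $P_0\otimes P_0$ we have $\mathbb{E}A=\mathbb{E}C=0$, $\mathbb{E}A^2=\mathbb{E}C^2=1$ and $\mathbb{E}[AC]=\rho_S(U,V)$. This is exactly the tie-aware definition of $\rho_S$ following Eq.~\eqref{eq:population-freedom}.

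\textbf{Step 1 (single covariance).} Fix $\kappa\in\{-1,+1\}$ and set $D_\kappa:=C-\kappa A$. By linearity of Eq.~\eqref{eq:anchor-contraction},
\[
\Lambda_\pi-\kappa\Delta_\pi=\mathbb{E}_{P_0\otimes Q}D_\kappa-\mathbb{E}_{P_\pi\otimes Q}D_\kappa .
\]
Because $P_\pi\ll P_0$ with density $w_\pi(Z)$, and the anchor coordinate $Z^\star\sim Q$ is the same under both laws, this becomes
\[
\Lambda_\pi-\kappa\Delta_\pi=-\mathbb{E}_{P_0\otimes Q}\bigl[(w_\pi(Z)-1)\,D_\kappa(Z,Z^\star)\bigr].
\]

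\textbf{Step 2 (Cauchy--Schwarz).} Apply Cauchy--Schwarz under $P_0\otimes Q$. The first coordinate has marginal $P_0$, so $\mathbb{E}_{P_0\otimes Q}(w_\pi-1)^2=\chi_\pi^2$. Hence
\[
|\Lambda_\pi-\kappa\Delta_\pi|\le \chi_\pi\bigl(\mathbb{E}_{P_0\otimes Q}D_\kappa^2\bigr)^{1/2}.
\]

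\textbf{Step 3 (change of measure on the anchor).} Since $dQ/dP_0=\mathbf 1_{\mathcal H}/h\le 1/h$ and $D_\kappa^2\ge0$,
\[
\mathbb{E}_{P_0\otimes Q}D_\kappa^2\le h^{-1}\,\mathbb{E}_{P_0\otimes P_0}D_\kappa^2 .
\]
Using the moment identities from the setup, with $\kappa^2=1$,
\[
\mathbb{E}_{P_0\otimes P_0}D_\kappa^2=1+1-2\kappa\rho_S(U,V)=2\{1-\kappa\rho_S(U,V)\}.
\]
Combining Steps 2 and 3 gives Eq.~\eqref{eq:anchor-rank-transfer}. Substituting $\rho_S=1-F$ recovers the form in Eq.~\eqref{eq:anchor-transfer-main}.

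\textbf{Main obstacle: measure-theoretic bookkeeping.} The algebra above is short; what needs care is checking that each step is licensed.
\begin{itemize}
\item \emph{Frozen ranks.} The identities $\mathbb{E}A^2=\mathbb{E}C^2=1$ and $\mathbb{E}AC=\rho_S$ hold for the baseline pair law $P_0\otimes P_0$, which is an i.i.d. pair of runs. The standardization must therefore use the frozen baseline mid-rank maps, and this convention should be stated explicitly. With ties, variance one holds because $\widetilde R$ standardizes $R_U$ as defined after Eq.~\eqref{eq:population-freedom}.
\item \emph{Integrability.} Finiteness of $\chi_\pi^2$ is assumed. Bounded ranks give $D_\kappa\in L^2(P_0\otimes P_0)$, and Step 3 then transfers this to $L^2(P_0\otimes Q)$. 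This justifies both the expectations in Step 1 and the Cauchy--Schwarz step.
\item \emph{Joint measurability.} $w_\pi(Z)$ must be jointly measurable with $D_\kappa$, which the stated $L^2(P_0)$ version provides.
\item \emph{Scope.} If $\mathcal H$ depends on the evaluation unit $X$, the same argument should be run conditionally on $X$ and then averaged.
\end{itemize}
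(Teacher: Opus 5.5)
Your proof is correct and follows the paper's argument step for step. You write the difference as $-\mathbb{E}_{P_0\otimes Q}[(w_\pi-1)(C-\kappa A)]$, apply Cauchy--Schwarz, pass from $Q$ to $P_0$ at cost $1/h$ via $dQ/dP_0=\mathbf{1}_{\mathcal H}/h$, and evaluate the second moment as $2\{1-\kappa\rho_S(U,V)\}$. The only cosmetic difference is that you work with $C-\kappa A$ directly, whereas the paper uses $E_\kappa=A-\kappa C$, bounds $\kappa\Lambda_\pi-\Delta_\pi$, and then invokes $|\kappa x|=|x|$.
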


\subsection{Proof of the Structural-Lift Decomposition}
\label{app:structural-lift-proof}

Write $g(X,B)=m(X,B)-\mu(X)$ and $h_\pi(X,B)=w_\pi(X,B)-1$. Because $w_\pi$ is a conditional density ratio,
\begin{equation}
  \mathbb{E}_0[h_\pi\mid X]=0.
  \label{eq:conditional-density-centering}
\end{equation}
The common conditional outcome kernel gives
\begin{align}
  \mathrm{Lift}(\pi)
  &=\mathbb{E}_0[w_\pi(X,B)m(X,B)]-\mathbb{E}_0m(X,B)\\
  &=\mathbb{E}_0[h_\pi m]
   =\mathbb{E}_0[h_\pi g].
  \label{eq:lift-inner-product}
\end{align}
The two factors in the last inner product have $L^2(P_0)$ norms $S_\pi$ and $G$. Their normalized inner product is $a_\pi$, proving Eq.~\eqref{eq:structural-lift}. Cauchy--Schwarz gives $|\mathrm{Lift}(\pi)|\leq S_\pi G$. When $S_\pi G>0$, equality holds if and only if $h_\pi=cg$ $P_0$-almost surely for a constant $c$ compatible with $w_\pi\geq0$.

For conditionally independent $B,B'\sim\mu_0(\cdot\mid X)$,
\begin{align}
  \mathbb{E}[\{m(X,B)-m(X,B')\}^2\mid X]
  &=2\operatorname{Var}(m(X,B)\mid X).
\end{align}
Averaging over $X$ proves Eq.~\eqref{eq:landscape-relief}. The bound is locally sharp. If $G>0$ and $g\in L^\infty(P_0)$, then $w_\lambda=1+\lambda g$ is a valid conditional density ratio for every $0<|\lambda|<1/\lVert g\rVert_\infty$, and it satisfies $S_\lambda=|\lambda|G$, $a_\lambda=\operatorname{sign}(\lambda)$, and $\mathrm{Lift}(\lambda)=\lambda G^2$.

The fixed-kernel assumption is substantive. If the intervention changes the conditional mean from $m_0$ to $m_\pi$, then
\begin{equation}
  \mathrm{Lift}(\pi)
  =\mathbb{E}_0[(w_\pi-1)(m_0-\mu_0)]
  +\mathbb{E}_{P_\pi}[m_\pi-m_0].
  \label{eq:capability-remainder}
\end{equation}
The second term captures changes within the same represented behavior. Off-support behavior contributes an additional unidentified term. Theorem~\ref{thm:structural-lift} therefore applies when pipeline effects are mediated by $B$; unrestricted tool or knowledge expansion and pipeline effects omitted by the representation require the remainder.

\subsection{Proof of Geometric Determinacy}
\label{app:geometric-determinacy-proof}

Let $X_U=\widetilde R_U$ and $X_V=\widetilde R_V$. Both have mean zero and variance one. Since $X_U$ is measurable with respect to $U$,
\begin{equation}
  \rho_S(U,V)
  =\mathbb{E}(X_UX_V)
  =\mathbb{E}\!\left[X_U\mathbb{E}(X_V\mid U)\right].
\end{equation}
Cauchy--Schwarz yields
\begin{equation}
  \rho_S(U,V)^2
  \leq\mathbb{E}X_U^2\,
  \mathbb{E}\!\left[\{\mathbb{E}(X_V\mid U)\}^2\right]
  =\Gamma_{\mathrm{rank}}.
\end{equation}
Because $\mathbb{E}\{\mathbb{E}(X_V\mid U)\}=0$, the second moment above equals $\operatorname{Var}\{\mathbb{E}(X_V\mid U)\}$. The law of total variance gives $0\leq\Gamma_{\mathrm{rank}}\leq\operatorname{Var}(X_V)=1$. Substituting $\rho_S=1-F$ proves Eq.~\eqref{eq:geometric-bound}. Equality holds if and only if $\mathbb{E}(X_V\mid U)=cX_U$ almost surely for a constant $c$, equivalently when the unstandardized conditional rank expectation is affine in $R_U$.

The canonical question-conditioned mean inherits a corresponding guarantee. Applying the proposition to each identifiable question and then Jensen's inequality gives
\begin{equation}
  \mathbb{E}_q\Gamma_q
  \geq \mathbb{E}_q(1-F_q)^2
  \geq (1-\mathbb{E}_qF_q)^2.
  \label{eq:mean-geometric-determinacy}
\end{equation}
This is the same mean aggregate consumed by the converter and used by the structural analysis.

This result separates signed monotone coupling from optimal $L^2$ prediction of outcome-distance rank from behavioral distance. $\Gamma_{\mathrm{rank}}$ is the corresponding correlation ratio, while Spearman correlation uses the single predictor $\widetilde R_U$. Consequently, $F\approx1$ indicates weak monotone coupling, but does not exclude a nonlinear dependence. When $F>1$, the monotone component is reversed; a value near two can be highly predictive in the anti-concordant direction. Writing $D=|\rho_S|$ and $O=\operatorname{sign}(\rho_S)\in\{-1,0,+1\}$ makes the distinction explicit: $D$ measures unsigned monotone coupling strength, while $O$ gives positive, absent, or negative orientation. The next theorem uses the two nonzero orientations explicitly.

\subsection{Proof of Signed Anchor-Rank Transfer}

Recall that $A$ and $C$ are standardized population mid-ranks of
$U=d(\phi(Z),\phi(Z'))$ and $V=|s(Z)-s(Z')|$, respectively. Standardization gives
$\mathbb{E}A=\mathbb{E}C=0$, $\mathbb{E}A^2=\mathbb{E}C^2=1$, and
$\mathbb{E}(AC)=\rho_S(U,V)=1-F$. Therefore, for either $\kappa\in\{-1,+1\}$,
\begin{equation}
  \mathbb{E}(A-\kappa C)^2
  =2\{1-\kappa\rho_S(U,V)\}.
  \label{eq:rank-stress-proof}
\end{equation}
For $\kappa=+1$ this is $2F$; for $\kappa=-1$ it is $2(2-F)$. The identity includes discrete variables because the population definition uses mid-ranks.

\begin{proof}[Proof of Theorem~\ref{thm:anchor-transfer}]
Fix $\kappa\in\{-1,+1\}$ and let $E_\kappa=A-\kappa C$. Let $\mathcal H$ be the fixed anchor event, $h=P_0(\mathcal H)>0$, and $Z^\star\sim Q=P_0(\cdot\mid\mathcal H)$. Conditioning the second baseline run on the anchor event and using Eq.~\eqref{eq:rank-stress-proof} yields
\begin{equation}
  \mathbb{E}_{\mathbb{P}_0\otimes Q}E_\kappa^2
  =\frac{\mathbb{E}_{\mathbb{P}_0\otimes\mathbb{P}_0}
  [E_\kappa^2\mathbf{1}\{Z^\star\in\mathcal H\}]}{h}
  \leq\frac{2\{1-\kappa\rho_S(U,V)\}}{h}.
  \label{eq:anchor-conditioned-stress}
\end{equation}
Using $E_\kappa=A-\kappa C$ and $d\mathbb{P}_\pi=w\,d\mathbb{P}_0$ in the first coordinate,
\begin{align}
  \kappa\Lambda_\pi-\Delta_\pi
  &=\mathbb{E}_{\mathbb{P}_0\otimes Q}[(w-1)E_\kappa].
  \label{eq:anchor-error-change}
\end{align}
Cauchy--Schwarz and Eq.~\eqref{eq:anchor-conditioned-stress} give
\begin{equation}
  \left|\kappa\Lambda_\pi-\Delta_\pi\right|
  \leq
  \sqrt{\mathbb{E}_{\mathbb{P}_0}(w-1)^2}
  \sqrt{\mathbb{E}_{\mathbb{P}_0\otimes Q}E_\kappa^2}
  \leq\sqrt{\frac{2\chi_\pi^2\{1-\kappa\rho_S(U,V)\}}{h}}.
\end{equation}
Using $|\kappa x|=|x|$ proves Eq.~\eqref{eq:anchor-rank-transfer}.
\end{proof}

\subsection{Calibration to Continuous Metric Lift}

Signed Anchor-Rank Transfer is exact whenever both distance-rank variables are non-degenerate because $F$ and $\Lambda_\pi$ live on the same ordinal score-distance scale. The following corollary quantifies the only additional step required to recover mean lift on a continuous metric's original scale.

\begin{corollary}[Calibrated Continuous-Score Transfer]
\label{cor:continuous-anchor-transfer}
Under Theorem~\ref{thm:anchor-transfer}, let
\begin{equation}
  g_{\mathcal H}(y)
  :=\mathbb{E}_{Z^\star\sim Q}[C(Z,Z^\star)\mid Y=y]
  =a_{\mathcal H}-b_{\mathcal H}y+\varepsilon_{\mathcal H}(y),
  \qquad b_{\mathcal H}>0,
\end{equation}
where $a_{\mathcal H}-b_{\mathcal H}y$ is the $L^2(P_0)$ affine projection and
$\sigma_{\mathcal H}^2=\mathbb{E}_0\varepsilon_{\mathcal H}(Y)^2$. Then, for either $\kappa\in\{-1,+1\}$,
\begin{equation}
  \left|\mathrm{Lift}(\pi)-\frac{\kappa\Delta_\pi}{b_{\mathcal H}}\right|
  \leq
  \frac{1}{b_{\mathcal H}}
  \left[
    \sqrt{\frac{2\chi_\pi^2\{1-\kappa\rho_S(U,V)\}}{h}}
    +\sqrt{\chi_\pi^2}\,\sigma_{\mathcal H}
  \right].
  \label{eq:continuous-anchor-transfer-app}
\end{equation}
\end{corollary}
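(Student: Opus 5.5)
The plan is to reduce the corollary to Theorem~\ref{thm:anchor-transfer} through one identity linking the score-distance-rank contraction $\Lambda_\pi$ to native-scale lift. That identity is $b_{\mathcal H}\,\mathrm{Lift}(\pi)=\Lambda_\pi+\mathbb{E}_0[(w_\pi-1)\varepsilon_{\mathcal H}(Y)]$. Given it, the triangle inequality together with Eq.~\eqref{eq:anchor-rank-transfer} and one further Cauchy--Schwarz step gives Eq.~\eqref{eq:continuous-anchor-transfer-app} after dividing by $b_{\mathcal H}>0$.

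\textbf{Step 1: rewrite $\Lambda_\pi$ through the score.} Fix the anchor law $Q=P_0(\cdot\mid\mathcal H)$ for the second coordinate $Z^\star$. The pair variable is $V=|s(Z)-s(Z^\star)|$, and $C$ is its standardized population mid-rank. For fixed $Z^\star$, the first run therefore enters $C(Z,Z^\star)$ only through its score $Y=s(Z)$. Integrating out $Z^\star\sim Q$ gives $\mathbb{E}_{Z^\star\sim Q}C(Z,Z^\star)=g_{\mathcal H}(Y)$, which is exactly the function in the statement. Write $d P_\pi=w_\pi\,dP_0$ in the first coordinate, as in the proof of Theorem~\ref{thm:anchor-transfer}. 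Then
\[
\Lambda_\pi=-\mathbb{E}_0[(w_\pi-1)\,g_{\mathcal H}(Y)].
\]

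\textbf{Step 2: insert the affine projection.} Substitute $g_{\mathcal H}(y)=a_{\mathcal H}-b_{\mathcal H}y+\varepsilon_{\mathcal H}(y)$. The constant term vanishes because $\mathbb{E}_0[w_\pi-1]=0$. The linear term becomes $b_{\mathcal H}\mathbb{E}_0[(w_\pi-1)Y]$, which equals $b_{\mathcal H}\mathrm{Lift}(\pi)$ by the first line of the lift computation in the proof of Theorem~\ref{thm:structural-lift}; this uses the fixed-kernel assumption and $Y\in L^2(P_0)$. Hence $\Lambda_\pi=b_{\mathcal H}\mathrm{Lift}(\pi)-\mathbb{E}_0[(w_\pi-1)\varepsilon_{\mathcal H}(Y)]$.

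\textbf{Step 3: combine the two error terms.} Rearranging Step~2 and subtracting $\kappa\Delta_\pi$ gives
\[
b_{\mathcal H}\mathrm{Lift}(\pi)-\kappa\Delta_\pi=(\Lambda_\pi-\kappa\Delta_\pi)+\mathbb{E}_0[(w_\pi-1)\varepsilon_{\mathcal H}(Y)].
\]
The first term is bounded by Eq.~\eqref{eq:anchor-rank-transfer}. The second is bounded by Cauchy--Schwarz, $|\mathbb{E}_0[(w_\pi-1)\varepsilon_{\mathcal H}(Y)]|\le\sqrt{\chi_\pi^2}\,\sigma_{\mathcal H}$. Dividing by $b_{\mathcal H}$ yields the claim for either orientation $\kappa$.

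\textbf{Main obstacle.} I expect the main difficulty to be Step~1 together with the lift identification. One has to check that the density ratio acting on the first run of the pair is the same $w_\pi$ whose action on $Y$ produces $\mathrm{Lift}(\pi)$. This requires $P_\pi\ll P_0$ on the full run $Z$, with the behavior-to-outcome kernel held fixed, so that $\mathbb{E}_{P_\pi}Y-\mathbb{E}_{P_0}Y=\mathbb{E}_0[(w_\pi-1)Y]$. It also requires that $g_{\mathcal H}(Y)$, $Y$ and $\varepsilon_{\mathcal H}(Y)$ are in $L^2(P_0)$. For $g_{\mathcal H}(Y)$ this holds because $C$ has unit variance; for $Y$ it is assumed; and for $\varepsilon_{\mathcal H}(Y)$ it follows because the $L^2$ affine projection is well defined. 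These conditions make every expectation finite and justify the Cauchy--Schwarz steps. I would also note that $b_{\mathcal H}>0$ is an assumption, reflecting that score-distance rank to high-score anchors decreases with score; it is not something derived in the proof.
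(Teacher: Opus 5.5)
Your proposal is correct and follows the paper's proof essentially line for line. The paper likewise writes $\Lambda_\pi=\mathbb{E}_{P_0}g_{\mathcal H}(Y)-\mathbb{E}_{P_\pi}g_{\mathcal H}(Y)=b_{\mathcal H}\mathrm{Lift}(\pi)-\mathbb{E}_{P_0}[(w-1)\varepsilon_{\mathcal H}(Y)]$, bounds the residual term by Cauchy--Schwarz, combines this with Theorem~\ref{thm:anchor-transfer} through the triangle inequality, and divides by $b_{\mathcal H}>0$; your remarks identifying the first-coordinate density ratio with the one defining $\mathrm{Lift}(\pi)$ and checking square-integrability make explicit what the paper leaves implicit.
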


\begin{proof}
Because $C$ depends on the score pair only through $V=|Y-Y^\star|$,
\begin{align}
  \Lambda_\pi
  &=\mathbb{E}_{P_0}g_{\mathcal H}(Y)-\mathbb{E}_{P_\pi}g_{\mathcal H}(Y)\\
  &=b_{\mathcal H}\mathrm{Lift}(\pi)
    -\mathbb{E}_{P_0}[(w-1)\varepsilon_{\mathcal H}(Y)].
\end{align}
Hence
\begin{equation}
  |\Lambda_\pi-b_{\mathcal H}\mathrm{Lift}(\pi)|
  \leq\sqrt{\chi_\pi^2}\,\sigma_{\mathcal H}
\end{equation}
by Cauchy--Schwarz. Combining this inequality with
$|\Lambda_\pi-\kappa\Delta_\pi|\leq
\sqrt{2\chi_\pi^2\{1-\kappa\rho_S(U,V)\}/h}$ from Theorem~\ref{thm:anchor-transfer}, then dividing by $b_{\mathcal H}$, proves the result.
\end{proof}

The corollary applies to bounded continuous metrics including MRE utility, F1, AUC, NSF, and SRR after their fixed higher-is-better orientation. It does not assume perfect-score observations. A score-threshold or quantile rule may be fixed from the baseline law before intervention evaluation; $b_{\mathcal H}$ and $\sigma_{\mathcal H}$ are then baseline-estimable. A small residual makes rank transfer informative on the native metric scale, while a large residual correctly exposes that an ordinal diagnostic cannot determine raw-scale gain. If $b_{\mathcal H}\leq0$, the chosen anchors do not define a positively calibrated metric direction and this metric-scale corollary abstains; the ordinal theorem remains valid.

\subsection{Exact Binary Specialization}

Let $Y\in\{0,1\}$, $p=P_0\{Y=1\}\in(0,1)$, and $\mathcal H=\{Y=1\}$. Because the anchor succeeds, $V(Z,Z^\star)=1-Y$. The standardized mid-rank $C$ therefore takes one value $c_0$ when $Y=1$ and another $c_1$ when $Y=0$. Since $V$ is Bernoulli with probability $q=2p(1-p)$ under the baseline product law,
\begin{equation}
  c_1-c_0=\frac{1}{\sqrt{q(1-q)}}=\frac{1}{\beta_p},
  \qquad
  \beta_p=\sqrt{2p(1-p)\{1-2p(1-p)\}}.
  \label{eq:binary-rank-gap}
\end{equation}
Thus $g_{\mathcal H}(Y)=c_1-Y/\beta_p$, so $b_{\mathcal H}=1/\beta_p$ and $\sigma_{\mathcal H}=0$. Equivalently,
\begin{equation}
  \Lambda_\pi
  =\mathbb{E}_{P_0\otimes Q}C
  -\mathbb{E}_{P_\pi\otimes Q}C
  =\frac{\mathrm{Lift}(\pi)}{\beta_p}.
  \label{eq:score-anchor-lift}
\end{equation}
Corollary~\ref{cor:continuous-anchor-transfer} therefore gives
\begin{equation}
  \left|\mathrm{Lift}(\pi)-\kappa\beta_p\Delta_\pi\right|
  \leq
  \beta_p\sqrt{\frac{2\chi_\pi^2\{1-\kappa\rho_S(U,V)\}}{p}}.
  \label{eq:anchor-transfer}
\end{equation}
The two orientations yield distinct transfer statements. For $\kappa=+1$, low $F$ gives $\mathrm{Lift}(\pi)\approx\beta_p\Delta_\pi$. For $\kappa=-1$, $F$ near two gives $\mathrm{Lift}(\pi)\approx-\beta_p\Delta_\pi$, or
\begin{equation}
  \left|\mathrm{Lift}(\pi)+\beta_p\Delta_\pi\right|
  \leq \beta_p\sqrt{\frac{2\chi_\pi^2(2-F)}{p}}.
  \label{eq:anti-anchor-transfer}
\end{equation}
In particular, if $F>1$ and
\begin{equation}
  \Delta_\pi>
  \sqrt{\frac{2\chi_\pi^2(2-F)}{p}},
  \label{eq:anti-concordant-harm-condition}
\end{equation}
then the upper side of Eq.~\eqref{eq:anti-anchor-transfer} is negative and $\mathrm{Lift}(\pi)<0$. Thus strong anti-concordance can certify that successful-anchor contraction is harmful. Values only slightly above one need not satisfy this certificate; they still provide no positive-orientation guarantee and are assigned to the discard side of AdaSkill's one-sided policy.

\subsection{Question-Conditioned Anchor Transfer}

For binary question-level metrics, the positive-transfer statement uses orientation $\kappa=+1$. A calibrated operating point $t<1$ excludes a metric-level statistic with nonpositive orientation; individual question-level estimates above one remain valid and enlarge the positive-transfer error term. Their opposite-sign implication is given separately by Eq.~\eqref{eq:anti-anchor-transfer}.

\begin{corollary}[Question-Conditioned Anchor Transfer]
\label{cor:question-anchor-transfer}
Let $\mathcal I$ be the population set of evaluation questions for which a binary score has non-degenerate behavioral and score-distance ranks and $0<p_x<1$. Let $\nu_{\mathcal I}=\nu(\cdot\mid\mathcal I)$. Suppose the binary specialization of Theorem~\ref{thm:anchor-transfer} with $\kappa=+1$ holds conditionally for $\nu_{\mathcal I}$-almost every $x$, with quantities $p_x$, $\beta_x$, $F_x$, $\Delta_{\pi,x}$, and finite $\chi_{\pi,x}^2$, and assume $\mathbb{E}_{\nu_{\mathcal I}}[\beta_x^2\chi_{\pi,x}^2/p_x]<\infty$. Then
\begin{align}
  \left|
  \mathbb{E}_{x\sim\nu_{\mathcal I}}\mathrm{Lift}_x(\pi)
  -\mathbb{E}_{x\sim\nu_{\mathcal I}}[\beta_x\Delta_{\pi,x}]
  \right|
  &\leq
  \mathbb{E}_{x\sim\nu_{\mathcal I}}
  \left[
  \beta_x\sqrt{\frac{2\chi_{\pi,x}^2F_x}{p_x}}
  \right] \\
  &\leq
  \sqrt{
  2\mathbb{E}_{x\sim\nu_{\mathcal I}}[F_x]
  \mathbb{E}_{x\sim\nu_{\mathcal I}}
  \left[\frac{\beta_x^2\chi_{\pi,x}^2}{p_x}\right]
  }.
  \label{eq:question-anchor-transfer}
\end{align}
\end{corollary}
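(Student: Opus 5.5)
The plan is to integrate the per-question binary bound over questions and then apply Cauchy--Schwarz in the question variable. For $\nu_{\mathcal I}$-almost every $x$, the binary specialization with $\kappa=+1$, that is Eq.~\eqref{eq:anchor-transfer} with $1-\rho_S=F_x$, gives
\begin{equation*}
  \bigl|\mathrm{Lift}_x(\pi)-\beta_x\Delta_{\pi,x}\bigr|
  \leq \beta_x\sqrt{2\chi_{\pi,x}^2F_x/p_x}.
\end{equation*}
This holds because $p_x\in(0,1)$ and both rank variables are non-degenerate on $\mathcal I$.

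\textbf{First inequality.} I would first confirm that both $\mathrm{Lift}_x(\pi)$ and $\beta_x\Delta_{\pi,x}$ are $\nu_{\mathcal I}$-integrable, so that the difference of expectations equals the expectation of the difference.
\begin{itemize}
  \item $\mathrm{Lift}_x$ is bounded, since the score is binary.
  \item The pointwise bound shows $\beta_x\Delta_{\pi,x}$ differs from $\mathrm{Lift}_x$ by at most $\beta_x\sqrt{2\chi_{\pi,x}^2F_x/p_x}$.
  \item That error term is integrable. Indeed $F_x\in[0,2]$, so its square is at most $4\beta_x^2\chi_{\pi,x}^2/p_x$, which is integrable by hypothesis; an $L^2$ function on a probability space is $L^1$.
\end{itemize}
Jensen's inequality $|\mathbb{E}f|\leq\mathbb{E}|f|$, followed by monotonicity of the integral applied to the pointwise bound, then gives the first line of Eq.~\eqref{eq:question-anchor-transfer}.

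\textbf{Second inequality.} Factor the integrand as $\sqrt{F_x}\cdot\sqrt{2\beta_x^2\chi_{\pi,x}^2/p_x}$ and apply Cauchy--Schwarz under $\nu_{\mathcal I}$:
\begin{equation*}
  \mathbb{E}\bigl[\sqrt{F_x}\sqrt{2\beta_x^2\chi^2_{\pi,x}/p_x}\bigr]
  \leq\sqrt{\mathbb{E}F_x}\,\sqrt{2\,\mathbb{E}[\beta_x^2\chi_{\pi,x}^2/p_x]}.
\end{equation*}
The right-hand side is exactly the second line of Eq.~\eqref{eq:question-anchor-transfer}. Here $F_x\ge0$ is essential: under $\kappa=+1$, values $F_x>1$ are still allowed and simply enlarge the error term, as noted before the corollary.

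\textbf{Main obstacle.} The one nontrivial step is measurability. The maps $x\mapsto p_x,\beta_x,F_x,\Delta_{\pi,x},\chi^2_{\pi,x}$ must be measurable so that the expectations are defined. I would obtain this from the disintegration $P_0=\nu\otimes\mu_0\otimes K$ and the joint measurability of $w_\pi$: each of these quantities is a conditional expectation, or a continuous function of conditional expectations, of jointly measurable integrands. The set $\mathcal I$ is then measurable as well.
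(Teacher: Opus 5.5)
Your proposal is correct and follows the paper's own argument. You apply the per-question binary bound (Eq.~\eqref{eq:anchor-transfer} with $\kappa=+1$), integrate over $\nu_{\mathcal I}$ using $|\mathbb{E}R|\leq\mathbb{E}|R|$, and then apply Cauchy--Schwarz to $\sqrt{F_x}$ and $\beta_x\sqrt{\chi_{\pi,x}^2/p_x}$; your integrability check via $F_x\leq 2$ and the moment hypothesis, together with your measurability remarks, spells out details the paper's two-line proof leaves implicit.
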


\begin{proof}
Apply the binary specialization in Eq.~\eqref{eq:anchor-transfer} conditionally on $x$, take expectations, and use $|\mathbb{E}R|\leq\mathbb{E}|R|$ for the first inequality. The second follows from Cauchy--Schwarz applied to $\sqrt{F_x}$ and $\beta_x\sqrt{\chi_{\pi,x}^2/p_x}$.
\end{proof}

The corollary bounds mean lift over the population-identifiable question set. Write $\alpha=\nu(\mathcal I)$ and let $R_{\mathcal I}$ denote its first displayed upper bound. Since binary lift has absolute value at most one, the full benchmark satisfies
\begin{equation}
  \left|
  \mathbb{E}_{x\sim\nu}\mathrm{Lift}_x(\pi)
  -\alpha\mathbb{E}_{x\sim\nu_{\mathcal I}}[\beta_x\Delta_{\pi,x}]
  \right|
  \leq \alpha R_{\mathcal I}+1-\alpha.
  \label{eq:full-benchmark-anchor-transfer}
\end{equation}
In finite samples, observed mixed questions estimate $\mathcal I$; AdaSkill aggregates their identifiable $F_q$ values with the canonical mean used by the routing policy.

\subsection{Basic Properties of the Protocol-Conditioned Statistic}

\begin{proposition}[Range, invariance, and degeneracy]
\label{prop:F-basic-properties}
For pair variables $U=d(\phi(Z),\phi(Z'))$ and $V=|s(Z)-s(Z')|$ whose mid-distribution ranks have nonzero variance:
\begin{enumerate}[label=(\roman*)]
  \item $F=1-\rho_S(U,V)\in[0,2]$;
  \item $F$ is invariant to strictly increasing transformations of $U$ or $V$, including positive affine rescaling of metric scores;
  \item if $U$ or $V$ is constant, $F$ is undefined;
  \item changing $d$, $\phi$, or the protocol-induced distribution $\mathbb{P}_0$ can change $F$.
\end{enumerate}
\end{proposition}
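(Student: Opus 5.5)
The plan is to verify the four items in order, working from the population definition of $\rho_S$ through mid-distribution ranks, $R_U(u)=\Pr(U<u)+\tfrac12\Pr(U=u)$, and likewise for $R_V$. The basic observation is that $\rho_S(U,V)=\operatorname{Corr}(R_U(U),R_V(V))$ is an ordinary Pearson correlation of two square-integrable random variables. These variables are bounded in $[0,1]$, and by hypothesis they have nonzero variance.

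For (i), I will apply Cauchy--Schwarz to the centered ranks. This gives $|\rho_S|\leq1$, and substituting into $F=1-\rho_S$ yields $F\in[0,2]$. This is the same argument used in the proof of Proposition~\ref{prop:geometric-determinacy}, where $\mathbb{E}(X_UX_V)$ was bounded.

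For (ii), let $T$ be strictly increasing and $U'=T(U)$. I will show that $R_{U'}(T(u))=R_U(u)$ for every $u$ in the support. The key steps are these:
\begin{itemize}
\item strict monotonicity gives $\{U'<T(u)\}=\{U<u\}$;
\item it also gives $\{U'=T(u)\}=\{U=u\}$;
\item hence $R_{U'}(U')=R_U(U)$ almost surely, and the correlation is unchanged.
\end{itemize}
The same argument applies to $V$. For a positive affine rescaling $s\mapsto cs+b$ with $c>0$, I will note that $V$ becomes $cV$, which is a strictly increasing map of $V$.

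The step I expect to be the main obstacle is (ii) when $T$ is defined only on the support of $U$. I also need to make sure the tie convention is preserved. I will handle both points by working pointwise on the support and using the fact that the image sets coincide.

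For (iii), suppose $U\equiv u_0$. Then $R_U(U)=\tfrac12$ almost surely, so its variance is zero and the Pearson correlation has a zero denominator. $F$ is therefore undefined, which is consistent with the non-degeneracy proviso in Eq.~\eqref{eq:population-freedom}. The same holds for $V$.

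For (iv), it suffices to exhibit small explicit examples, since the claim is only that such changes can alter $F$. I plan to use a two-point or three-point behavior law with binary score:
\begin{itemize}
\item take $\phi$ equal to the scored decision, so that $U$ and $V$ coincide and $F=0$;
\item then take $\phi$ constant on the scored classes but varying across a nuisance attribute independent of $s$, which gives $\rho_S=0$ and $F=1$ (or a reversed assignment giving $F>1$);
\item finally, reweight $\mathbb{P}_0$ between two configurations to show that the protocol law alone can move $F$.
\end{itemize}
The CE MSA-versus-MRE example in RQ1 can serve as the illustrative instance.
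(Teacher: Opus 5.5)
Your treatment of (i)--(iii) is the paper's proof with the details written out. The paper simply cites $\rho_S\in[-1,1]$, rank preservation under strictly increasing maps, and zero rank variance for a constant variable. Your checks are exactly what those one-liners compress: that $\{T(U)<T(u)\}=\{U<u\}$ and $\{T(U)=T(u)\}=\{U=u\}$, so mid-ranks, ties included, are unchanged, and that a constant $U$ has $R_U(U)\equiv\tfrac12$.

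The real difference is (iv). The paper justifies it only by noting that $d$, $\phi$, and $\mathbb{P}_0$ each enter the joint law of $(U,V)$. That shows $F$ is a functional of these objects, but it does not exhibit an actual change. Explicit witnesses, as you propose, genuinely prove the existential claim.

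To make the witnesses clean, fix one law $Z=(S,N)$ with $S$ and $N$ independent non-degenerate binary coordinates, score $s(Z)=S$, and indicator dissimilarity. Then $\phi(Z)=S$ gives $U=V$ and $F=0$, while $\phi(Z)=N$ gives $U\perp V$ and $F=1$, so only $\phi$ has changed.

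Your plan does not treat $d$ separately, but the same pair of witnesses covers it: take $\phi$ to be the identity and let $d$ compare either the $S$ or the $N$ coordinate. For $\mathbb{P}_0$, keep $\phi(Z)=(S,N)$ with the indicator distance and switch $N$ from degenerate to non-degenerate. The degenerate case gives $U=V$ and $F=0$; with both coordinates fair coins, the non-degenerate case gives $F=1-1/\sqrt3$.

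The parenthetical $F>1$ case is not needed for (iv).
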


\begin{proof}
Part (i) follows from $\rho_S\in[-1,1]$. Part (ii) follows because strictly increasing transformations preserve ranks. In Part (iii), a constant variable has zero rank variance, so the correlation denominator is zero. Part (iv) follows directly because each of $d$, $\phi$, and $\mathbb{P}_0$ enters the joint law of $(U,V)$ in Equation~\eqref{eq:population-freedom}.
\end{proof}

When $F>1$, $\rho_S<0$: behaviorally distant runs tend to be closer in score rank than behaviorally similar runs. This is anti-concordance, not independence. The unsigned quantity $1-|\rho_S|$ recognizes predictability near both endpoints but cannot route a contraction because it erases whether the predicted transfer sign is positive or negative. Signed Anchor-Rank Transfer shows that anchor contraction has the opposite leading effect as $F$ approaches two. Our empirical plots therefore place these observations on the monotone discard side. An intervention deliberately oriented against the source contraction defines a different policy.

\section{Experimental Design and Evaluation Details}
\label{app:experimental-design}

\subsection{Task and System Coverage}

Text-to-SQL constructs and repairs programs over schemas; Causal Estimation (CE) selects and configures estimators; Causal Discovery (CD) searches graphs under causal constraints; and Feature Engineering (FE) generates and evaluates executable artifacts. These regimes vary output type, scorer, search structure, and source control while exposing the same identifiable objects: executable behavior, objective outcomes, and task-qualified pipeline intervention. The corresponding source workflows are APEX-SQL~\citep{cao2026apex}, CAIS~\citep{verma2025cais}, MATMCD~\citep{shen2024matmcd}, and FELA~\citep{ouyang2025fela}.

Every source is a published, high-performing workflow designed and evaluated for its target task, supplying a substantive structural proposal rather than an arbitrary instruction set. Qualification maps APEX-SQL control to schema and executable-query metrics, CAIS to method selection and estimation error, MATMCD to graph F1, and FELA to downstream AUC; callable resources must also pass the target interface. These gates are fixed from source artifacts before any target skill is evaluated. In the matched intervention, Full and Discard retain the same validated tools and declarative knowledge and remove the same runtime orchestration; they differ only in whether task-qualified source-pipeline guidance is retained.

EvoSkill diagnoses training failures and iteratively rewrites a skill without decomposing source control or using $F$. EvoSkill results are available for CE, CD, and T2SQL. FE reports Base Agent, Workflow Compiler, AdaSkill, and FELA Skill, a tool-using agent executing the source FELA skill. Source Workflow rows retain their native orchestration and operating configurations. Appendix~\ref{app:case-study} follows the complete CAIS conversion from eight shared-state micro-tools to one skill.

\subsection{Metric Definitions}

For CE, Method Selection Accuracy is $\mathrm{MSA}=N^{-1}\sum_i\mathbf{1}[c(\hat m_i)=c(m_i)]$, where $c$ maps aliases to canonical method classes. Mean Relative Error is $\mathrm{MRE}=N^{-1}\sum_i\min\{ |\hat\tau_i-\tau_i|/|\tau_i|,1\}$ and is reported as a percentage, with lower values better. For T2SQL, EX scores the executed result. Schema metrics score the auxiliary \texttt{selected\_columns} declaration emitted with the SQL, not columns reconstructed from the final query. If $P_i$ is that declaration and $G_i$ is the gold SQL column set, then $\mathrm{SRR}=N^{-1}\sum_i\mathbf{1}[P_i\supseteq G_i]$, $\mathrm{NSR}=N^{-1}\sum_i|P_i\cap G_i|/|G_i|$, $\mathrm{NSP}=N^{-1}\sum_i|P_i\cap G_i|/|P_i|$, and NSF is the harmonic mean of NSR and NSP. A query can therefore execute correctly while its explicit schema declaration is incomplete. CD uses edge-level F1, and FE uses test area under the receiver operating characteristic curve (AUC).

\subsection{Data Separation and Inference}

Diagnosis, conversion, and final evaluation use disjoint data roles fixed before skill synthesis. $F$ is computed from repeated Base Agent runs on diagnostic questions, using gold metric scores only to construct outcome-distance ranks. CE uses six diagnostic questions per dataset; T2SQL uses ten; both execute six runs per question. Final CE results are scored on locked Textbook, Synthetic, and Real sets with 39, 45, and 31 valid cases. Final T2SQL results use 147 BIRD and 120 Spider questions; the suffixes in BIRD-147 and Spider-120 denote these evaluation-set cardinalities. No final-evaluation instance or answer enters diagnosis or conversion.

For per-question tasks, $F$ aggregates identifiable diagnostic questions whose observed scores vary across runs. The diversity planner expands coverage across task-valid strategies, so the probe measures exploratory behavioral support. It returns only metric-level $F$ values to the converter; probe traces, question identities, and answers are never conversion inputs. The converter receives the source paper, workflow code, task specification, and tool interfaces, but no benchmark instances or answers. Threshold calibration uses only structural outcomes from the development workflow regimes; the resulting rule is frozen before evaluating the excluded target regime. Each locked evaluation set is accessed once after skill synthesis.

Each diagnostic estimate is built from all pairwise distances among repeated Base Agent runs, with at least 30 diagnostic executions per CE dataset and 50 per T2SQL dataset. CD and FE performance tables report independent execution means and standard deviations; CE and T2SQL preserve question identities for paired uncertainty. The atomic and matched intervention analyses use dataset--metric outcomes as their inferential units and never count run pairs as independent samples. Workflow-regime fixed effects absorb level differences, and dataset-clustered inference respects metrics that share executions. Sonnet~4.6 supplies the controlled intervention study. A complete replication of the task and dataset suite, Base conditions, and skill conditions on GPT-5.1 tests whether the system-level ordering survives a new backbone (Appendix~\ref{app:backbone-generalization}).

\section{Behavior-Outcome Freedom}
\label{app:metric-freedom}

\subsection{Computation Details}
\label{app:mf-details}

For each (task, dataset, metric) tuple, we compute freedom from two complementary perspectives: $F_{\text{out}}$ (structured output distances) and $F_{\text{trace}}$ (embedding-based process-trace distances). Both are computed from the ``Base Agent'' condition before any skill is generated or evaluated. The agent, harness, planner, and sampling procedure jointly define the measured support.

\input{figures/fig_tab_output-dist.tex}

\paragraph{Diversity planner.}
Low-temperature inference often concentrates on a small set of dominant outputs, so each question is seeded by a lightweight diversity planner. The planner is the diagnostic intervention used to expose task-valid alternatives whose suppression a structural skill may cause. It is one LLM call using the same backbone and receives the original problem to generate $N$ methodologically distinct solution strategies, such as different statistical methods or schema traversal orders. Each strategy is prepended as a brief hint, and the base agent executes independently for every hint. The resulting estimand describes exploratory task-valid support, not the natural frequency of deployment trajectories. The prompt requires methodologically sound approaches and permits fine-grained variations, including different covariate sets within one method. No additional sampling temperature or top-$p$ manipulation is used. The full prompt template follows.

\begin{tcolorbox}[agentpromptbox,breakable,title=Freedom probe: diversity-planner prompt]
\begin{lstlisting}[style=agentprompt]
You are given a problem below. Generate {N} reasonable approaches to solve it.

Requirements:
1. Every approach MUST be methodologically sound and appropriate for the
   problem -- do not invent approaches just to fill the list
2. Approaches may be similar to each other; fine-grained differences are
   acceptable (e.g., same method with different covariate sets, different
   hyperparameters, or slightly different model specifications)
3. If the problem only supports a small number of truly reasonable
   strategies, generate variations within those strategies; do not
   force unrelated ones
4. Be specific about WHAT to do: name the method, key steps, and any
   important implementation choices. Focus on the high-level methodology
   and strategy, not specific values, feature names, or implementation
   details (those are for the executor to decide)
5. Each approach should be self-contained and actionable

Output format (use exactly this format):
## Approach 1: [Brief Name]
[2-4 sentences describing the core idea and key steps]

## Approach 2: [Brief Name]
...

## Original Problem

{problem}

**IMPORTANT: YOUR TASK RIGHT NOW:**
Do NOT solve the problem above. Do NOT output JSON, SQL, a matrix, or
any final answer. Your ONLY job is to generate {N} reasonable approaches
using the format at the top of this prompt.
\end{lstlisting}
\end{tcolorbox}

\noindent Each generated approach is then prepended to the original problem as a one-line hint (\texttt{Hint: \{approach\}}) before the base agent executes. This planner is the only LLM prompt introduced by the Freedom probe. Output and trace extraction, distance construction, the Mantel statistic, aggregation, and bootstrap resampling are deterministic computations.

\paragraph{$F_{\text{out}}$ computation.}
We instantiate Eq.~\eqref{eq:metric-freedom} with $\mathcal{X}$ = output space. The behavioral distance $d(\cdot,\cdot)$ is task-specific (Table~\ref{tab:output-dist}).

\paragraph{$F_{\text{trace}}$ computation.}
We instantiate Eq.~\eqref{eq:metric-freedom} with $\mathcal{X}$ = trace embedding space. Each process trace $\tau_i$ is extracted from verbose run output and contains intermediate assistant messages, tool calls, and tool results. The final response is removed before the trace is embedded with \texttt{text-embedding-3-large} using persistent disk caching. The behavioral dissimilarity is $d(\tau_i, \tau_j) = (1 - \cos_{\text{sim}}(\text{embed}(\tau_i), \text{embed}(\tau_j)))/2$. This positive scaling does not change ranks or $F$. Thus $F_{\text{trace}}$ measures process variation, while $F_{\text{out}}$ separately measures structured final outputs. Runs where embedding fails are dropped from the pairwise computation.

\paragraph{Bootstrap uncertainty summaries.} For dataset-level estimates, we resample runs and reconstruct both pairwise matrices before recomputing $F$ ($B{=}1000$); this preserves the dependence among pairwise entries that share a run. For per-question metrics (CE, T2SQL), we first compute $F$ per identifiable question and then bootstrap questions while recomputing the canonical mean used by both routing and cross-task analysis. Some cells contain few identifiable questions, so the resulting standard errors are descriptive and are not presented as calibrated confidence intervals.

\paragraph{Exclusion criteria and default routing.} For per-question tasks (CE, T2SQL), we include only \emph{mixed questions} where scores vary across the observed runs, because otherwise the sample correlation denominator is zero. The diagnostic therefore has two reported outputs: the identifiable fraction $K/M$ and the conditional coupling estimate over those $K$ questions. Observed uniformity does not prove zero population variance; rare-varying questions may be omitted when $n$ is small. For per-dataset tasks (CD, FE), dissimilarities are computed globally over the available runs. A per-question metric must yield at least three identifiable questions to justify pipeline retention. CE-Synthetic-MSA and the sparse Spider NSF/SRR estimates remain audit entries. The evaluated T2SQL skill was routed by BIRD EX, which has four identifiable questions. For $F_{\text{trace}}$, runs where trace embedding fails are dropped from the pairwise computation. Routing uses $F_{\text{out}}$ whenever it reaches the minimum and falls back to $F_{\text{trace}}$ only when the output statistic is undefined. If neither representation reaches the minimum after additional sampling, the system takes the default Discard route.

This fallback retains the validated capability layer and emits no mandatory source-pipeline clauses. Table~\ref{tab:f-identifiability-audit} reports the complete finite-sample audit.

\begin{table}[htbp]
\centering
\caption{Finite-sample identifiability audit for per-question $F$. $M$ is the number of sampled questions; $K_{\mathrm{var}}$ has observed score variation; $K_{\mathrm{out}}$ and $K_{\mathrm{trace}}$ yield identifiable statistics in the two representations. Mean score variance is computed across $K_{\mathrm{out}}$ on a $[0,1]$ score scale. Values with $K<3$ are audit entries and cannot independently determine a route.}
\label{tab:f-identifiability-audit}
\small
\begin{adjustbox}{max width=\textwidth}
\begin{tabular}{lllrrrrr}
\toprule[0.1em]
\textbf{Task} & \textbf{Dataset} & \textbf{Metric} & $M$ & $K_{\mathrm{var}}$ & $K_{\mathrm{out}}$ & $K_{\mathrm{trace}}$ & \textbf{Mean score var.} \\
\midrule[0.05em]
CE & Textbook & MSA & 6 & 6 & 6 & 6 & 0.233 \\
CE & Textbook & MRE & 6 & 6 & 6 & 6 & 0.114 \\
CE & Real & MSA & 6 & 5 & 5 & 5 & 0.300 \\
CE & Real & MRE & 6 & 6 & 5 & 6 & 0.136 \\
CE & Synthetic & MSA & 6 & 1 & 1 & 1 & 0.300 \\
CE & Synthetic & MRE & 6 & 6 & 6 & 6 & 0.039 \\
T2SQL & BIRD-147 & EX & 10 & 4 & 4 & 4 & 0.275 \\
T2SQL & BIRD-147 & NSF & 10 & 6 & 6 & 6 & 0.006 \\
T2SQL & BIRD-147 & SRR & 10 & 2 & 2 & 2 & 0.250 \\
T2SQL & Spider-120 & EX & 10 & 2 & 2 & 2 & 0.250 \\
T2SQL & Spider-120 & NSF & 10 & 1 & 1 & 1 & 0.029 \\
T2SQL & Spider-120 & SRR & 10 & 0 & 0 & 0 & -- \\
\bottomrule[0.1em]
\end{tabular}
\end{adjustbox}
\end{table}

All score-varying questions produced valid trace embeddings, so the trace embedding failure count is zero. The sole representation mismatch is CE Real MRE: one question has varying scores but a constant output representation, leaving $K_{\mathrm{out}}=5$ and $K_{\mathrm{trace}}=6$. The underlying runs contain no execution failures for CE or BIRD and three failures among 50 Spider runs; the statistics use the remaining valid runs.

\subsection{Operational AdaSkill Protocol}
\label{app:operational-protocol}

\paragraph{Mechanical component boundary.}
Classification is clause-level and functional. \emph{Knowledge} is a declarative proposition about the domain, data schema, estimator assumptions, or tool semantics. It cannot prescribe, prefer, prohibit, order, repeat, or terminate an action. \emph{Pipeline} is any mapping from an observed state to an action policy, including soft recommendations, hard requirements, prohibitions, numbered procedures, conditional branches that select tools or methods, validation gates, retries, and stopping rules. If a clause mixes a fact with an action rule, the entire clause is pipeline. \emph{Orchestration} additionally requires runtime control across modules or agents through handoffs, shared state, voting, debate, or loops. This rule is independent of section names and surface wording: placing ``FORBIDDEN'' or ``do this first'' under a knowledge heading does not make it knowledge.

\paragraph{AdaSkill converter contract.}
Every reported conversion follows the same five-step contract. First, the converter inventories callable tools, declarative knowledge, state-to-action pipeline clauses, and orchestration mechanisms from the source paper and code. Second, it retains tool interfaces, rewrites knowledge as descriptive reference material, and removes agent identities, handoffs, shared-state operations, and runtime control loops. Third, it applies the development-calibrated threshold from Section~\ref{sec:mas2skill}: retain all task-qualified pipeline clauses for $F<t$ and remove them for $F\geq t$. In a family-excluded fold, the threshold is learned once from the other regimes and is then immutable for every metric in the target family. Fourth, it emits one Markdown skill with fixed sections for scope, tools, reference knowledge, metric-specific guidance, and failure checks. Fifth, an automated clause audit checks imports, section presence, binary routing compliance, and the absence of benchmark instances or answers. Headings do not override the functional classification, and a mixed fact--directive clause inherits the pipeline label. The converter receives metric-level $F$ values and the already frozen threshold but never receives probe traces, question identities, skill outcomes, or final-test feedback.

\begin{tcolorbox}[agentpromptbox,breakable,title=AdaSkill Converter: binary workflow-to-skill prompt]
\begin{lstlisting}[style=agentprompt]
You are the AdaSkill workflow-to-skill converter. Convert a task-qualified source workflow into one executable skill for a tool-using agent.

Inputs:
  source paper: {paper_path}
  source code: {codebase_path}
  task specification: {task_specification}
  tool interfaces: {tool_interfaces}
  metric-level Freedom values: {metric_freedom}
  frozen threshold: {threshold_t}
  output directory: {output_dir}

1. Inventory every source clause by function:
     tool = callable implementation;
     knowledge = declarative domain, data, estimator, or tool fact with no action directive;
     pipeline = any recommendation, prohibition, ordered step, state-to-action branch, validation gate, retry, or stopping rule;
     coordination = runtime handoff, shared state, voting, debate, or control loop.
   Classify every mixed fact-and-directive clause as pipeline. Section headings and polite wording do not change the class.
2. Preserve callable tools. Rewrite relevant knowledge as descriptive reference material. Remove agent identities, handoffs, shared state, voting, debate, and runtime control loops.
3. For each metric module, apply one binary structural rule:
   choose t in (0, 1);
   if F < t: retain the task-qualified pipeline clauses;
   if F >= t: remove all pipeline clauses.
   Do not infer a new threshold and do not create an intermediate route.
4. Compose the routed metric modules into one skill. Deduplicate shared tools and knowledge. Keep conflicting metric requirements explicitly scoped to their metric-specific decisions.
5. Write SKILL.md with fixed sections for scope, tools, reference knowledge, metric-specific guidance, and failure checks. Write required tool files under tools/.
6. Reclassify every emitted clause by the same functional rule. On a discard route, reject any source-derived state-to-action clause even if it appears under reference knowledge. Validate tool imports, required sections, binary routing compliance, and the absence of benchmark instances or answers.

Do not inspect probe traces, question identities, skill outcomes, validation feedback, or final-test data. Write the complete skill package directly to {output_dir}.
\end{lstlisting}
\end{tcolorbox}

\paragraph{Multi-metric composition.}
Metric modules are composed by the decision they score, not by averaging their $F$ values. In CE, MSA controls method-selection guidance and MRE controls numerical-estimation guidance. In T2SQL, SRR and NSF govern schema-evidence checks, while EX governs executable-query construction and repair. Shared tools and knowledge are deduplicated. A retained pipeline remains scoped to its metric-specific decision and cannot impose structure on a subproblem assigned to the discard route. This fixed mapping produces one executable skill from multiple metric-conditioned modules.

\paragraph{Structural robustness and dependence control.}
\label{app:pipeline-treatment-audit}
The atomic sweep contains 15 Pipeline-only interventions against the Base Agent and excludes the Dia Workflow Compiler proxy. Its normalized association is Pearson $r=-0.623$ ($p=0.013$) and Spearman $\rho=-0.568$ ($p=0.027$). This estimate uses neither the Full nor Discard outcomes.

The complete matched estimand uses all 16 capability-matched interventions. Because every metric has already been mapped to higher-is-better utility in $[0,1]$, its primary effect is the direct utility difference $u_{\mathrm{full}}-u_{\mathrm{discard}}$. Workflow-regime fixed effects estimate a slope of $-9.04$ utility points per unit $F$. Clustering shared-metric dependence at the dataset level gives a 95\% interval of $[-13.23,-4.85]$ and $p<0.001$. Native utility differences give Pearson $r=-0.797$ ($p=0.0002$), and the normalized effect gives $r=-0.669$ ($p=0.0046$). The strict route-grade subset requires $K\geq3$ for per-question diagnostics and retains 13 interventions; it recovers slope $-9.63$, interval $[-13.77,-5.48]$, and $p<0.001$. Leave-one-regime fixed-effect slopes remain negative after removing CE, CD, T2SQL, or FE. Baseline utility is unrelated to normalized treatment effect ($p=0.90$).

For CE, we additionally refine the task-authored representation without an embedding model. The hierarchical execution signature keeps estimator identity as the coarse decision and separates executions within an estimator by statistical packages, estimator calls, dataframe columns, causal-design tokens, keyword arguments, and whitelisted configurations such as bandwidth, cutoff, kernel, and estimand. It parses assistant tool calls only. Tool results, final responses, and estimated effects are never read. Table~\ref{tab:ce-structured-behavior} shows that the controlled scorer reversal survives: Textbook MSA/MRE move from $0.098/1.012$ to $0.181/1.085$, while Real moves from $0.236/0.697$ to $0.255/0.762$. Replacing the five canonical CE coordinates by these structured values preserves the matched ordering on the route-grade set (normalized $r=-0.604$, $p=0.029$; utility-point $r=-0.669$, $p=0.012$) and complete audit (normalized $r=-0.612$, $p=0.012$; utility-point $r=-0.688$, $p=0.0032$).

\input{figures/fig_tab_ce-structured-behavior.tex}

Across the 16 matched contrasts, the normalized effect gives Pearson $r=-0.669$ ($p=0.0046$) and Spearman $\rho=-0.735$ ($p=0.0012$); native-score differences give $r=-0.797$ ($p=0.0002$) and $\rho=-0.747$ ($p=0.0009$). Seven metric means give native-scale $r=-0.855$ ($p=0.014$; exact permutation $p=0.0113$), and every regime deletion remains negative. Table~\ref{tab:pipeline-intervention-16} reports the complete treatment audit.

\paragraph{Threshold-free ranking.}
Treating Full $>$ Discard as the positive structural action, $-F$ attains ROC AUC $1.000$ across all 16 interventions and on the strict-identifiability subset. Exact one-sided label enumeration conditional on the observed class counts gives $p=0.0002$ and $p=0.0008$. Every leave-one-regime-out deletion preserves perfect ordering.

The ordering is also insensitive to effects near zero. Sequential deletion of one through six smallest Full-minus-Discard contrasts preserves ROC AUC $1.000$; after six deletions the exact one-sided test remains significant ($p=0.0040$). Small treatment signs therefore do not create the ranking result. Table~\ref{tab:structural-ranking} summarizes both ranking audits.

\input{figures/fig_tab_structural-ranking.tex}
\FloatBarrier

Workflow-regime fixed effects isolate metric-conditioned effect modification, and dataset clustering keeps metrics sharing executions in the same dependence block. On all 16 interventions, the utility-point slope is $-9.04$ with 95\% interval $[-13.23,-4.85]$ ($p<0.001$). The strict-identifiability subset gives slope $-9.63$, interval $[-13.77,-5.48]$ ($p<0.001$). Native, normalized, and every regime-deletion analysis preserve the negative direction.

\input{figures/fig_tab_pipeline-intervention-16.tex}
Native-score effects, metric aggregation, family deletion, and the baseline-utility control all preserve the structural ordering. These tests target the matched intervention itself and remain distinct from AdaSkill's cross-fitted operating policy.

Baseline test utility is the locked Base-Agent result reported in the matched intervention table and is unrelated to the structural effect ($r=0.033$, $p=0.903$). An unsigned distance-dependence statistic corroborates the geometric premise but discards whether the association is concordant or anti-concordant. The signed rank construction in $F$ supplies the orientation of the behavior--outcome geometry; pipeline direction remains intervention-specific.

\input{figures/fig_tab_f-economics.tex}
\FloatBarrier

\subsection{Backbone Replication on GPT-5.1}
\label{app:backbone-generalization}

We repeat the complete dataset suite and both Base and skill-augmented conditions with GPT-5.1 under the same skill definitions, prompts, tools, and evaluation protocol. Figure~\ref{fig:metric-freedom-scatter-gpt} shows that the system-level relationship survives the changed behavior law. Output-space $F$ gives $r=-0.71$ ($p<0.01$), and response-free trace-space $F$ gives $r=-0.79$ ($p<0.001$).

\input{figures/fig_fig_metric-freedom-scatter_openai.tex}

\subsection{Question-Level Uncertainty for Final CE and T2SQL Results}
\label{app:question-uncertainty}

Table~\ref{tab:question-uncertainty} reports uncertainty over evaluation questions. Wilson intervals are used for MSA and EX. MRE intervals use 10,000 question-level bootstrap samples. Table~\ref{tab:gain-intervals} reports paired question-bootstrap intervals for direct method gains where retained outputs permit question matching. CE Overall MRE averages the three dataset-level paired mean differences with equal dataset weights; its stratified bootstrap resamples questions independently within each dataset before averaging. Synthetic Acc@$\tau$ thresholds each question at clipped relative error $\leq\tau$ and bootstraps the paired binary differences.

\begin{table}[htbp]
\centering
\caption{Final AdaSkill estimates with 95\% question-level intervals. MRE is lower-is-better; all entries are percentages.}
\label{tab:question-uncertainty}
\small
\begin{tabular}{lllrrr}
\toprule[0.1em]
\textbf{Task} & \textbf{Dataset} & \textbf{Metric} & \textbf{Estimate} & \textbf{95\% interval} & \textbf{$n$} \\
\midrule[0.05em]
CE & Textbook & MSA & 89.7 & $[76.4,95.9]$ & 39 \\
CE & Textbook & MRE & 23.1 & $[14.7,32.4]$ & 39 \\
CE & Synthetic & MSA & 100.0 & $[92.1,100.0]$ & 45 \\
CE & Synthetic & MRE & 4.3 & $[2.7,6.1]$ & 45 \\
CE & Real & MSA & 96.8 & $[83.8,99.4]$ & 31 \\
CE & Real & MRE & 15.9 & $[8.2,25.4]$ & 31 \\
T2SQL & BIRD-147 & EX & 67.4 & $[59.4,74.4]$ & 147 \\
T2SQL & Spider-120 & EX & 66.7 & $[57.8,74.5]$ & 120 \\
\bottomrule[0.1em]
\end{tabular}
\end{table}

\input{figures/fig_tab_gain-intervals.tex}

\subsection{Sensitivity to $M$ and $N$}
\label{app:mf-sensitivity}

A central design decision in computing $F$ is how many questions ($M$) and how many runs per question ($N$) to collect.
Both quantities directly affect the reliability of the Mantel-test estimate: too few runs leave pairwise distances noisy; too few questions make the question-level aggregate unstable.
At the same time, collecting $M{\times}N$ runs has a real cost. Figure~\ref{fig:heatmap-MN} uses the measured average of \$0.17 per run, so cost is linear in $M{\times}N$ and reaches \$20.4 at the maximum budget of $M{=}10$, $N{=}12$.

Figure~\ref{fig:heatmap-MN} reports $F_{\mathrm{out}}$, while Figure~\ref{fig:mf-sensitivity} reports the complementary $F_{\mathrm{trace}}$ sweep and planner ablation. These are different behavioral representations and need not coincide numerically at the same $(M,N)$ operating point. The trace-space sweep reveals a clear saturation pattern for both $F_{\mathrm{MSA}}^{\mathrm{trace}}$ and $F_{\mathrm{MRE}}^{\mathrm{trace}}$.
For $F_{\mathrm{MSA}}^{\mathrm{trace}}$ (panel~a), the estimate stabilises once $N \geq 5$: beyond this point the colour map changes by less than two percentage points per additional run column.
For $F_{\mathrm{MRE}}^{\mathrm{trace}}$ (panel~b), even $N{=}3$ places the estimate above the calibrated threshold range; additional runs reduce variation without changing the binary decision.
In both panels, increasing $M$ beyond~6 produces only marginal shifts across the grid, identifying $M{=}6$ as the operating point.

Taken together, the gold operating point at $M{=}6$, $N{=}6$ (\$6.12) is the empirical knee of the accuracy--cost curve, consuming 30\% of the maximum budget while preserving the binary separation. Relative to $M{=}N{=}10$ (\$17.00), it reduces measurement cost by $2.8\times$.
Cells with $N<3$ are greyed out because fewer than three runs yield too few pairwise entries for the rank statistic to be informative.

\paragraph{Diversity planner ablation.}
Figure~\ref{fig:mf-sensitivity} compares trace-space $F$ estimation with and without the diversity planner. Without it, each of the $N$ runs is an independent random execution of the base agent. Because low-temperature inference concentrates on a small number of dominant reasoning paths, many runs produce near-identical trajectories and undersample plausible alternatives. At $N{=}6$, the no-planner $F_{\mathrm{MSA}}^{\mathrm{trace}}$ estimate is 42 versus the large-budget reference of approximately 53, whereas the planner-conditioned estimate is 52. At $M{=}N{=}20$, the two protocols recover nearly identical metric ordering and values. The planner therefore accelerates coverage of exploratory support without manufacturing the scorer reversal.

\begin{figure}[htbp]
\centering
\begin{subfigure}[t]{\linewidth}
  \centering
  \includegraphics[width=0.96\linewidth]{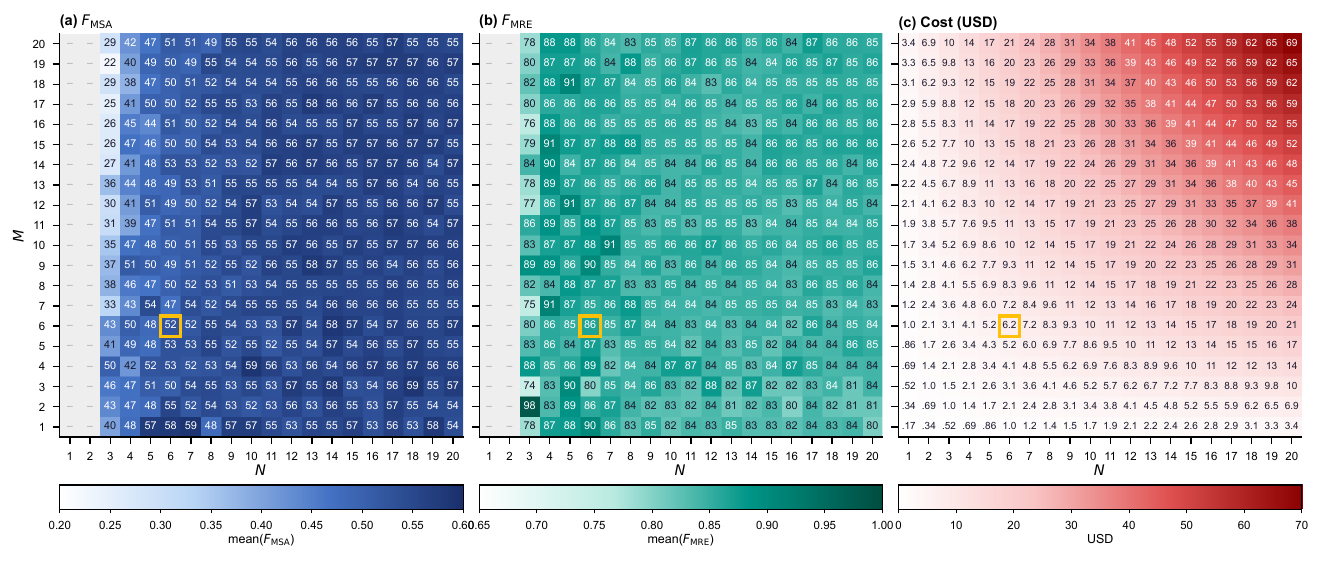}
  \caption{$F_{\mathrm{trace}}$ with diversity planner (used in all experiments).}
  \label{fig:mf-sensitivity-div}
\end{subfigure}\\[2pt]
\begin{subfigure}[t]{\linewidth}
  \centering
  \includegraphics[width=0.96\linewidth]{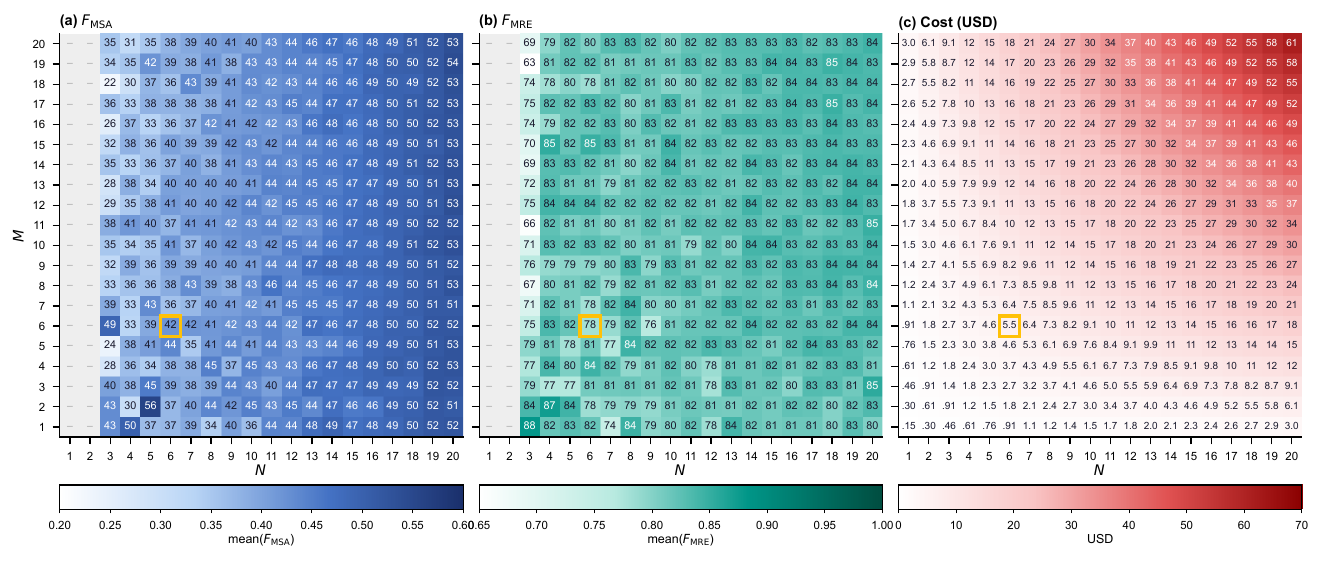}
  \caption{$F_{\mathrm{trace}}$ without diversity planner (independent random runs).}
  \label{fig:mf-sensitivity-nodiv}
\end{subfigure}
\caption{\textbf{Diversity planner ablation for trace-space Freedom.} Both panels show $F_{\mathrm{MSA}}^{\mathrm{trace}}$ and $F_{\mathrm{MRE}}^{\mathrm{trace}}$ ($\times100$) as a function of evaluation budget ($M$ questions, $N$ runs). At $M{=}N{=}20$, planner and no-planner protocols give similar displayed estimates ($F_{\mathrm{MSA}}^{\mathrm{trace}}{:}\;55$ vs.\ $53$; $F_{\mathrm{MRE}}^{\mathrm{trace}}{:}\;85$ vs.\ $85$). At $M{=}N{=}6$, the planner-conditioned values are closer to their respective large-budget values than the independent-run values. Figure~\ref{fig:heatmap-MN} reports a separate output-space sweep.}
\label{fig:mf-sensitivity}
\end{figure}

\subsection{Freedom across Diagnostic Questions}
\label{app:mf-distribution}

Figure~\ref{fig:mf-perrun} shows every retained identifiable question-level estimate.

\begin{figure}[htbp]
\centering
\includegraphics[width=\linewidth,trim=0 0 0 22pt,clip]{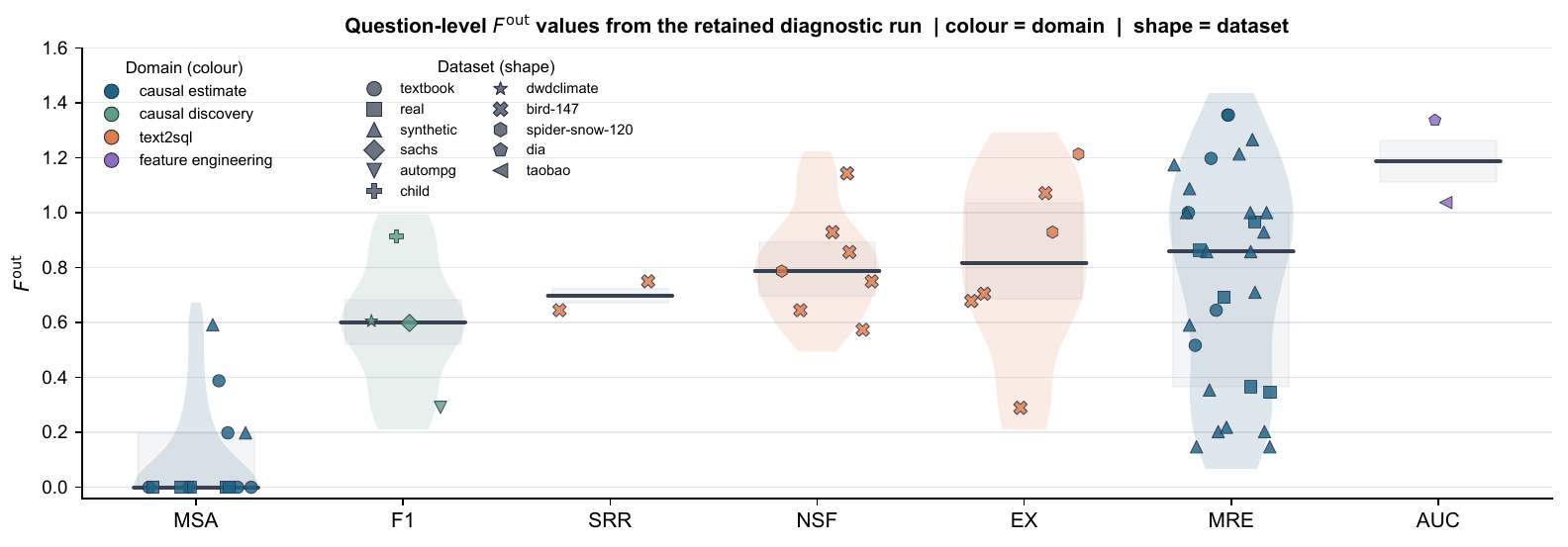}
\caption{Question-level $F^{\mathrm{out}}$ values from the retained diagnostic run for each dataset. Colour encodes domain and shape encodes dataset.}
\label{fig:mf-perrun}
\end{figure}

Each dot is an identifiable question-level $F^{\mathrm{out}}$ estimate from the retained diagnostic run for a dataset.
The canonical dataset--metric coordinate is their mean, used without alteration in both structural analysis and routing. Budget-level stability is evaluated separately in Figures~\ref{fig:heatmap-MN} and~\ref{fig:mf-sensitivity}, while Table~\ref{tab:f-identifiability-audit} reports how many questions identify each metric.

\subsection{$F$ Robustness across Datasets}
\label{app:mf-generalization}

We test cross-dataset stability using Causal Estimation (CE), three datasets (Textbook, Real, Synthetic), and two metrics (MSA, MRE). The retained Synthetic-MSA diagnostic has only one identifiable question and is excluded from this distributional comparison, matching the identifiability audit in Table~\ref{tab:f-identifiability-audit}.

Tables~\ref{tab:mf-gen-dist} and~\ref{tab:mf-gen-cross} report, respectively, the per-dataset $F$ distributions and the pairwise cross-dataset comparisons for both $F^{\mathrm{out}}$ and $F^{\mathrm{trace}}$.

\paragraph{Distribution.}
Table~\ref{tab:mf-gen-dist} shows stable ordering alongside numerical variation. The identifiable Textbook and Real MSA estimates are close in both representations. For MRE, $F^{\mathrm{out}}$ ranges from 0.638 to 1.012, while Real and Synthetic are closer (0.697 vs.\ 0.638, $\Delta\mu = 0.059$). These results preserve the MSA--MRE separation across the identifiable CE comparisons.

\paragraph{Cross-dataset comparison.}
Table~\ref{tab:mf-gen-cross} reports pairwise $\Delta\mu$ (absolute difference of means), 95\% interval overlap, and exploratory Mann--Whitney $U$ test $p$-values (no shared question IDs exist across datasets). The two valid MSA representation comparisons have interval overlap. For MRE/$F^{\mathrm{out}}$, Real and Synthetic have $\Delta\mu=0.059$ and $p=1.00$. MRE/$F^{\mathrm{trace}}$ for Textbook versus Synthetic gives $p=0.026$ with non-overlapping intervals, localizing the main cross-dataset difference to this representation and metric.

\begin{table}[htbp]
\centering
\caption{$F$ distribution per dataset for Causal Estimation. ``n'' = number of questions with a valid $F$ estimate; std and var computed with Bessel's correction.}
\label{tab:mf-gen-dist}
\small
\begin{adjustbox}{max width=\textwidth}
\begin{tabular}{ccc cccc c}
\toprule[0.1em]
\textbf{Dataset} & \textbf{Metric} & \textbf{Type} & $n$ & $\mu$ & $\sigma$ & $\sigma^2$ & Range \\
\midrule[0.05em]
\multirow{4}{*}{Textbook} & \multirow{2}{*}{MSA} & $F^{\mathrm{out}}$   & 6 & 0.098 & 0.163 & 0.027 & $[0.000,\ 0.388]$ \\
                        &                      & $F^{\mathrm{trace}}$ & 6 & 0.538 & 0.370 & 0.137 & $[0.147,\ 1.000]$ \\
\cmidrule[0.05em]{2-8}
                        & \multirow{2}{*}{MRE} & $F^{\mathrm{out}}$   & 6 & 1.012 & 0.361 & 0.130 & $[0.517,\ 1.355]$ \\
                        &                      & $F^{\mathrm{trace}}$ & 6 & 1.109 & 0.268 & 0.072 & $[0.780,\ 1.491]$ \\
\midrule[0.05em]
\multirow{4}{*}{Real}   & \multirow{2}{*}{MSA} & $F^{\mathrm{out}}$   & 5 & 0.236 & 0.256 & 0.066 & $[0.000,\ 0.592]$ \\
                        &                      & $F^{\mathrm{trace}}$ & 5 & 0.559 & 0.391 & 0.153 & $[0.147,\ 1.000]$ \\
\cmidrule[0.05em]{2-8}
                        & \multirow{2}{*}{MRE} & $F^{\mathrm{out}}$   & 5 & 0.697 & 0.367 & 0.135 & $[0.147,\ 1.167]$ \\
                        &                      & $F^{\mathrm{trace}}$ & 6 & 0.828 & 0.488 & 0.238 & $[0.255,\ 1.426]$ \\
\midrule[0.05em]
\multirow{2}{*}{Synthetic} & \multirow{2}{*}{MRE} & $F^{\mathrm{out}}$   & 6 & 0.638 & 0.466 & 0.217 & $[0.147,\ 1.266]$ \\
                           &                      & $F^{\mathrm{trace}}$ & 6 & 0.630 & 0.182 & 0.033 & $[0.412,\ 0.861]$ \\
\bottomrule[0.1em]
\end{tabular}
\end{adjustbox}
\end{table}

\begin{table}[htbp]
\centering
\caption{Exploratory cross-dataset comparisons for Causal Estimation. No shared question IDs exist across datasets; Mann--Whitney $U$ tests ($p_{\mathrm{MW}}$) are unadjusted. ``CI'' records whether the reported 95\% intervals for $\mu_A$ and $\mu_B$ overlap. The dagger marks $p<0.05$ with non-overlapping intervals; the comparison is descriptive.}
\label{tab:mf-gen-cross}
\small
\begin{adjustbox}{max width=\textwidth}
\begin{tabular}{ccccccccc}
\toprule[0.1em]
\textbf{Metric} & \textbf{Type} & \textbf{Dataset A} & $\mu_A$ & \textbf{Dataset B} & $\mu_B$ & $\Delta\mu$ & CI & $p_{\mathrm{MW}}$ \\
\midrule[0.05em]
\multirow{2}{*}{MSA} & $F^{\mathrm{out}}$
  & Textbook & 0.098 & Real      & 0.236 & 0.138 & \checkmark & 0.368 \\
\cmidrule[0.05em]{2-9}
  & $F^{\mathrm{trace}}$
  & Textbook & 0.538 & Real      & 0.559 & 0.021 & \checkmark & 1.000 \\
\midrule[0.05em]
\multirow{6}{*}{MRE} & \multirow{3}{*}{$F^{\mathrm{out}}$}
  & Textbook & 1.012 & Real      & 0.697 & 0.314 & \checkmark & 0.314 \\
  &        & Textbook & 1.012 & Synthetic & 0.638 & 0.374 & \checkmark & 0.148 \\
  &        & Real   & 0.697 & Synthetic & 0.638 & 0.059 & \checkmark & 1.000 \\
\cmidrule[0.05em]{2-9}
  & \multirow{3}{*}{$F^{\mathrm{trace}}$}
  & Textbook & 1.109 & Real      & 0.828 & 0.281 & \checkmark & 0.394 \\
  &        & Textbook & 1.109 & Synthetic & 0.630 & 0.478 & $\times$   & 0.026$^\dagger$ \\
  &        & Real   & 0.828 & Synthetic & 0.630 & 0.197 & \checkmark & 0.589 \\
\bottomrule[0.1em]
\end{tabular}
\end{adjustbox}
\end{table}

\paragraph{Takeaway.}
Across these CE datasets, the MSA--MRE ordering remains stable despite numerical variation. Within this task family, a source-dataset estimate can therefore provide an initial structural ranking, followed by target-dataset re-estimation or held-out validation.

\subsection{T2SQL Freedom Provenance}
\label{app:metric-freedom-full}

\begin{table}[htbp]
\centering
\caption{Provenance and use of the T2SQL Freedom estimates. Canonical pre-synthesis means determine both the converter input and the matched structural coordinates in Table~\ref{tab:pipeline-intervention-16}.}
\label{tab:t2sql-f-provenance}
\small
\begin{tabular}{lll}
\toprule[0.1em]
\textbf{Estimate} & \textbf{Value} & \textbf{Use} \\
\midrule[0.05em]
Canonical $F_{\mathrm{out}}$ & 0.686--1.071 & Converter input and structural analysis \\
Canonical $F_{\mathrm{trace}}$ & 0.147--1.249 & Complementary process-space analysis \\
\bottomrule[0.1em]
\end{tabular}
\end{table}

\section{Per-Task Full Results}
\label{app:per-task}

\subsection{Causal Estimation}

Table~\ref{tab:causal-est} reports the complete Causal Estimation comparison used in RQ1, including the scorer-controlled MSA--MRE reversal and deployment measurements.

\input{figures/fig_tab_causal-est.tex}

\subsection{Causal Discovery: Extended Metrics}
\label{app:cd-main}

Table~\ref{tab:causal-disc-full} reports F1, precision, FPR, SHD, NHD, time, and cost across the four Causal Discovery datasets. Their canonical output-space estimates span $F^{\mathrm{out}}=0.291$--$0.914$. Results are mean $\pm$ std over 3 runs.

\input{figures/fig_tab_causal-disc.tex}

\subsection{Feature Engineering}
\label{app:fe-main}

Feature Engineering operates in a high-freedom regime: the canonical output-space estimates are $F^{\mathrm{out}}=1.037$--$1.337$. Table~\ref{tab:fe} reports AUC, selected feature counts, latency, and cost. AUC differences are small, while removing the source evolutionary loop sharply improves deployment efficiency.

\input{figures/fig_tab_fe.tex}

\subsection{Text-to-SQL}
\label{app:t2sql-main}

Table~\ref{tab:text2sql-side-by-side} reports the complete execution and schema-declaration results for BIRD-147 and Spider-120, together with task-level latency and cost.

\input{figures/fig_tab_text2sql.tex}


\section{Efficiency and Cost}

\label{app:full-bar}

\subsection{Performance and Cost Pareto Analysis}
\label{app:pareto}

Figure~\ref{fig:pareto} compares performance and cache-adjusted deployment cost using the selected final AdaSkill. AdaSkill lies on the non-dominated frontier in all six panels. The T2SQL EX panel shows a trade-off: the workflow reference has 1.4 points higher mean EX and 60\% higher measured cost. The remaining panels show either performance dominance by AdaSkill or a frontier trade-off with a cheaper baseline.

\input{figures/fig_fig_pareto.tex}

\subsection{Per-Dataset Performance and Efficiency}

Figure~\ref{fig:full-bar} provides per-dataset AdaSkill performance across all 11 datasets (Row~1). Rows~2--3 report task-average efficiency for T2SQL and CE, for which the recorded rows contain aggregate time and cost, and per-dataset efficiency for CD and FE.

\begin{figure}[htbp]
\centering
\definecolor{colRaw}{HTML}{B0BEC5}
\definecolor{colComp}{HTML}{5D8AA8}
\definecolor{colMAS}{HTML}{1A365D}
\definecolor{colOurs}{HTML}{E63946}
\resizebox{\linewidth}{!}{%
\begin{tikzpicture}
\begin{groupplot}[
  group style={
    group size=4 by 3,
    xlabels at=edge bottom,
    xticklabels at=edge bottom,
    ylabels at=edge left,
    vertical sep=10mm,
    horizontal sep=7mm,
  },
  ybar=1pt,
  grid=major, grid style={gray!15},
  tick label style={font=\tiny},
  x tick label style={font=\tiny, rotate=30, anchor=east},
  y tick label style={font=\tiny},
]

\nextgroupplot[
  width=0.21\linewidth, enlarge x limits=0.45, 
  height=3cm, bar width=2.7pt,
  symbolic x coords={BIRD-147, Spider-120}, xtick=data,
  title={\scriptsize\textbf{T2SQL}},
  ylabel={\scriptsize Perf.\ (\%)},
  ymin=48, ymax=78, ytick={50,60,70},
  legend to name=fullbarlegend,
  legend style={legend columns=4, font=\tiny, draw=cgray!40, column sep=3pt},
]
  \addplot[fill=colMAS!85, draw=none] coordinates {(BIRD-147,69.4)(Spider-120,67.5)};
  \addplot[fill=colRaw!85, draw=none] coordinates {(BIRD-147,59.2)(Spider-120,60.0)};
  \addplot[fill=colComp!85, draw=none] coordinates {(BIRD-147,67.4)(Spider-120,57.5)};
  \addplot[fill=colOurs!85, draw=none] coordinates {(BIRD-147,67.4)(Spider-120,66.7)};
  \addlegendentry{Source Workflow}
  \addlegendentry{Base Agent}
  \addlegendentry{Workflow Compiler}
  \addlegendentry{AdaSkill (Ours)}

\nextgroupplot[
  width=0.25\linewidth, enlarge x limits=0.20, 
  height=3cm, bar width=2.7pt,
  symbolic x coords={Textbook, Syn, Real}, xtick=data,
  title={\scriptsize\textbf{Causal Estimation}},
  ymin=55, ymax=108, ytick={60,80,100},
]
  \addplot[fill=colMAS!85, draw=none] coordinates {(Textbook,83.3)(Syn,75.9)(Real,78.3)};
  \addplot[fill=colRaw!85, draw=none] coordinates {(Textbook,61.5)(Syn,100.0)(Real,87.1)};
  \addplot[fill=colComp!85, draw=none] coordinates {(Textbook,82.1)(Syn,100.0)(Real,90.3)};
  \addplot[fill=colOurs!85, draw=none] coordinates {(Textbook,89.7)(Syn,100.0)(Real,96.8)};

\nextgroupplot[
  width=0.31\linewidth, enlarge x limits=0.14, 
  height=3cm, bar width=2.7pt,
  symbolic x coords={Auto, DWD, Sachs, Child}, xtick=data,
  title={\scriptsize\textbf{Causal Discovery}},
  ymin=50, ymax=102, ytick={60,80,100},
]
  \addplot[fill=colMAS!85, draw=none] coordinates {(Auto,73.7)(DWD,66.3)(Sachs,55.7)(Child,65.3)};
  \addplot[fill=colRaw!85, draw=none] coordinates {(Auto,67.0)(DWD,55.8)(Sachs,86.9)(Child,80.3)};
  \addplot[fill=colComp!85, draw=none] coordinates {(Auto,69.7)(DWD,64.2)(Sachs,66.7)(Child,86.1)};
  \addplot[fill=colOurs!85, draw=none] coordinates {(Auto,65.2)(DWD,70.1)(Sachs,92.4)(Child,95.2)};

\nextgroupplot[
  width=0.21\linewidth, enlarge x limits=0.45, 
  height=3cm, bar width=2.7pt,
  symbolic x coords={Taobao, Dia}, xtick=data,
  title={\scriptsize\textbf{Feature Eng.}},
  ymin=60, ymax=88, ytick={65,70,75,80,85},
]
  \addplot[fill=colMAS!85, draw=none] coordinates {(Taobao,65.3)(Dia,81.2)};
  \addplot[fill=colRaw!85, draw=none] coordinates {(Taobao,66.0)(Dia,80.4)};
  \addplot[fill=colComp!85, draw=none] coordinates {(Taobao,65.8)(Dia,79.8)};
  \addplot[fill=colOurs!85, draw=none] coordinates {(Taobao,67.0)(Dia,81.2)};

\nextgroupplot[
  width=0.21\linewidth, enlarge x limits=0.60,
  height=3cm, bar width=2.7pt,
  symbolic x coords={L,Avg,H}, xtick={Avg}, xmin=L, xmax=H,
  ylabel={\scriptsize Cost (\$)},
  ymin=0, ymax=1.2, ytick={0,0.4,0.8,1.2},
]
  \addplot[fill=colMAS!85, draw=none] coordinates {(Avg,0.72)};
  \addplot[fill=colRaw!85, draw=none] coordinates {(Avg,0.39)};
  \addplot[fill=colComp!85, draw=none] coordinates {(Avg,0.59)};
  \addplot[fill=colOurs!85, draw=none] coordinates {(Avg,0.45)};

\nextgroupplot[
  width=0.25\linewidth, enlarge x limits=0.60,
  height=3cm, bar width=2.7pt,
  symbolic x coords={L,Avg,H}, xtick={Avg}, xmin=L, xmax=H,
  ymin=0, ymax=0.85, ytick={0,0.3,0.6},
]
  \addplot[fill=colMAS!85, draw=none] coordinates {(Avg,0.172)};
  \addplot[fill=colRaw!85, draw=none] coordinates {(Avg,0.296)};
  \addplot[fill=colComp!85, draw=none] coordinates {(Avg,0.608)};
  \addplot[fill=colOurs!85, draw=none] coordinates {(Avg,0.207)};

\nextgroupplot[
  width=0.31\linewidth, enlarge x limits=0.14,
  height=3cm, bar width=2.7pt,
  symbolic x coords={Auto, DWD, Sachs, Child}, xtick=data,
  ymin=0, ymax=7.5, ytick={0,2,4,6},
]
  \addplot[fill=colMAS!85, draw=none] coordinates {(Auto,0.69)(DWD,1.05)(Sachs,3.82)(Child,6.49)};
  \addplot[fill=colRaw!85, draw=none] coordinates {(Auto,0.38)(DWD,0.45)(Sachs,0.33)(Child,0.50)};
  \addplot[fill=colComp!85, draw=none] coordinates {(Auto,0.154)(DWD,0.203)(Sachs,0.366)(Child,0.463)};
  \addplot[fill=colOurs!85, draw=none] coordinates {(Auto,0.37)(DWD,0.49)(Sachs,0.53)(Child,0.48)};

\nextgroupplot[
  width=0.21\linewidth, enlarge x limits=0.45,
  height=3cm, bar width=2.7pt,
  symbolic x coords={Taobao, Dia}, xtick=data,
  ymin=0, ymax=20, ytick={0,5,10,15,20},
]
  \addplot[fill=colMAS!85, draw=none] coordinates {(Taobao,11.46)(Dia,15.80)};
  \addplot[fill=colRaw!85, draw=none] coordinates {(Taobao,3.75)(Dia,6.48)};
  \addplot[fill=colComp!85, draw=none] coordinates {(Taobao,4.34)(Dia,3.58)};
  \addplot[fill=colOurs!85, draw=none] coordinates {(Taobao,3.48)(Dia,5.05)};

\nextgroupplot[
  width=0.21\linewidth, enlarge x limits=0.60,
  height=3cm, bar width=2.7pt,
  symbolic x coords={L,Avg,H}, xtick={Avg}, xmin=L, xmax=H,
  ylabel={\scriptsize Latency},
  ymode=log, ymin=30, ymax=800,
  ytick={60,300,600},
  yticklabels={1\,min,5\,min,10\,min},
]
  \addplot[fill=colMAS!85, draw=none] coordinates {(Avg,361)};
  \addplot[fill=colRaw!85, draw=none] coordinates {(Avg,75)};
  \addplot[fill=colComp!85, draw=none] coordinates {(Avg,143)};
  \addplot[fill=colOurs!85, draw=none] coordinates {(Avg,101)};

\nextgroupplot[
  width=0.25\linewidth, enlarge x limits=0.60,
  height=3cm, bar width=2.7pt,
  symbolic x coords={L,Avg,H}, xtick={Avg}, xmin=L, xmax=H,
  ymode=log, ymin=20, ymax=250,
  ytick={30,60,120},
  yticklabels={30\,s,1\,min,2\,min},
]
  \addplot[fill=colMAS!85, draw=none] coordinates {(Avg,123)};
  \addplot[fill=colRaw!85, draw=none] coordinates {(Avg,34)};
  \addplot[fill=colComp!85, draw=none] coordinates {(Avg,115)};
  \addplot[fill=colOurs!85, draw=none] coordinates {(Avg,52)};

\nextgroupplot[
  width=0.31\linewidth, enlarge x limits=0.14,
  height=3cm, bar width=2.7pt,
  symbolic x coords={Auto, DWD, Sachs, Child}, xtick=data,
  ymode=log, ymin=30, ymax=8000,
  ytick={60,600,3600},
  yticklabels={1\,min,10\,min,1\,hr},
]
  \addplot[fill=colMAS!85, draw=none] coordinates {(Auto,538)(DWD,749)(Sachs,2747)(Child,3774)};
  \addplot[fill=colRaw!85, draw=none] coordinates {(Auto,165.7)(DWD,203)(Sachs,153.6)(Child,201.8)};
  \addplot[fill=colComp!85, draw=none] coordinates {(Auto,52.3)(DWD,96.8)(Sachs,198.0)(Child,328.9)};
  \addplot[fill=colOurs!85, draw=none] coordinates {(Auto,112.0)(DWD,169.3)(Sachs,301.8)(Child,170.6)};

\nextgroupplot[
  width=0.21\linewidth, enlarge x limits=0.45,
  height=3cm, bar width=2.7pt,
  symbolic x coords={Taobao, Dia}, xtick=data,
  ymode=log, ymin=1000, ymax=100000,
  ytick={3600,36000},
  yticklabels={1\,hr,10\,hr},
]
  \addplot[fill=colMAS!85, draw=none] coordinates {(Taobao,32400)(Dia,43200)};
  \addplot[fill=colRaw!85, draw=none] coordinates {(Taobao,2235)(Dia,3042)};
  \addplot[fill=colComp!85, draw=none] coordinates {(Taobao,2682)(Dia,2466)};
  \addplot[fill=colOurs!85, draw=none] coordinates {(Taobao,1972)(Dia,2301)};

\end{groupplot}
\node[above, inner sep=2pt]
  at ($(group c1r1.north west)!0.5!(group c4r1.north east)+(0,10mm)$)
  {\pgfplotslegendfromname{fullbarlegend}};
\end{tikzpicture}%
}
\caption{\textbf{Final performance and efficiency breakdown.} \textbf{Row 1}: selected final-skill performance per dataset (EX\% / MSA\% / F1${\times}100$ / AUC${\times}100$). \textbf{Rows 2--3}: selected-final efficiency; T2SQL/CE show the task averages reported in Tables~\ref{tab:text2sql-side-by-side} and~\ref{tab:causal-est}, while CD/FE show per-dataset latency and cache-adjusted cost. Under our accounting protocol, the evaluated source workflows receive little cache credit because staged and role-conditioned calls change prompts across execution. ${\dagger}$FELA cost is estimated: Taobao \$7.5--\$15, Dia \$10--\$22 per run.}
\label{fig:full-bar}
\end{figure}


\input{sections/app_case_study}

%% file: figures/fig_tab_output-dist.tex
\begin{table}[htbp]
\centering
\caption{Output dissimilarities used for $F_{\text{out}}$. For $F_{\text{trace}}$, process-trace embeddings with final responses removed use scaled cosine dissimilarity uniformly across tasks.}
\label{tab:output-dist}
\small
\begin{adjustbox}{max width=\textwidth}
\begin{tabular}{llll}
\toprule[0.1em]
\textbf{Task} & \textbf{Output} & $d(o_i,o_j)$ for $F_{\text{out}}$ & \textbf{Aggregation} \\
\midrule[0.05em]
Causal Discovery & Edge set $E$ & $1 - \text{Jaccard}(E_i,E_j)$ & Per-dataset, $F$ over all $N$ runs \\
Causal Estimation & Selected method & $\mathbf{1}[m_i \neq m_j]$; hierarchical execution audit & Per-question mean over identifiable questions \\
Feature Engineering & Feature set $\mathcal{F}$ & $1 - \text{Jaccard}(\mathcal{F}_i,\mathcal{F}_j)$ & Per-dataset, $F$ over all $N$ runs \\
Text-to-SQL & Predicted SQL & $1 - \text{Jaccard}(\text{tok}_i,\text{tok}_j)$ & Per-question mean over identifiable questions \\
\bottomrule[0.1em]
\end{tabular}
\end{adjustbox}

\end{table}

%% file: figures/fig_tab_ce-structured-behavior.tex
\begin{table}[htbp]
\centering
\caption{\textbf{CE execution-signature audit.} The canonical selected-estimator distance is enriched hierarchically with response-free execution configuration parsed from retained tool calls. Different estimator classes have distance one; within-class distance is one half the Jaccard distance over statistical packages, estimator calls, dataframe columns, causal-design tokens, keyword arguments, and whitelisted numeric configurations. Tool results, final responses, and estimated effects are excluded. $K$ is the number of identifiable diagnostic questions.}
\label{tab:ce-structured-behavior}
\small
\setlength{\tabcolsep}{5.5pt}
\begin{tabular}{llrrr}
\toprule[0.1em]
\textbf{Dataset} & \textbf{Metric} & $K$ & $F_{\mathrm{method}}$ & $F_{\mathrm{exec}}$ \\
\midrule[0.05em]
Textbook & MSA & 6 & 0.098 & 0.181 \\
Textbook & MRE & 6 & 1.012 & 1.085 \\
Real & MSA & 5 & 0.236 & 0.255 \\
Real & MRE & 6 & 0.697 & 0.762 \\
Synthetic & MRE & 6 & 0.638 & 0.651 \\
\bottomrule[0.1em]
\end{tabular}
\end{table}

%% file: figures/fig_tab_structural-ranking.tex
\begin{table}[htbp]
\centering
\caption{\textbf{Threshold-free ranking of structural value.} The positive class is Full $>$ Discard. Scores use $-F$; the exact one-sided test enumerates all label assignments conditional on class count.}
\label{tab:structural-ranking}
\small
\begin{tabular}{lrrrrr}
\toprule[0.1em]
Audit & $n$ & Full $>$ Discard & Discard $>$ Full & ROC AUC & Exact $p$ \\
\midrule[0.05em]
Route-grade & 13 & 5 & 8 & \textbf{1.000} & \textbf{0.0008} \\
Complete & 16 & 5 & 11 & \textbf{1.000} & \textbf{0.0002} \\
\bottomrule[0.1em]
\end{tabular}
\end{table}

%% file: figures/fig_tab_pipeline-intervention-16.tex
\begin{table}[htbp]
\centering
\caption{\textbf{Complete matched pipeline interventions } Base, Full, and Discard are higher-is-better utilities; MRE is converted as $1-\mathrm{MRE}$. The final column is the normalized Full-minus-Discard treatment effect. Grey rows have fewer than three identifiable questions.}
\label{tab:pipeline-intervention-16}
\scriptsize
\setlength{\tabcolsep}{3.1pt}
\begin{tabular}{lllrrrrrr}
\toprule[0.1em]
\textbf{Task} & \textbf{Dataset} & \textbf{Metric} & \textbf{$K$} & \textbf{$F$} & \textbf{Base} & \textbf{Full} & \textbf{Discard} & \textbf{$\tau_{\rm pipe}$} \\
\midrule[0.05em]
CE & Textbook & MSA & 6 & .098 & .615 & .846 & .771 & +19.6 \\
CE & Real & MSA & 5 & .235 & .871 & .903 & .871 & +25.0 \\
CE & Textbook & MRE & 6 & 1.012 & .688 & .737 & .769 & -10.3 \\
CE & Real & MRE & 5 & .697 & .817 & .827 & .831 & -2.1 \\
CE & Synthetic & MRE & 6 & .638 & .894 & .924 & .951 & -26.0 \\
CD & AutoMPG & F1 & -- & .914 & .670 & .652 & .667 & -4.5 \\
CD & Child & F1 & -- & .291 & .803 & .952 & .918 & +17.3 \\
CD & DWDClimate & F1 & -- & .605 & .558 & .701 & .663 & +8.6 \\
CD & Sachs & F1 & -- & .598 & .869 & .924 & .893 & +23.7 \\
T2SQL & BIRD-147 & EX & 4 & .686 & .592 & .660 & .673 & -3.3 \\
T2SQL & BIRD-147 & NSF & 6 & .816 & .905 & .925 & .934 & -9.3 \\
\rowcolor{gray!12} T2SQL & BIRD-147 & SRR & 2 & .697 & .718 & .782 & .791 & -3.1 \\
FE & Taobao & AUC & -- & 1.037 & .660 & .657 & .670 & -3.8 \\
\rowcolor{gray!12} T2SQL & Spider-120 & EX & 2 & 1.071 & .600 & .642 & .665 & -5.8 \\
\rowcolor{gray!12} T2SQL & Spider-120 & NSF & 1 & .787 & .114 & .585 & .597 & -1.4 \\
FE & Dia & AUC & -- & 1.337 & .804 & .798 & .812 & -7.0 \\
\bottomrule[0.1em]
\end{tabular}
\end{table}

%% file: figures/fig_tab_f-economics.tex
\begin{table}[htbp]
\centering
\caption{\textbf{Dataset-level audit of cross-fitted routing and the full-information structural oracle.} Base, Full, and Discard are matched candidates. The LOFO policy output is the route-selected candidate for single-metric CD/FE (Selected), so equality with Full or Discard is exact by construction; for multi-metric CE/T2SQL it is one directly executed skill jointly composed from the metric-level route vector (Composed). The final column is a descriptive in-sample oracle that chooses the better observed Full/Discard candidate after target outcomes are revealed; it is not an AdaSkill execution. Values are native percentages; MRE is lower-is-better.}
\label{tab:f-economics}
\scriptsize
\setlength{\tabcolsep}{2.1pt}
\begin{adjustbox}{max width=\textwidth}
\begin{tabular}{lllrrrclrr}
\toprule[0.1em]
\textbf{Family} & \textbf{Dataset} & \textbf{Metric} & \textbf{Base} & \textbf{Full} & \textbf{Discard} & \textbf{Route} & \textbf{Output type} & \textbf{LOFO policy} & \textbf{In-sample oracle} \\
\midrule[0.05em]
CE & Textbook & MSA & 61.54 & 84.62 & 77.10 & $P$ & Composed & 89.70 & Full (84.62) \\
CE & Real & MSA & 87.10 & 90.32 & 87.10 & $P$ & Composed & 96.80 & Full (90.32) \\
CE & Textbook & MRE & 31.25 & 26.27 & 23.06 & $D$ & Composed & 23.10 & Discard (23.06) \\
CE & Real & MRE & 18.33 & 17.33 & 16.95 & $D$ & Composed & 15.90 & Discard (16.95) \\
CE & Synthetic & MRE & 10.56 & 7.63 & 4.88 & $D$ & Composed & 4.30 & Discard (4.88) \\
CD & AutoMPG & F1 & 67.00 & 65.20 & 66.67 & $D$ & Selected & 66.67 & Discard (66.67) \\
CD & Child & F1 & 80.30 & 95.20 & 91.80 & $D$ & Selected & 91.80 & Full (95.20) \\
CD & DWDClimate & F1 & 55.80 & 70.10 & 66.30 & $D$ & Selected & 66.30 & Full (70.10) \\
CD & Sachs & F1 & 86.90 & 92.40 & 89.30 & $D$ & Selected & 89.30 & Full (92.40) \\
T2SQL & BIRD-147 & EX & 59.20 & 65.99 & 67.35 & $P$ & Composed & 66.93 & Discard (67.35) \\
T2SQL & BIRD-147 & NSF & 90.50 & 92.52 & 93.40 & $D$ & Composed & 93.80 & Discard (93.40) \\
T2SQL & BIRD-147 & SRR & 71.80 & 78.23 & 79.10 & $D$ & Composed & 80.20 & Discard (79.10) \\
FE & Taobao & AUC & 66.02 & 65.70 & 67.00 & $D$ & Selected & 67.00 & Discard (67.00) \\
T2SQL & Spider-120 & EX & 60.00 & 64.20 & 66.50 & $P$ & Composed & 66.10 & Discard (66.50) \\
T2SQL & Spider-120 & NSF & 11.40 & 58.50 & 59.70 & $D$ & Composed & 60.20 & Discard (59.70) \\
FE & Dia & AUC & 80.40 & 79.82 & 81.20 & $D$ & Selected & 81.20 & Discard (81.20) \\
\bottomrule[0.1em]
\end{tabular}
\end{adjustbox}
\end{table}

%% file: figures/fig_fig_metric-freedom-scatter_openai.tex
\begin{figure}[htbp]
\centering
\includegraphics[width=.96\linewidth]{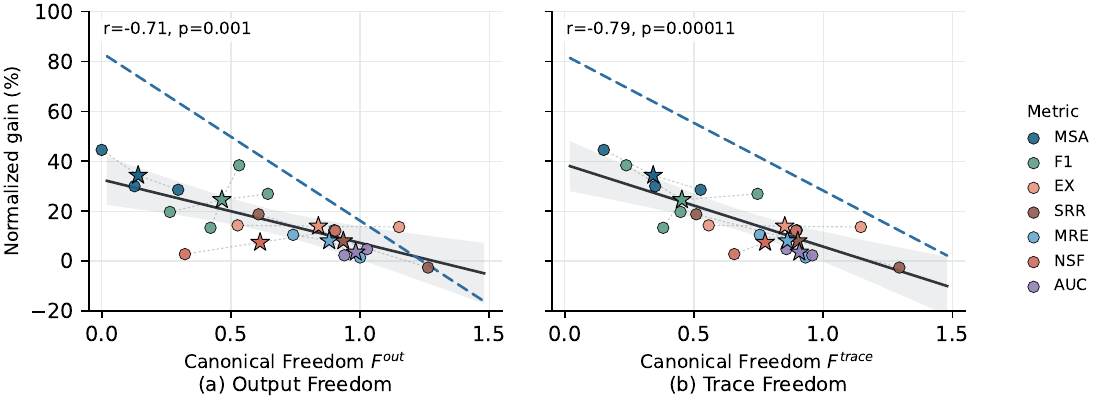}
\caption{\textbf{The pre-skill coupling signal persists under GPT-5.1.} Canonical output-space \textbf{(a)} and response-free trace-space \textbf{(b)} estimates preserve the negative ordering with AdaSkill gain ($r=-0.71$ and $r=-0.79$) across all 18 dataset--metric tuples. Both Freedom and gain are re-estimated under GPT-5.1 using the same evaluation protocol as Figure~\ref{fig:metric-freedom-scatter}. Black solid lines fit GPT-5.1; blue dashed lines transfer the Sonnet~4.6 fits from Figure~\ref{fig:metric-freedom-scatter} without refitting. Circles are individual datasets; stars are metric means.}
\label{fig:metric-freedom-scatter-gpt}
\end{figure}

%% file: figures/fig_tab_gain-intervals.tex
\begin{table}[htbp]
\centering
\caption{\textbf{AdaSkill gains, aggregate robustness, and efficiency.} Positive values favor AdaSkill. Brackets give 95\% intervals; bold intervals exclude zero. MRE gains are baseline minus AdaSkill, with relative reductions in parentheses. All CE comparisons and T2SQL comparisons with Base and Compiler use 200,000 paired question-bootstrap samples. CE Overall MRE uses a dataset-stratified paired bootstrap with equal dataset weights. Ranges in $n$ denote pair-specific complete-case counts across baseline comparisons. Acc@$\tau$ is the fraction of Synthetic questions with relative error at most $\tau$. Efficiency entries are signed savings relative to each baseline: positive values are reductions and negative values are increases. T2SQL EvoSkill entries marked $\dagger$ use Newcombe intervals from final counts.}
\label{tab:gain-intervals}
\scriptsize
\setlength{\tabcolsep}{3.0pt}
\renewcommand{\arraystretch}{0.95}
\begin{adjustbox}{max width=\textwidth}
\begin{tabular}{llrrrr}
\toprule[0.1em]
\textbf{Task} & \textbf{Dataset and metric} & $\boldsymbol{n}$ & \textbf{vs. Base} & \textbf{vs. Compiler} & \textbf{vs. EvoSkill} \\
\midrule[0.05em]
\multicolumn{6}{l}{\textit{Panel A: Dataset-level performance gains (percentage points)}} \\
CE & Textbook MSA & 39 & $\mathbf{28.2\ [15.4,43.6]}$ & $7.7\ [-2.6,18.0]$ & $\mathbf{17.9\ [2.5,33.3]}$ \\
CE & Textbook MRE & 38--39 & $8.1\ [-0.8,18.0]\ (26.0\%)$ & $3.2\ [-4.4,11.7]\ (12.2\%)$ & $2.3\ [-4.6,10.1]\ (9.1\%)$ \\
CE & Synthetic MSA & 45 & $0.0\ [0.0,0.0]$ & $0.0\ [0.0,0.0]$ & $0.0\ [0.0,0.0]$ \\
CE & Synthetic MRE & 45 & $\mathbf{6.3\ [1.1,13.2]}\ (59.4\%)$ & $\mathbf{8.6\ [1.9,17.0]}\ (66.7\%)$ & $2.8\ [-1.0,7.8]\ (39.4\%)$ \\
CE & Real MSA & 31 & $9.7\ [-3.2,22.6]$ & $6.5\ [-6.4,19.4]$ & $3.2\ [-6.5,12.9]$ \\
CE & Real MRE & 31 & $2.4\ [-6.3,11.4]\ (13.1\%)$ & $1.4\ [-12.1,15.9]\ (8.1\%)$ & $5.7\ [-5.2,17.6]\ (26.4\%)$ \\
T2SQL & BIRD-147 EX & 147 & $\mathbf{8.2\ [3.4,13.0]}$ & $0.0\ [-4.1,3.4]$ & $1.4\ [-9.3,12.0]^{\dagger}$ \\
T2SQL & Spider-120 EX & 120 & $6.7\ [-1.6,15.0]$ & $\mathbf{9.2\ [2.5,15.9]}$ & $4.2\ [-7.9,16.0]^{\dagger}$ \\
\midrule[0.05em]
\multicolumn{6}{l}{\textit{Panel B: CE aggregate and threshold robustness}} \\
CE & Overall MRE & 3 datasets & $\mathbf{5.6\ [1.0,10.5]}\ (28.0\%)$ & $4.4\ [-1.4,10.5]\ (23.4\%)$ & $3.6\ [-1.0,8.5]\ (20.0\%)$ \\
CE & Synthetic Acc@10\% & 45 & $\mathbf{13.3\ [4.4,24.4]}$ & $\mathbf{11.1\ [2.2,20.0]}$ & $2.2\ [-6.7,11.1]$ \\
CE & Synthetic Acc@20\% & 45 & $\mathbf{11.1\ [2.2,20.0]}$ & $\mathbf{11.1\ [2.2,20.0]}$ & $6.7\ [-2.2,15.6]$ \\
CE & Synthetic Acc@50\% & 45 & $4.4\ [0.0,11.1]$ & $\mathbf{8.9\ [2.2,17.8]}$ & $4.4\ [0.0,11.1]$ \\
\midrule[0.05em]
\multicolumn{6}{l}{\textit{Panel C: CE efficiency and measured Pareto dominance}} \\
CE & Performance cells W/T/L & 6 & $5/1/0$ & $5/1/0$ & $5/1/0$ \\
CE & Inference-time saving & -- & $-52.9\%$ & $54.8\%$ & $21.2\%$ \\
CE & Inference-cost saving & -- & $30.1\%$ & $66.0\%$ & $23.3\%$ \\
CE & Pareto dominant in performance and cost & -- & $\checkmark$ & $\checkmark$ & $\checkmark$ \\
\bottomrule[0.1em]
\end{tabular}
\end{adjustbox}
\end{table}

%% file: figures/fig_tab_causal-est.tex
\begin{table}[htbp]
\setlength{\aboverulesep}{0.0ex}\setlength{\belowrulesep}{0.0ex}
\centering
\caption{\textbf{Causal Estimation results.} Method choice is tightly coupled to MSA ($F{\approx}0.2$) and weakly coupled to MRE ($F{\approx}0.8$). The pipeline-faithful Workflow Compiler gains strongly on Textbook MSA but regresses on Synthetic MRE, while AdaSkill records the highest MSA and lowest MRE on all three datasets.}
\label{tab:causal-est}
\small
\setlength{\tabcolsep}{3.5pt} 
\begin{adjustbox}{max width=\textwidth}
\begin{tabular}{l cc cc cc cc}
\toprule[0.1em]
\textbf{Dataset$\rightarrow$}& \multicolumn{2}{c}{\textbf{Textbook}} & \multicolumn{2}{c}{\textbf{Synthetic}} & \multicolumn{2}{c}{\textbf{Real}} & \multicolumn{2}{c}{\textbf{Efficiency}} \\
\cmidrule[0.05em](lr){2-3}\cmidrule[0.05em](lr){4-5}\cmidrule[0.05em](lr){6-7}\cmidrule[0.05em](lr){8-9}
\textbf{Method$\downarrow$} & MSA$\uparrow$ & MRE$\downarrow$ & MSA$\uparrow$ & MRE$\downarrow$ & MSA$\uparrow$ & MRE$\downarrow$ & Time (s)$\downarrow$ & Cost (\$)$\downarrow$ \\
\midrule[0.05em]
CAIS (workflow ref.) & 83.3 & 54.0 & 75.9 & 16.2 & 78.3 & 32.0 & 123 & 0.172 \\
\midrule[0.05em]
Base Agent & 61.5 & 31.2 & \textbf{100} & 10.6 & 87.1 & 18.3 & 34 & 0.296 \\
EvoSkill & 71.8 & 25.4 & \textbf{100} & 7.1 & 93.5 & 21.6 & 66 & 0.270 \\
Workflow Compiler & 82.1 & 26.3 & \textbf{100} & 12.9 & 90.3 & 17.3 & 115 & 0.608 \\
\midrule[0.05em]
\textit{Tools Only} & 69.2 & 28.3 & 95.6 & 5.5 & 87.1 & 17.0 & 49 & 0.208 \\
\textit{Knowledge Only} & 79.5 & 25.6 & 91.1 & 6.0 & 90.3 & 18.0 & 55 & 0.305 \\
\textit{Pipeline Only} & 84.6 & 26.5 & \textbf{100} & 5.5 & \textbf{96.8} & 17.8 & 56 & 0.295 \\
\rowcolor{lightgreen}
\textbf{AdaSkill (Ours)} & \textbf{89.7} & \textbf{23.1} & \textbf{100} & \textbf{4.3} & \textbf{96.8} & \textbf{15.9} & 52 & 0.207 \\
\bottomrule[0.1em]
\end{tabular}
\end{adjustbox}
\smallskip
\end{table}

%% file: figures/fig_tab_causal-disc.tex
\begin{table}[htbp]
\centering
\caption{\textbf{Causal Discovery full metrics (mean$\,\pm\,$std, 3 runs).} P = Precision, F1 = F1 Score, FPR = False Positive Rate, SHD = Structural Hamming Distance, NHD = Normalized Hamming Distance.}
\label{tab:causal-disc-full}
\small
\setlength{\tabcolsep}{1pt} 
\begin{adjustbox}{max width=\textwidth}
\begin{tabular}{l ccccccc ccccccc}
\toprule[0.1em]

\textbf{Dataset$\rightarrow$}& \multicolumn{7}{c}{\textbf{AutoMPG}} & \multicolumn{7}{c}{\textbf{DWDClimate}} \\
\cmidrule[0.05em](lr){2-8} \cmidrule[0.05em](lr){9-15}
\textbf{Method$\downarrow$} & P$\uparrow$ & F1$\uparrow$ & FPR$\downarrow$ & SHD$\downarrow$ & NHD$\downarrow$ & Time$\downarrow$ & Cost$\downarrow$ & P$\uparrow$ & F1$\uparrow$ & FPR$\downarrow$ & SHD$\downarrow$ & NHD$\downarrow$ & Time$\downarrow$ & Cost$\downarrow$ \\
\midrule[0.05em]
MATMCD (workflow ref.) & .71{\tiny$\pm$.06} & .74{\tiny$\pm$.02} & .12{\tiny$\pm$.01} & 2.3{\tiny$\pm$1.3} & .10{\tiny$\pm$.01} & 538.0{\tiny$\pm$42.5} & .69{\tiny$\pm$.12} & .60{\tiny$\pm$.02} & .66{\tiny$\pm$.06} & .10{\tiny$\pm$.01} & 5.0{\tiny$\pm$1.0} & .15{\tiny$\pm$.01} & 749.0{\tiny$\pm$65.3} & 1.05{\tiny$\pm$.15} \\
\midrule[0.05em]
Base Agent & .58{\tiny$\pm$.07} & .67{\tiny$\pm$.05} & .20{\tiny$\pm$.05} & 3.0{\tiny$\pm$.8} & .15{\tiny$\pm$.04} & 165.7{\tiny$\pm$35.9} & 2.00{\tiny$\pm$.54} & .55{\tiny$\pm$.05} & .56{\tiny$\pm$.01} & .13{\tiny$\pm$.04} & 5.5{\tiny$\pm$.5} & .18{\tiny$\pm$.02} & 203.0{\tiny$\pm$70.5} & 2.42{\tiny$\pm$.26} \\
EvoSkill & .62{\tiny$\pm$.05} & .70{\tiny$\pm$.03} & .17{\tiny$\pm$.03} & 2.5{\tiny$\pm$.5} & .13{\tiny$\pm$.03} & 193.7{\tiny$\pm$44.2} & 2.03{\tiny$\pm$.63} & .67{\tiny$\pm$.05} & .74{\tiny$\pm$.03} & .10{\tiny$\pm$.02} & 3.5{\tiny$\pm$.5} & .12{\tiny$\pm$.02} & 205.0{\tiny$\pm$33.8} & 2.48{\tiny$\pm$.28} \\
Workflow Compiler & .62{\tiny$\pm$.07} & .70{\tiny$\pm$.03} & .17{\tiny$\pm$.03} & 2.5{\tiny$\pm$.5} & .13{\tiny$\pm$.03} & 52.3{\tiny$\pm$8.0} & .82{\tiny$\pm$.08} & .59{\tiny$\pm$.03} & .64{\tiny$\pm$.08} & .12{\tiny$\pm$.03} & 4.7{\tiny$\pm$.6} & .16{\tiny$\pm$.02} & 96.8{\tiny$\pm$14.7} & 1.08{\tiny$\pm$.02} \\
\midrule[0.05em]
\textit{Tools Only} & .52{\tiny$\pm$.03} & .65{\tiny$\pm$.02} & .24{\tiny$\pm$.03} & 3.7{\tiny$\pm$.5} & .18{\tiny$\pm$.02} & 112.2{\tiny$\pm$20.2} & .43{\tiny$\pm$.09} & .52{\tiny$\pm$.03} & .54{\tiny$\pm$.05} & .13{\tiny$\pm$.01} & 5.7{\tiny$\pm$.5} & .19{\tiny$\pm$.02} & 123.1{\tiny$\pm$64.9} & .45{\tiny$\pm$.11} \\
\textit{Knowledge Only} & .57{\tiny$\pm$.03} & .67{\tiny$\pm$.02} & .20{\tiny$\pm$.02} & 3.0{\tiny$\pm$.4} & .15{\tiny$\pm$.02} & 22.9{\tiny$\pm$1.8} & .47{\tiny$\pm$.08} & .61{\tiny$\pm$.05} & .74{\tiny$\pm$.06} & .15{\tiny$\pm$.02} & 4.0{\tiny$\pm$.8} & .13{\tiny$\pm$.03} & 30.2{\tiny$\pm$2.7} & .49{\tiny$\pm$.09} \\
\textit{Pipeline Only} & .57{\tiny$\pm$.02} & .67{\tiny$\pm$.03} & .20{\tiny$\pm$.01} & 3.0{\tiny$\pm$.5} & .15{\tiny$\pm$.01} & 22.6{\tiny$\pm$1.0} & .44{\tiny$\pm$.07} & .50{\tiny$\pm$.07} & .61{\tiny$\pm$.10} & .19{\tiny$\pm$.02} & 6.0{\tiny$\pm$1.4} & .20{\tiny$\pm$.05} & 32.0{\tiny$\pm$5.7} & .42{\tiny$\pm$.08} \\
\rowcolor{lightgreen}
\textbf{AdaSkill (Ours)} & .56{\tiny$\pm$.08} & .65{\tiny$\pm$.05} & .22{\tiny$\pm$.06} & 3.3{\tiny$\pm$.9} & .17{\tiny$\pm$.05} & 112.0{\tiny$\pm$1.2} & 1.99{\tiny$\pm$.31} & .65{\tiny$\pm$.07} & .70{\tiny$\pm$.05} & .11{\tiny$\pm$.04} & 4.0{\tiny$\pm$.8} & .13{\tiny$\pm$.03} & 169.3{\tiny$\pm$51.8} & 2.62{\tiny$\pm$1.57} \\
\midrule[0.05em]

\textbf{Dataset$\rightarrow$}& \multicolumn{7}{c}{\textbf{Sachs}} & \multicolumn{7}{c}{\textbf{Child}} \\
\cmidrule[0.05em](lr){2-8} \cmidrule[0.05em](lr){9-15}
\textbf{Method$\downarrow$} & P$\uparrow$ & F1$\uparrow$ & FPR$\downarrow$ & SHD$\downarrow$ & NHD$\downarrow$ & Time$\downarrow$ & Cost$\downarrow$ & P$\uparrow$ & F1$\uparrow$ & FPR$\downarrow$ & SHD$\downarrow$ & NHD$\downarrow$ & Time$\downarrow$ & Cost$\downarrow$ \\
\midrule[0.05em]
MATMCD (workflow ref.) & .58{\tiny$\pm$.10} & .56{\tiny$\pm$.11} & .06{\tiny$\pm$.01} & 13.3{\tiny$\pm$10.3} & .11{\tiny$\pm$.01} & 2747.0{\tiny$\pm$185.5} & 3.82{\tiny$\pm$.45} & .65{\tiny$\pm$.06} & .65{\tiny$\pm$.05} & .02{\tiny$\pm$.01} & 14.3{\tiny$\pm$8.3} & .03{\tiny$\pm$.01} & 3774.0{\tiny$\pm$256.2} & 6.49{\tiny$\pm$.72} \\
\midrule[0.05em]
Base Agent & .86{\tiny$\pm$.03} & .87{\tiny$\pm$.05} & .03{\tiny$\pm$.01} & 4.0{\tiny$\pm$1.0} & .04{\tiny$\pm$.01} & 153.6{\tiny$\pm$2.9} & 1.78{\tiny$\pm$.53} & .77{\tiny$\pm$.17} & .80{\tiny$\pm$.17} & .02{\tiny$\pm$.01} & 9.3{\tiny$\pm$7.4} & .03{\tiny$\pm$.02} & 201.8{\tiny$\pm$51.4} & 2.66{\tiny$\pm$.52} \\
EvoSkill & .86{\tiny$\pm$.08} & .87{\tiny$\pm$.08} & .03{\tiny$\pm$.02} & 3.5{\tiny$\pm$1.7} & .03{\tiny$\pm$.02} & 154.0{\tiny$\pm$39.9} & 1.39{\tiny$\pm$.13} & .88{\tiny$\pm$.13} & .90{\tiny$\pm$.10} & .01{\tiny$\pm$.01} & 5.0{\tiny$\pm$5.0} & .01{\tiny$\pm$.01} & 457.2{\tiny$\pm$133.2} & 4.83{\tiny$\pm$.72} \\
Workflow Compiler & .75{\tiny$\pm$.11} & .67{\tiny$\pm$.26} & .03{\tiny$\pm$.01} & 7.7{\tiny$\pm$4.6} & .07{\tiny$\pm$.04} & 198.0{\tiny$\pm$109.1} & 1.95{\tiny$\pm$1.11} & .86{\tiny$\pm$.05} & .86{\tiny$\pm$.05} & .01{\tiny$\pm$.01} & 6.7{\tiny$\pm$3.5} & .02{\tiny$\pm$.01} & 328.9{\tiny$\pm$131.5} & 2.48{\tiny$\pm$.65} \\
\midrule[0.05em]
\textit{Tools Only} & .86{\tiny$\pm$.08} & .89{\tiny$\pm$.08} & .03{\tiny$\pm$.02} & 3.5{\tiny$\pm$1.7} & .03{\tiny$\pm$.02} & 92.9{\tiny$\pm$62.9} & .29{\tiny$\pm$.05} & .90{\tiny$\pm$.05} & .85{\tiny$\pm$.03} & .01{\tiny$\pm$.01} & 5.7{\tiny$\pm$.9} & .01{\tiny$\pm$.01} & 115.6{\tiny$\pm$23.2} & .54{\tiny$\pm$.15} \\
\textit{Knowledge Only} & .88{\tiny$\pm$.05} & .89{\tiny$\pm$.05} & .02{\tiny$\pm$.01} & 3.3{\tiny$\pm$1.3} & .03{\tiny$\pm$.01} & 54.9{\tiny$\pm$9.4} & .41{\tiny$\pm$.07} & .81{\tiny$\pm$.08} & .79{\tiny$\pm$.07} & .01{\tiny$\pm$.01} & 9.3{\tiny$\pm$3.3} & .03{\tiny$\pm$.01} & 161.4{\tiny$\pm$103.2} & .54{\tiny$\pm$.11} \\
\textit{Pipeline Only} & .94{\tiny$\pm$.05} & .95{\tiny$\pm$.05} & .01{\tiny$\pm$.01} & 1.7{\tiny$\pm$1.7} & .02{\tiny$\pm$.02} & 75.9{\tiny$\pm$65.1} & .45{\tiny$\pm$.08} & .79{\tiny$\pm$.05} & .81{\tiny$\pm$.04} & .02{\tiny$\pm$.01} & 9.0{\tiny$\pm$2.2} & .02{\tiny$\pm$.01} & 55.0{\tiny$\pm$18.2} & .77{\tiny$\pm$.14} \\
\rowcolor{lightgreen}
\textbf{AdaSkill (Ours)} & .91{\tiny$\pm$.07} & .92{\tiny$\pm$.06} & .02{\tiny$\pm$.01} & 2.3{\tiny$\pm$1.7} & .02{\tiny$\pm$.02} & 301.8{\tiny$\pm$212.5} & 2.82{\tiny$\pm$1.11} & .96{\tiny$\pm$.06} & .95{\tiny$\pm$.07} & .00{\tiny$\pm$.00} & 1.7{\tiny$\pm$2.4} & .00{\tiny$\pm$.01} & 170.6{\tiny$\pm$12.7} & 2.55{\tiny$\pm$.62} \\
\bottomrule[0.1em]
\end{tabular}
\end{adjustbox}
\end{table}

%% file: figures/fig_tab_fe.tex
\begin{table}[htbp]
\centering
\caption{\textbf{Feature Engineering results (AUC, mean$\,\pm\,$std).} In these high-$F$ settings, AdaSkill achieves the best mean AUC on Taobao and matches the source FELA workflow on Dia at substantially lower reported cost. The FELA workflow cost is estimated from its recorded operating range. High-$F$ feature engineering favors minimal structure.}
\label{tab:fe}
\small
\begin{adjustbox}{max width=\textwidth}
\begin{tabular}{l ccccc ccccc}
\toprule[0.1em]
& \multicolumn{5}{c}{\textbf{Taobao}} & \multicolumn{5}{c}{\textbf{Dia}} \\
\cmidrule[0.05em](lr){2-6}\cmidrule[0.05em](lr){7-11}
\textbf{Method} & AUC & Max & \#F & Time (s) & Cost (\$) & AUC & Max & \#F & Time (s) & Cost (\$) \\
\midrule[0.05em]
FELA (workflow ref.)$^\dagger$ & .653 & .653 & N/A & 32400 & 11.46 & .812 & .812 & N/A & 43200 & 15.80 \\
\midrule[0.05em]
Base Agent & .660{\tiny$\pm$.005} & .667 & 59 & 2235 & 3.75 & .804{\tiny$\pm$.006} & .811 & 76 & 3042 & 6.48 \\
FELA Skill (single agent) & .663{\tiny$\pm$.003} & .666 & 46 & N/A & 8.03 & \textbf{.813}{\tiny$\pm$.009} & \textbf{.820} & 33 & N/A & 7.44{\tiny$\pm$1.52} \\
w/ Workflow Compiler & .658{\tiny$\pm$.008} & .666 & 48 & 2682 & 4.34 & .798{\tiny$\pm$.002} & .800 & 18 & 2466 & 3.58{\tiny$\pm$2.18} \\
\rowcolor{lightgreen}
\textbf{AdaSkill (Ours)} & \textbf{.670}{\tiny$\pm$.002} & \textbf{.672} & 46 & 1972 & 3.48{\tiny$\pm$1.72} & .812{\tiny$\pm$.001} & .813 & 55 & 2301 & 5.05{\tiny$\pm$1.57} \\
\bottomrule[0.1em]
\end{tabular}
\end{adjustbox}
\end{table}

%% file: figures/fig_tab_text2sql.tex
\begin{table}[htbp]
\setlength{\aboverulesep}{0.0ex}\setlength{\belowrulesep}{0.0ex}
\centering
\caption{\textbf{Text-to-SQL results.} SRR, NSR, NSP, and NSF score the explicit \texttt{selected\_columns} declaration; EX independently scores the executed result. Their divergence on Spider records incomplete declarations alongside executable SQL. AdaSkill improves seven of eight declaration metrics over the Base Agent and raises EX on both datasets.}
\label{tab:text2sql-side-by-side}
\begin{small}
\setlength{\tabcolsep}{3pt} 
\begin{adjustbox}{max width=\textwidth}
\begin{tabular}{l ccccc ccccc cc}
\toprule[0.1em]
\textbf{Dataset$\rightarrow$}& \multicolumn{5}{c}{\textbf{BIRD-147}} & \multicolumn{5}{c}{\textbf{Spider-120}} & \multicolumn{2}{c}{\textbf{Efficiency}} \\
\cmidrule[0.05em](lr){2-6} \cmidrule[0.05em](lr){7-11} \cmidrule[0.05em](lr){12-13}
\textbf{Method$\downarrow$} & SRR$\uparrow$ & NSR$\uparrow$ & NSP$\uparrow$ & NSF$\uparrow$ & EX$\uparrow$ & SRR$\uparrow$ & NSR$\uparrow$ & NSP$\uparrow$ & NSF$\uparrow$ & EX$\uparrow$ & Time$\downarrow$ & Cost$\downarrow$ \\
\midrule[0.05em]
APEX (workflow ref.) & \textbf{93.9} & \textbf{98.3} & 47.0 & 60.8 & \textbf{69.4} & \textbf{88.3} & \textbf{97.1} & 36.5 & 47.0 & \textbf{67.5} & 361 & 0.72 \\
\midrule[0.05em]
Base Agent & 71.8 & 91.8 & 90.0 & 90.5 & 59.2 & 0.8 & 8.4 & 21.7 & 11.4 & 60.0 & 75 & 0.39 \\
EvoSkill & 76.2 & 92.5 & 94.7 & 93.0 & 66.0 & 2.5 & 15.2 & 32.5 & 20.8 & 62.5 & 97 & 0.48 \\
Workflow Compiler & 79.6 & 93.6 & 94.8 & 93.7 & 67.4 & 1.7 & 10.3 & 22.6 & 13.4 & 57.5 & 143 & 0.59 \\
\midrule[0.05em]
\textit{Tools Only} & 75.5 & 92.5 & 94.7 & 92.9 & 66.0 & 3.3 & 20.5 & 45.0 & 28.5 & 63.3 & 104 & 0.37 \\
\textit{Knowledge Only} & 74.2 & 92.5 & 94.9 & 93.1 & 66.7 & 5.0 & 36.4 & 68.3 & 45.3 & 64.2 & 93 & 0.51 \\
\textit{Pipeline Only} & 68.7 & 90.7 & 94.2 & 91.7 & 64.6 & 1.7 & 12.0 & 28.3 & 16.2 & 60.8 & 124 & 0.88 \\
\rowcolor{lightgreen}
\textbf{AdaSkill (Ours)} & 80.8 & 94.2 & 94.9 & \textbf{94.1} & 67.4 & 7.5 & 51.3 & \textbf{85.8} & \textbf{60.5} & \textbf{66.7} & 101 & 0.45 \\
\bottomrule[0.1em]
\end{tabular}
\end{adjustbox}

\end{small}
\end{table}

%% file: figures/fig_fig_pareto.tex
\begin{figure}[htbp]
\centering
\begin{tikzpicture}
\begin{groupplot}[
  group style={group size=3 by 2, horizontal sep=9mm, vertical sep=11mm},
  width=0.31\textwidth,
  height=3.8cm,
  grid=major,
  grid style={gray!15},
  tick label style={font=\scriptsize},
  title style={font=\small\bfseries},
  label style={font=\scriptsize},
  xlabel={Cost (\$)},
  ylabel={Performance (\%)},
]

\nextgroupplot[title={T2SQL EX}, xlabel={}, xmin=0.34, xmax=0.77, ymin=57, ymax=71]
  \addplot[black!55, thick] coordinates {(0.39,59.6) (0.45,67.1) (0.72,68.5)};
  \addplot[only marks, mark=triangle*, mark options={fill=cdarkgray}, cdarkgray] coordinates {(0.72,68.5)};
  \addplot[only marks, mark=square*, mark options={fill=cgray}, cgray] coordinates {(0.39,59.6)};
  \addplot[only marks, mark=diamond*, mark options={fill=cblue}, cblue] coordinates {(0.59,62.5)};
  \addplot[only marks, mark=*, mark size=3pt, mark options={fill=cred}, cred] coordinates {(0.45,67.1)};

\nextgroupplot[title={T2SQL NSF}, xlabel={}, ylabel={}, xmin=0.34, xmax=0.77, ymin=47, ymax=80]
  \addplot[black!55, thick] coordinates {(0.39,51.0) (0.45,77.3)};
  \addplot[only marks, mark=triangle*, mark options={fill=cdarkgray}, cdarkgray] coordinates {(0.72,53.9)};
  \addplot[only marks, mark=square*, mark options={fill=cgray}, cgray] coordinates {(0.39,51.0)};
  \addplot[only marks, mark=diamond*, mark options={fill=cblue}, cblue] coordinates {(0.59,53.6)};
  \addplot[only marks, mark=*, mark size=3pt, mark options={fill=cred}, cred] coordinates {(0.45,77.3)};

\nextgroupplot[title={CE MSA}, xlabel={}, ylabel={}, xmin=0.07, xmax=0.65, ymin=76, ymax=99]
  \addplot[black!55, thick] coordinates {(0.11,82.9) (0.21,93.3)};
  \addplot[only marks, mark=triangle*, mark options={fill=cdarkgray}, cdarkgray] coordinates {(0.17,79.2)};
  \addplot[only marks, mark=square*, mark options={fill=cgray}, cgray] coordinates {(0.11,82.9)};
  \addplot[only marks, mark=diamond*, mark options={fill=cblue}, cblue] coordinates {(0.61,90.8)};
  \addplot[only marks, mark=*, mark size=3pt, mark options={fill=cred}, cred] coordinates {(0.21,93.3)};

\nextgroupplot[title={CE $1-$MRE}, xmin=0.07, xmax=0.65, ymin=62, ymax=89]
  \addplot[black!55, thick] coordinates {(0.11,80.0) (0.21,85.6)};
  \addplot[only marks, mark=triangle*, mark options={fill=cdarkgray}, cdarkgray] coordinates {(0.17,65.9)};
  \addplot[only marks, mark=square*, mark options={fill=cgray}, cgray] coordinates {(0.11,80.0)};
  \addplot[only marks, mark=diamond*, mark options={fill=cblue}, cblue] coordinates {(0.61,81.2)};
  \addplot[only marks, mark=*, mark size=3pt, mark options={fill=cred}, cred] coordinates {(0.21,85.6)};

\nextgroupplot[title={CD F1}, ylabel={}, xmin=0.15, xmax=3.2, ymin=62, ymax=84]
  \addplot[black!55, thick] coordinates {(0.30,71.7) (0.42,72.5) (0.47,80.7)};
  \addplot[only marks, mark=triangle*, mark options={fill=cdarkgray}, cdarkgray] coordinates {(3.01,65.3)};
  \addplot[only marks, mark=square*, mark options={fill=cgray}, cgray] coordinates {(0.42,72.5)};
  \addplot[only marks, mark=diamond*, mark options={fill=cblue}, cblue] coordinates {(0.30,71.7)};
  \addplot[only marks, mark=*, mark size=3pt, mark options={fill=cred}, cred] coordinates {(0.47,80.7)};

\nextgroupplot[title={FE AUC}, ylabel={}, xmin=1.5, xmax=14.5, ymin=71.8, ymax=75.0]
  \addplot[black!55, thick] coordinates {(2.78,72.8) (4.26,74.1)};
  \addplot[only marks, mark=triangle*, mark options={fill=cdarkgray}, cdarkgray] coordinates {(13.63,73.3)};
  \addplot[only marks, mark=square*, mark options={fill=cgray}, cgray] coordinates {(5.11,73.2)};
  \addplot[only marks, mark=diamond*, mark options={fill=cblue}, cblue] coordinates {(2.78,72.8)};
  \addplot[only marks, mark=*, mark size=3pt, mark options={fill=cred}, cred] coordinates {(4.26,74.1)};

\end{groupplot}
\node[font=\scriptsize, draw=gray!30, fill=white, inner sep=3pt]
  at ($(group c2r1.north)+(0,1.05cm)$)
  {\textcolor{cdarkgray}{$\blacktriangle$} Source Workflow\quad
   \textcolor{cgray}{$\blacksquare$} Base Agent\quad
   \textcolor{cblue}{$\blacklozenge$} Workflow Compiler\quad
   \textcolor{cred}{$\bullet$} AdaSkill};
\end{tikzpicture}
\caption{Performance and cost operating points for the six task-level aggregates in Table~\ref{tab:main-performance-efficiency}. Solid segments connect non-dominated points. AdaSkill lies on the Pareto frontier in every panel; T2SQL EX retains a trade-off between accuracy and cost relative to the workflow reference.}
\label{fig:pareto}
\end{figure}

%% file: sections/app_case_study.tex
\section{Workflow-to-Skill Conversion Case Study: Causal Estimation}
\label{app:case-study}

We trace the transformation from the original CAIS workflow~\citep{verma2025cais} to the directly deployed AdaSkill, illustrating the component boundary and the metric-conditioned structural decision. The architecture decision is conditioned on CE-MSA.

\subsection*{Adaptive Distillation and Architecture Transformation}

Figure~\ref{fig:distillation-architecture} contrasts the original CAIS four-stage, eight-micro-tool workflow
with the resulting skill-guided tool-using agent.

\begin{figure}[htbp]
\centering
\begin{tikzpicture}[
  mas/.style={
    draw=cred!55, fill=cred!7,
    rounded corners=1.5pt,
    font=\footnotesize\ttfamily,
    text width=3.0cm, align=center,
    minimum height=0.48cm, inner sep=3pt,
  },
  kept/.style={
    draw=cgreen!65, fill=cgreen!10,
    rounded corners=4pt,
    font=\small\bfseries,
    text width=3.6cm, align=center,
    minimum height=0.72cm, inner sep=5pt,
  },
  gone/.style={
    draw=gray!40, fill=gray!5,
    rounded corners=4pt, dashed,
    font=\small, text width=3.6cm, align=center,
    minimum height=0.60cm, inner sep=5pt,
  },
  arr/.style={-{Stealth[length=4pt,width=3pt]}, gray!50, line width=0.6pt},
  bigarr/.style={-{Stealth[length=9pt,width=7pt]}, cMSA, line width=2.5pt},
]

\begin{scope}
  \fill[cred!4, rounded corners=6pt]
    (-1.82, 0.88) rectangle (1.82,-5.00);
  \draw[cred!28, rounded corners=6pt, line width=0.9pt]
    (-1.82, 0.88) rectangle (1.82,-5.00);
\end{scope}

\node[font=\small\bfseries, color=cred!75] at (0, 0.64)
  {Original CAIS workflow};

\draw[cred!18, line width=5pt, line cap=round]
  (0, 0.18) -- (0,-4.55);

\node[mas] (n1) at (0,  0.00) {\texttt{input\_parser}};
\node[mas] (n2) at (0, -0.62) {\texttt{query\_interpreter}};
\node[mas] (n3) at (0, -1.24) {\texttt{dataset\_analyzer}};
\node[mas] (n4) at (0, -1.86) {\texttt{method\_selector}};
\node[mas] (n5) at (0, -2.48) {\texttt{method\_validator}};
\node[mas] (n6) at (0, -3.10) {\texttt{method\_executor}};
\node[mas] (n7) at (0, -3.72) {\texttt{explanation\_gen.}};
\node[mas] (n8) at (0, -4.34) {\texttt{output\_formatter}};
\foreach \a/\b in {n1/n2,n2/n3,n3/n4,n4/n5,n5/n6,n6/n7,n7/n8}
  \draw[arr] (\a) -- (\b);

\node[font=\scriptsize\bfseries, color=cred!65, anchor=west]
  at (-1.78,-4.80) {micro-tools:};
\foreach \i in {1,...,8}{
  \fill[cred!55] ({-1.78 + (\i-1)*0.258},-5.18) circle (4.2pt);
}

\draw[bigarr] (2.15,-2.20) -- (4.05,-2.20)
  node[midway, above, font=\small\bfseries, color=cMSA] {AdaSkill}
  node[midway, below, font=\scriptsize, color=gray!60]
    {$F_{\mathrm{MSA}}\approx 0$};

\begin{scope}
  \fill[cgreen!5, rounded corners=6pt]
    (4.30, 0.88) rectangle (10.45,-5.00);
  \draw[cgreen!28, rounded corners=6pt, line width=0.9pt, dashed]
    (4.30, 0.88) rectangle (10.45,-5.00);
\end{scope}

\node[font=\small\bfseries, color=cgreen!70] at (7.37, 0.64)
  {AdaSkill skill\enspace{\scriptsize\color{cgreen!50}single agent}};

\fill[cgreen!11, rounded corners=4pt]
  (4.52, 0.42) rectangle (10.23,-2.96);
\draw[cgreen!48, rounded corners=4pt, line width=1.0pt]
  (4.52, 0.42) rectangle (10.23,-2.96);
\node[font=\scriptsize\bfseries, color=cgreen!62, anchor=north west]
  at (4.56, 0.42) {\textsc{kept}};

\node[kept] (sk1) at (7.37, 0.14)
  {Callable Tools\enspace{\small 10 functions}};

\node[kept] (sk2) at (7.37,-0.75)
  {Domain Facts\\{\small descriptive only}};

\node[kept] (sk3) at (7.37,-1.63)
  {MSA Policy\\{\small retained at low $F$}};

\draw[-{Stealth[length=3pt,width=2.5pt]}, cgreen!45, line width=0.5pt]
  (7.37,-0.21) -- (7.37,-0.38);
\draw[-{Stealth[length=3pt,width=2.5pt]}, cgreen!45, line width=0.5pt]
  (7.37,-1.10) -- (7.37,-1.27);

\node[font=\scriptsize\itshape, color=cgreen!52, anchor=south east]
  at (10.20,-2.96) {invoked in reasoning-driven order};

\fill[gray!7, rounded corners=4pt]
  (4.52,-3.12) rectangle (10.23,-4.78);
\draw[gray!33, rounded corners=4pt, line width=0.7pt, dashed]
  (4.52,-3.12) rectangle (10.23,-4.78);
\node[gone] (gp) at (7.37,-3.56)
  {Fixed workflow ordering\enspace{\small four stages}};
\node[gone] (gc) at (7.37,-4.42)
  {Shared-state coordination};
\node[font=\scriptsize\bfseries, color=gray!48, fill=gray!7,
      inner sep=1pt, anchor=north west]
  at (4.56,-3.12) {\textsc{discarded}};

\end{tikzpicture}
\caption{\textbf{AdaSkill architecture transformation} for CE ($F_{\mathrm{MSA}} \approx 0$).
\textbf{Left}: the published CAIS workflow comprises four stages and eight shared-state micro-tools; method selection and validation are deterministic, and only a subset of the other tools use LLM assistance.
\textbf{Right}: the adaptive skill retains callable tools and descriptive domain facts, expresses the low-$F$ method-selection policy as a metric-scoped pipeline, and discards the source workflow's global ordering and shared-state orchestration.
The diagram therefore compares workflow structure, not a fixed number of model calls.}
\label{fig:distillation-architecture}
\end{figure}

\paragraph{What happens to the retained pipeline.}
Under the functional boundary in Appendix~\ref{app:operational-protocol}, the original \texttt{method\_selector} is pipeline, not knowledge: it maps observed study properties to an estimator choice. CE-MSA falls on the retain route, so AdaSkill preserves this metric-qualified policy while removing the source workflow's global execution order. The source encodes the policy as hard-coded Python:

\begin{lstlisting}[style=promptstyle, title={Original CAIS: rule\_based\_select\_method (excerpt)}]
if is_rct:
    if instrument_var and instrument_var != treatment:
        return {"selected_method": "instrumental_variable", ...}
    if covariates:
        return {"selected_method": "linear_regression", ...}
    else:
        return {"selected_method": "diff_in_means", ...}
\end{lstlisting}

The adaptive skill converts the same retained pipeline to metric-scoped prose in \texttt{SKILL.md}:

\begin{lstlisting}[style=promptstyle, title={AdaSkill SKILL.md: retained low-$F$ method-selection pipeline}]
### Step 1 - RCT check (do this first)
If description contains: randomized / RCT / randomly assigned -> is_rct = True
- When is_rct = True: matching, PSM, and IPW are FORBIDDEN.
  OLS with covariates is the correct choice.
  Random assignment already balances covariates --
  applying matching on top produces an incorrect selected_method.
- If an instrument for non-compliance exists -> use IV (encouragement design).
\end{lstlisting}

The imperative and prohibitive clauses are deliberately classified as pipeline. Their presence is evidence that the low-$F$ route retained method-selection structure, not that pipeline was hidden inside knowledge. A discard-route module removes the entire block; descriptive facts about randomization and estimator assumptions remain available without prescribing an action.